\def\eqref#1{equation~\ref{#1}}
\def\1{\bm{1}}
\DeclareMathAlphabet{\mathsfit}{\encodingdefault}{\sfdefault}{m}{sl}
\SetMathAlphabet{\mathsfit}{bold}{\encodingdefault}{\sfdefault}{bx}{n}
\newcommand{\E}{\mathbb{E}}
\newcommand{\R}{\mathbb{R}}
\DeclareMathOperator{\sign}{sign}
\theoremstyle{plain}
\newtheorem{Theorem}{Theorem}[section]
\newtheorem{proposition}[Theorem]{Proposition}
\newtheorem{lemma}[Theorem]{Lemma}
\newtheorem{corollary}[Theorem]{Corollary}
\theoremstyle{definition}
\newtheorem{definition}[Theorem]{Definition}
\theoremstyle{remark}
\newtheorem{remark}[Theorem]{Remark}
\def\VC{\hbox{\small\rm{VC}}}
\def\width{\hbox{width}}
\def\MH{{\mathbf{M}}}
\def\Hyp{{\mathbf{H}}}
\def\one{{\mathbf{1}}}
\def\B{{\mathbb{B}}}
\def\I{{\mathbb{I}}}
\def\R{{\mathbb{R}}}
\def\Z{{\mathbb{Z}}}
\def\N{{\mathbb{N}}}
\def\E{{\mathbb{E}}}
\def\Pr{{\mathbb{P}}}
\def\F{{\mathcal{F}}}
\def\G{{\mathcal{G}}}
\def\D{{\mathcal{D}}}
\def\H{{\mathcal{H}}}
\def\S{{\mathcal{S}}}
\def\Rob{{{\rm{Rob}}}}
\def\Relu{{{\rm{ReLU}}}}
\def\sign{{\hbox{\rm{Sgn}}}}
\def\MSE{{\hbox{\scriptsize\rm{MSE}}}}
\def\tr{{\rm{tr}}}
\def\Rad{{\hbox{\rm{Rad}}}}
\def\len{{\rm{len}}}
\title{Generalizability of Neural Networks Minimizing Empirical Risk Based on Expressive ability}
\author{
Lijia Yu\textsuperscript{\rm 1},
Yibo Miao\textsuperscript{\rm 2, 3},
Yifan Zhu\textsuperscript{\rm 2, 3},
Xiao-Shan Gao\textsuperscript{\rm 2, 3, 4}\thanks{Corresponding author}\,\,\,,
Lijun Zhang\textsuperscript{\rm 1, 3, 5}\\
\textsuperscript{\rm 1}
Key Laboratory of System Software of Chinese Academy of Sciences\\
\hskip7pt Institute of Software, Chinese Academy of Sciences\\
\textsuperscript{\rm 2}
 State Key Laboratory of Mathematical Sciences\\
\hskip7pt Academy of Mathematics and Systems Science, Chinese Academy of Sciences\\
 \textsuperscript{\rm 3}
 University of Chinese Academy of Sciences\\
 \textsuperscript{\rm 4}
 Kaiyuan International Mathematical Sciences Institute\\
 \textsuperscript{\rm 5}
 Institute of AI for Industries, Chinese Academy of Sciences
}
\begin{document}
\maketitle

\begin{abstract}
The primary objective of learning methods is generalization. Classic uniform generalization bounds,  which rely on VC-dimension or Rademacher complexity, fail to explain the significant attribute that over-parameterized models in deep learning exhibit nice generalizability. On the other hand, algorithm-dependent generalization bounds, like stability bounds, often rely on strict assumptions.
%
To establish generalizability under less stringent assumptions,
this paper investigates the generalizability of neural networks that minimize or approximately minimize empirical risk.
 We establish a lower bound for population accuracy based on the expressiveness of these networks, which indicates that with an adequate large number of training samples and network sizes, these networks, including over-parameterized ones, can generalize effectively.
Additionally, we provide a necessary condition for generalization, demonstrating that, for certain data distributions, the quantity of training data required to ensure generalization exceeds the network size needed to represent the corresponding data distribution. Finally, we provide theoretical insights into several phenomena in deep learning, including robust generalization, importance of over-parameterization, and effect of loss function on generalization.

\end{abstract}

\section{Introduction}

Understanding the mechanisms behind the nice generalization ability of deep neural networks remains a fundamental challenge problem in deep learning theory. By generalization, it means that neural networks trained on finite dataset give high predict accuracy on the whole data distribution.
The generalization bound serves as a critical theoretical framework for evaluating the generalizability of learning algorithms.
%
Let $\F$ be a neural network, $\D$ the data distribution, and $L(\F(x),y)=\I(\widehat\F(x)=y)$ where $\widehat\F(x)$ is the classification result of $\F(x)$.
For a hypothesis space $\Hyp$ and any $\F\in \Hyp$, with probability $1-\delta$ of dataset $\D_{tr}$ sampled i.i.d. from $\D$, we have the classic uniform generalization bound \citep{mohri2018foundations}
%
\begin{equation}
\label{eq-VC}
|\E_{(x,y)\sim\D}[L(\F(x),y)]- \E_{(x,y)\in\D_{tr}}[L(\F(x),y)]|<\sqrt{(8d\ln\frac{2eN}{d}+8\ln\frac{4}{\delta})/N}
\end{equation}
where $d$ is the VC-dimension of $\Hyp$ and $N=|\D_{tr}|$.
%
There exist similar generalization bounds using Radermacher Complexity \citep{mohri2018foundations}.
%
 %
%
%

In practice, the generalizability of the networks trained by SGD is desirable.
For that purpose, algorithmic-dependent generalization bounds are derived.
It is shown that if the data satisfy the NTK condition, two-layer networks have a small population risk after training \citep{jacot2018neural,ji2019polylogarithmic}.
Stability generalization bounds are also obtained by assuming the convexity and Lipschitz properties of the total loss \citep{hardt2016train,kuzborskij2018data}.

Unfortunately, uniform generalization bounds fail to explain the important phenomenon that over-parameterized models exhibit nice generalizability \citep{belkin2019reconciling},
as pointed out by \citet{nagarajan2019uniform}.
For example, the VC-dimension is equal to the product of the number of parameters and the depth for ReLU networks \citep{bartlett2021deep}, which renders the bound in \eqref{eq-VC} useless for over-parameterized models.
Most of the algorithmic-dependent generalization bounds make strong and unrealistic assumptions. 
%
For example, the NTK condition is used to reduce the training to a convex optimization \citep{ji2019polylogarithmic} and the strong smoothness and convexity of the empirical loss are used to measure the effect in each training epoch \citep{hardt2016train}.
%
%
%
%

In order to give generalization conditions under more relaxed assumptions,
we will study the generalization of networks that minimize or approximately minimize the empirical risk, that is,
the networks $\F\in\MH=\arg\min_{\G\in \Hyp}\sum_{(x,y)\in\D_{tr}}L(\G(x),y)$.
The approach is reasonable because most practical training will lead to a very small empirical risk.
%

In this paper, we consider two-layer networks, like many previous theoretical works \citep{ba2020generalization,ji2019polylogarithmic,zeng2022generalization}.
From the perspective of expressive ability, we obtain a new type of sample complexity bound: when the number of training data and the size of the network are independently large enough, the network has generalizability.
The sample complexity bound depends only on the cost required for the network to express the data distribution, as shown below.

\begin{Theorem}[Informal, Corollary \ref{cor-44}]
\label{intro1}
Let data distribution $\D$ satisfy the condition that a two-layer network with width $W_0$ can reach accuracy 1 over $\D$.
Then with high probability of $\D_{tr}\sim\D^N$,
if $N \ge \Omega(W_0^2)$ and $\width(\F) \ge \Omega(W_0)$ for   $\F\in\MH$, then $\F$ has high population accuracy.
\end{Theorem}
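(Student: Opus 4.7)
The plan is to reduce the claim to a restricted one-sided uniform-convergence problem and then to leverage the width-$W_0$ expressiveness hypothesis to control the effective complexity of the relevant class.

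First, the assumption that some width-$W_0$ network $\F^\ast$ attains accuracy $1$ over $\D$ is very strong: the set $\{(x,y)\sep\widehat{\F^\ast}(x)\ne y\}$ has $\D$-measure zero, so with probability $1$ over $\D_{tr}\sim\D^N$ we have empirical loss $L_{tr}(\F^\ast)=0$. Since any $\F\in\MH$ must match the best achievable empirical loss, every ERM $\F\in\MH$ interpolates $\D_{tr}$, i.e., $\widehat\F(x)=y$ for all $(x,y)\in\D_{tr}$. This replaces the minimization property by a clean interpolation condition.

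Second, instead of comparing $\F$ to the ground-truth labels, I would work with the symmetric-difference indicator $\phi_\F(x)=\I[\widehat\F(x)\ne\widehat{\F^\ast}(x)]$. Since both $\F$ and $\F^\ast$ classify every training point correctly, $\phi_\F\equiv 0$ on $\D_{tr}$; and since $\F^\ast$ classifies $\D$ correctly almost surely, $\E_\D[\phi_\F]$ upper-bounds the population error of $\F$. The goal is therefore to prove a one-sided uniform bound $\E_\D[\phi_\F]\le\eps$ over the class $\Phi=\{\phi_\F\sep \F\text{ is a two-layer network of width }\ge\Omega(W_0)\text{ interpolating }\D_{tr}\}$.

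Third, I would push this through a standard symmetrization template. Draw a ghost sample $\D_{tr}'\sim\D^N$ and bound the probability that some $\F\in\Phi$ disagrees with $\F^\ast$ on at least $\eps N$ points of $\D_{tr}'$ by (i) Hoeffding on a single random swap of $\D_{tr}$ with $\D_{tr}'$ and (ii) a union bound over the dichotomies of $\D_{tr}\cup\D_{tr}'$ realizable by $\Phi$. The target sample complexity $N=\Omega(W_0^2)$ should then follow from showing that the growth function of $\Phi$ on $2N$ points depends only polynomially on $W_0$.

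The main obstacle, and the place where the paper's expressiveness assumption must do its real work, is precisely this growth-function bound. A direct VC-style estimate scales with the actual width of $\F$, which is unbounded in the theorem. The plan is to prove a compression/sparsification lemma: every width-$W$ two-layer network that interpolates $\D_{tr}$ can be replaced on $\D_{tr}\cup\D_{tr}'$ by a width-$O(W_0)$ network realizing the same labeling on those $2N$ points. Once that sparsification is established, the number of realizable dichotomies collapses to the growth function of width-$O(W_0)$ two-layer networks, the symmetrization and Hoeffding steps are routine, and the $W_0^2$ sample complexity drops out. I expect this sparsification step to carry essentially all of the technical weight.
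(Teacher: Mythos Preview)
Your approach has a genuine gap at precisely the point you flag as carrying ``essentially all of the technical weight'': the sparsification lemma is false. An interpolation constraint on $N$ training points removes at most $O(N)$ degrees of freedom from a width-$W$ network, and the theorem allows $W$ to be arbitrarily large. Concretely, a network of width $W\gg N$ can devote $O(N)$ neurons to interpolating $\D_{tr}$ and use the remaining neurons to realize \emph{any} sign pattern on the ghost sample $\D_{tr}'$; the growth function of your class $\Phi$ on $\D_{tr}\cup\D_{tr}'$ is therefore $2^N$, not polynomial in $W_0$, and the union bound collapses. There is no reason the labeling that a wide interpolator induces on fresh points should be reproducible by a width-$O(W_0)$ network.

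The paper avoids this capacity problem by a completely different, margin-based mechanism. The width-$W_0$ expressive network can be replicated $k=\lfloor W/(W_0+1)\rfloor$ times inside $\Hyp^\sigma_W(n)$, yielding a competitor with margin $kc$ on every sample; this forces any $\F\in\MH$ to have empirical cross-entropy at most $N\ln(1+e^{-kc})$ and hence margin at least $kc/2$ on all but an $e^{-\Omega(kc)}$ fraction of $\D_{tr}$. One then clamps $\F$ to $[-kc/2,kc/2]$ and applies a Rademacher bound to the clamped class. The Rademacher complexity scales like $(W+kc)/\sqrt{N}$, but after dividing by the margin $kc/2\approx cW/(2W_0)$ the $W$ cancels, leaving $O\big(W_0/(c\sqrt{N})\big)$. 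That cancellation---margin growing linearly in $W$ while complexity also grows linearly in $W$---is what makes the bound width-independent; no compression of the network is ever attempted.

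A smaller issue: your first step is also not quite right. With cross-entropy loss and parameters in $[-1,1]$, the empirical risk of $\F^\ast$ is strictly positive, and the ERM $\F$ need not classify every training point correctly; the paper only establishes that the fraction of low-margin training points is exponentially small in $kc$, not zero.
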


From this result, we can determine the amount of training data and the size of the network that can ensure generalizability.
Because the requirements for $N$ and $\width(\F)$ are independent, our bounds can be used to explain the nice generalizability of over-parameterized models \citep{belkin2019reconciling}.
The above result is extended to networks that approximately minimize the empirical risk.

%

We also give a lower bound for the sample complexity. For some data distribution, to ensure the generalizability of network which minimizes the empirical risk, the required number of data must be greater than the size of neural networks required to express such a distribution, as shown below.

\begin{Theorem}[Informal, Section \ref{s2}]
\label{intro4}
For some data distribution $\D$, if the width required for a two-layer network with $\Relu$ activation function to express $\D$ is at least $W_0$, then with high probability for a dataset with fewer than $O(W_0)$ elements, any network that minimizes the empirical risk over $\D$ has poor generalization.
\end{Theorem}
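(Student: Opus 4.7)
My plan is to construct an explicit one-dimensional adversarial distribution and combine an expressivity lower bound for ReLU networks with a standard "empty bins" argument on the training sample. First, I would take $\D$ to be the marginal distribution that is uniform on $[0,1]$, partitioned into $2W_0$ equal subintervals $I_1,\dots,I_{2W_0}$, with deterministic labels alternating between $+1$ and $-1$ across consecutive intervals. Because a two-layer ReLU network of width $k$ computes a piecewise-linear function with at most $k$ breakpoints, its sign can change at most $k$ times; matching the $2W_0-1$ sign changes demanded by $\D$ therefore forces width $\geq 2W_0 - 1 = \Omega(W_0)$, which gives the expressivity premise of the theorem.

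Second, I would quantify the "holes" left by a small sample. Drawing $N \le cW_0$ points i.i.d.\ from $\D$, the expected number of empty intervals is $2W_0(1-\tfrac{1}{2W_0})^N \ge 2W_0 e^{-c/2}$, and a bounded-differences / McDiarmid concentration argument shows that with probability $1-o(1)$ at least $\alpha W_0$ intervals contain no training point, for some absolute constant $\alpha=\alpha(c)>0$. This is the combinatorial core: the training sample reveals information about only a vanishing fraction of the label pattern.

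Third, I would exhibit an ERM solution that is simultaneously consistent with the data and incompatible with $\D$. Any two-layer ReLU network $\F^{\star}$ whose set of breakpoints is confined to a small neighborhood of the $N$ training $x$-values achieves zero empirical loss but has at most $N = O(W_0)$ breakpoints in total. Its sign function is therefore constant on every maximal run of consecutive empty intervals; since the true labels along such a run alternate, a pigeonhole estimate forces $\F^{\star}$ to misclassify at least half of the intervals in each run, yielding at least $\Omega(\alpha W_0)$ misclassified intervals overall. Each interval carries probability mass $1/(2W_0)$, so the population error of $\F^{\star}$ is at least a fixed constant $\Omega(\alpha)$, contradicting generalization.

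The main obstacle I anticipate is moving from "there exists an ERM minimizer with poor generalization" to the stronger reading "any network that minimizes the empirical risk has poor generalization," because an over-parameterized hypothesis class $\Hyp$ also contains ERM minimizers that happen to express $\D$ correctly. I would address this either by restricting $\Hyp$ to two-layer ReLU networks of width comparable to $N$ (the natural regime in which the expressivity barrier actually bites), or by stating the conclusion in the standard sample-complexity form: no learning rule whose output depends only on $\D_{tr}$ can, with high probability over $\D_{tr}\sim\D^N$, return an ERM minimizer with population accuracy close to $1$ when $N = O(W_0)$. Either formulation is obtained from the three steps above with the same constants, and both suffice to show that the upper bound of Theorem \ref{intro1} is essentially tight in the width parameter $W_0$.
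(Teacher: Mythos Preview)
Your construction and the paper's Proposition~\ref{lizi} share the same skeleton: an alternating-label distribution along a line, the observation that a width-$k$ ReLU network has at most $k$ breakpoints (hence at most $k$ sign changes), and the conclusion that some ERM minimizer must mislabel a constant fraction of mass when $N=O(W_0)$. The expressivity step and the final counting step in your plan are essentially correct.

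The genuine gap is in your third step. You write that $\F^\star$ ``achieves zero empirical loss'', but the loss here is cross-entropy $\ln(1+e^{-y\F(x)})$ with all parameters clipped to $[-1,1]$; it is never zero, and the minimum over $\Hyp_W(n)$ is some strictly positive, $W$-dependent value. Interpolating the training labels is therefore \emph{not} sufficient to place $\F^\star$ in $\MH_W(\D_{tr},n)$: the true minimizer may push $y\F(x)$ much larger on the training points than your localized-breakpoint network can, and you have no control over its structure. The paper sidesteps this entirely with a compression trick: start from an \emph{arbitrary} $\F\in\MH_W(\D_{tr},n)$, partition its hidden units by their activation pattern on the $N$ training points, and average within each group to obtain $\F_f$. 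By construction $\F_f(x_i)=\F(x_i)$ for every training point, so $\F_f$ has the same empirical loss and hence also lies in $\MH_W(\D_{tr},n)$; yet $\F_f$ has at most $2N$ distinct hidden units, so the breakpoint bound applies to it. This device removes any need to know the minimum loss value and also makes the argument deterministic (it holds for every $\D_{tr}$ with $|\D_{tr}|\le \delta n$), so your empty-bins/McDiarmid step becomes unnecessary.

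Two smaller remarks. First, the formal statements in Section~\ref{s2} (Theorem~\ref{yqq} and Proposition~\ref{lizi}) conclude only that \emph{some} $\F\in\MH_W$ generalizes poorly, not \emph{every} one; the word ``any'' in the informal theorem is loose, so the obstacle you flag in your last paragraph is exactly the reading the paper adopts. Second, the paper's primary lower bound (Theorem~\ref{yqq}) is proved by a quite different route---a VC-shattering/averaging argument over all $2^k$ labelings of $k=\VC(\Hyp_{W_0}^\sigma)$ points---which yields the bound for arbitrary activations $\sigma$; the explicit alternating construction you propose corresponds instead to the ReLU-specific Proposition~\ref{lizi}.
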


Finally, while deep neural networks exhibit good generalization, numerous classical experimental results indicate that these networks encounter problems such as robustness and generalization. We provide some interpretability for these problems based on our theoretical results.
%
Let $\D_{tr}$ be a dataset 
and
$\F\in\MH$.
Then, three phenomena of deep learning are discussed with our theoretical results.

{\bf Robustness Generalization.} (Section \ref{cc1})
It is known that robust memorization for a dataset $\D_{tr}$ is more difficult than memorization for $\D_{tr}$ \citep{memp,xin,2024-iclr}.
We further show that when robust memorization of $\D_{tr}$ is much more difficult than memorization of $\D_{tr}$, then the robustness accuracy of $\F$ over $\D$ has an upper bound which may be low,
or $\F$ has no robustness generalization over $\D$.

{\bf Importance of over-parameterization.} (Section \ref{cc2})
 It is recognized that over-parameterized networks have nice generalizaility \citep{belkin2019reconciling,bartlett2021deep}.
 In this regard, we show that when the network is large enough, a small empirical loss leads to high test accuracy. In contrast, when the network $\F$ is not large enough, there exist networks that achieve good generalization but cannot be found by minimizing the empirical risk.

 {\bf Loss function.} (Section \ref{cc3})
 We show that for some loss function, generalization may not be achieved.
 If the loss function has reached its minimum value or is a strictly decreasing concave function, then the network may have poor generalization.

\section{Related Work}
\label{sec-RW}
{\bf Generalization bound.}
Generalization bound is the central issue of learning theory and has been studied extensively \citep{survey2020gb}.
The algorithm-independent generalization bounds usually depend on the VC-dimension or the Rademacher complexity \citep{mohri2018foundations}.
In \citep{harvey2017nearly,bartlett2019nearly,yang2023nearly}, the VC-dimension has been accurately calculated in terms of width, depth, and number of parameters.
%
In \citep{wei2019data,arora2018stronger,li2018tighter}, tighter generalization bounds were given based on Radermacher complexity.
%
%
%
%
Generalization bounds were also studied for netwoks with special structures:
\citet{long2019generalization,ledent2021norm,li2018tighter} gave the generalization bound of CNN,
\citet{vardi2022sample} gave the sample complexity of small networks,
\citet{brutzkus2021optimization} studied the generalization bound of maxpooling networks, \citet{trauger2024sequence,li2023transformers} gave the generalization bound of transformers, and  \citet{ma2018priori,luo2020two,ba2020generalization} studied two-layer networks.
Under some assumptions for the networks,  \citet{neyshabur2017exploring,barron2018approximation,dziugaite2017computing,bartlett2017spectrally,survey2020gb} gave the upper bounds of
the generalization error.
%
Generalization bounds based on information theory and Bayesian theory were also given \citep{alquier2024user,hellstrom2023generalization,tolstikhin2013pac}.
Bayesian generalization bounds do not use VC-dimension or Radermacher complexity, but they hold only for most of the networks.
Unfortunately, \citet{nagarajan2019uniform} show that the uniform generalization bound cannot explain the generalizability for deep learning.

Algorithm-dependent generalization bounds were established in the algorithmic stability setting
\citep{bousquet2002stability,elisseeff2005stability,shalev2010learnability}.
Under some assumptions, \citet{hardt2016train,wang2022generalization,kuzborskij2018data,lei2023stability,bassily2020stability} gave the stability bounds under SGD.
For small networks, \citet{ji2019polylogarithmic,taheri2024generalization,li2020learning} proved the generalization of networks under some assumptions.
\citet{farnia2021train,xing2021algorithmic,xiao2022stability,wang2024data,allen2022feature} gave stability generalization bounds for adversarial training under SGD.
\citet{regatti2019distributed,sun2023scheduling} gave stability generalization bounds under asynchronous SGD.
However, these algorithmic-dependent generalization bounds always impose strong assumptions on the training process or dataset.
Generalization bounds for memorization networks were given in \cite{2024-NeurIPS-GMNN}. However, minimizing empirical risk for cross-entropy loss does not necessarily lead to memorization, so our assumption is weaker than memorization.

{\bf Interpretability for Deep Learning.}
Interpretability is dedicated to providing reasonable explanations for phenomena that occur in deep learning.
In \citep{zhang2021survey} it was pointed out that interpretability is not always needed, but it is important for some prediction systems that are required to be highly reliable.
%
For adversarial samples, it was shown that for certain data distributions and networks, there must be a trade-off between accuracy and adversarial accuracy \citep{adv-inev1,Bast1}.
In \citep{2023-icml-advpa}, it was proven that a small perturbation of the network parameters will lead to low robustness.
In \citep{allen2022feature}, it was shown that the generation of adversarial samples after training is due to dense mixtures in the hidden weights.
In \citep{2024-iclr,xin}, it was shown that ensuring generalization requires more parameters.
For overfitting, long-term training has been shown to lead to a decrease in generalization \citep{xiao2022stability,xing2021algorithmic}.
In \citep{roelofs2019meta}, comprehensive analysis of overfitting was given. In \citep{belkin2019reconciling,bartlett2021deep}, the importance of over-parameterized interpolation networks is mentioned, and in \citet{arora2019fine,cao2019generalization,ji2019polylogarithmic}, the training and generalization of DNNs in the over-parameterized regime were studied. In this paper, we explain these phenomena from the perspective of the expressive ability of networks.

\section{Notation}

In this paper, for any $A\in\R$, $O(A)$ means a real number no more than $cA$ for some $c>0$ and $\Omega(A)$ means a real number not less than $cA$ for some $c>0$. By saying that for all $(x,y)\sim\D$ there is an event $A$, we mention $\Pr_{(x,y)\sim \D}(A)=1$.

\subsection{Neural network}
In this paper, we consider two-layer neural networks $\F:\R^n\to\R$ that can be written as:
\begin{equation*}
\label{eq-dnn0}
\begin{array}{ll}
F(x)=\sum_{i=1}^W a_i\sigma(W_ix+b_i)+c,\\
\end{array}
\end{equation*}
where $\sigma$ is the activation function, $W_i\in \R^{1\times n}$ is the transition matrix, $b_i\in\R$ is the bias part, $W$ is the width of the network, and $a_i,c\in \R$.
%
%
%
Denote $\Hyp^\sigma_{W}(n)$ as the set of all two-layer neural networks with input dimension $n$, width $W$, activation function $\sigma$, and all parameters are in $[-1,1]$. To simplify the notation, we denote $\Hyp^{\Relu}_{W}(n)$ by $\Hyp_{W}(n)$ when using the ReLU activation function.
%
%
%
%
%

\subsection{Data distribution}
In this paper, we consider binary classification problems. To avoid extreme cases, we focus primarily on the data distribution defined below.

\begin{definition}
\label{godd}
For $n\in\Z_+$, $\D(n)$ is the set of distributions $\D$ over $[0,1]^n\times \{-1,1\}$ that have a \textit{positive separation bound:} $\inf_{(x_1,y_1),(x_2,y_2)\sim \D\,\, \hbox{\footnotesize and } y_1\ne y_2} ||x_1-x_2||_2>0$.
\end{definition}
\begin{remark}
    Equivalently, a distribution that has positive separation bound means that there exists a $c>0$, such that $||x_1-x_2||_2>c$ for all $(x_1,y_1),(x_2,y_2)\sim \D$ where $y_1\ne y_2$.
\end{remark}
The accuracy of a network $\F$ on a distribution $\D$ is defined as
   $A_\D(\F)=\Pr_{(x,y)\sim\D}(\sign(\F(x))=y)$,
where $\sign$ is the sign function. We use $\D_{\tr}\sim \D^N$ to mean that $\D_{\tr}$ is a dataset of $N$ samples drawn i.i.d.  according to $\D$. 

\subsection{Minimum Empirical Risk}
Consider the loss function $L(\F(x),y)=\ln(1+e^{-\F(x)y})$, which is the cross-entropy loss for binary classification problems. For a dataset $\D_{tr}\subset[0,1]^n\times \{-1,1\}$ and a hypothesis
space $\Hyp^\sigma_{W}(n)$. To learn the features of the data in $D_{tr}$, the general method is empirical risk minimization (ERM), which minimizes empirical risk on the training dataset $\sum_{(x,y)\in\D_{tr}}L(\F(x),y)$ of the network $\F$.

In this paper, we mainly consider networks $\F\in \Hyp^\sigma_{W}(n)$ that can minimize empirical risk, that is, networks in
\begin{equation}
\MH^\sigma_{W}(\D_{tr},n)=
{\arg\min}_{\G\in\Hyp^\sigma_{W}(n)}\sum_{(x,y)\in \D_{tr}}L(\G(x),y).
\end{equation}
It should be noted that such networks exist in most cases, as shown below.

\begin{proposition}
    Let $\D_{tr}\subset[0,1]^n\times \{-1,1\}$ and $\sigma$ be a continuous function. Then for any $W\in\Z^+$, there exists an $\F\in\Hyp^\sigma_{W}(n)$ such that $\F\in\MH^\sigma_{W}(\D_{tr},n)$.
\end{proposition}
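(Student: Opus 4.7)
The plan is to apply the extreme value theorem: exhibit the empirical risk as a continuous function on a compact set.

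First, I would identify the parameter space of $\Hyp^\sigma_W(n)$. A network in this class is specified by the parameters $a_i \in [-1,1]$, $b_i \in [-1,1]$, $W_i \in [-1,1]^{1\times n}$ for $i=1,\dots,W$, and $c \in [-1,1]$, all constrained to the interval $[-1,1]$. Hence the set of parameters is the cube $[-1,1]^{W(n+2)+1}$, which is compact in the Euclidean topology. Let $\Theta$ denote this cube, and let $\Phi:\Theta\to\Hyp^\sigma_W(n)$ be the obvious surjective parameterization.

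Second, I would argue continuity of the objective in the parameters. For each fixed $x\in[0,1]^n$, the pre-activation $W_i x + b_i$ is polynomial (hence continuous) in the parameters; composing with the continuous activation $\sigma$ preserves continuity; multiplying by $a_i$ and summing with $c$ remains continuous. Thus $\theta\mapsto \F_\theta(x)$ is continuous on $\Theta$ for each $x$. Since $t\mapsto \ln(1+e^{-ty})$ is continuous in $t$ for any fixed $y\in\{-1,1\}$, the composition $\theta\mapsto L(\F_\theta(x),y)$ is continuous on $\Theta$. The empirical risk
\[
R(\theta) \;=\; \sum_{(x,y)\in\D_{tr}} L(\F_\theta(x),y)
\]
is a finite sum of continuous functions on $\Theta$, hence continuous.

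Finally, by the extreme value theorem, the continuous function $R:\Theta\to\R$ attains its minimum on the compact set $\Theta$ at some $\theta^\star\in\Theta$. Setting $\F=\Phi(\theta^\star)\in\Hyp^\sigma_W(n)$ gives a network achieving $\min_{\G\in\Hyp^\sigma_W(n)}\sum_{(x,y)\in\D_{tr}}L(\G(x),y)$, i.e.\ $\F\in\MH^\sigma_W(\D_{tr},n)$. There is no real obstacle here; the only point worth flagging is the role of the box constraint $[-1,1]$ on the parameters, which is precisely what supplies compactness. Without such a constraint (or some coercivity property of the loss) the infimum need not be attained, since the cross-entropy loss has no global minimizer on an unbounded parameter space.
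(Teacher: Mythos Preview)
Your proof is correct and follows essentially the same approach as the paper: both identify the parameter space as the compact cube $[-1,1]^{W(n+2)+1}$, argue that the empirical risk is continuous in the parameters (using continuity of $\sigma$ and of the cross-entropy loss), and then invoke the extreme value theorem. Your version is more carefully spelled out and your closing remark about the necessity of the box constraint for compactness is a nice addition, but the argument is the same.
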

\begin{proof}
Since $\sigma$ is a continuous function, the empirical risk $\sum_{(x,y)\in \D_{tr}}L(\F(x),y)=\sum_{(x,y)\in \D_{tr}}\ln e^{-y(\sum_{i=1}^W a_i\sigma(W_ix+b_i)+c)}$ is a continuous function of the network parameters $a_i,b_i,c$ and $W_i$.
The proposition now comes from the fact that continuous functions have reachable upper and lower bounds on a closed domain $[-1,1]^{W_g}$ of the parameters, where $W_g=W(n+2)+1$ is the number of parameters of $\F$.

\end{proof}

\section{Generalization based on neural network expressive ability}
\label{s121}

In this section, we demonstrate that, based on the expressive ability of neural networks, the generalizability of the network that minimizes the empirical risk can be established.
Specifically, in Section 4.1, we establish the relationship between expressive ability and generalizability. In Section 4.2, we extend our conclusion to networks that approximately minimize empirical risk. In Section 4.3, we compare our generalization bounds with existing generalization bounds, showcasing the superiority of our bound.


\subsection{A Lower Bound for Accuracy Based on the Expressive Ability}
\label{s1}
We first define the expressive ability of neural networks to classify the data distribution.

\begin{definition}
We say that a distribution $\D$ over $[0,1]^n\times\{-1,1\}$ can be {\bf expressed} by $\Hyp^\sigma_{W}$ with confidence $c$, if there exists an $\F\in \Hyp^\sigma_{W}$ such that
 $$\Pr_{(x,y)\sim \D}(y\F(x)\ge c)=1.$$
\end{definition}
\begin{remark}
The smallest $W$ that $\D$ can be expressed by $\Hyp_W^\delta$ with confidence $c$ can be considered as a complexity measure of distribution $\D$ under confidence $c$ and activation function $\delta$.
\end{remark}
For any distribution $\D\in\D(n)$, we can always find some activation function $\sigma$, such that $\D$ can be expressed by $\Hyp^\sigma_W(n)$ with confidence $c$ for some $W$ and $c$. Therefore, this definition is reasonable.
For example, if $\sigma=\Relu$, according to the universal approximation theorem of neural networks \citep{G1989}, any $\D\in\D(n)$ can be represented by a network with $\Relu$ as activation function, as shown by the following proposition. The proof is given in Appendix \ref{p4}.
\begin{proposition}
\label{heli}
For any distribution $\D\in \D(n)$, there exist $W\in\N_+$ and $c>0$ such that $\D$ can be expressed by $\Hyp_{W}(n)$ with confidence $c$.
\end{proposition}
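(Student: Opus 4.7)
The plan is to combine the positive separation bound with classical universal approximation, and then absorb potentially large weights into extra width using the positive homogeneity of $\Relu$. First, let $S^+$ and $S^-$ denote the supports of $\D$ conditional on $y = +1$ and $y = -1$ respectively (if either conditional carries zero mass the result is immediate from a suitable constant-valued network). Both sets are compact subsets of $[0,1]^n$, and by the remark following Definition \ref{godd} they are separated: $\operatorname{dist}(S^+, S^-) \ge c_0$ for some $c_0 > 0$.

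Define the 1-Lipschitz signed distance
\[
f(x) \;=\; \operatorname{dist}(x, S^-) \;-\; \operatorname{dist}(x, S^+),
\]
so that $f(x) \ge c_0$ on $S^+$ and $f(x) \le -c_0$ on $S^-$; hence $y f(x) \ge c_0$ for $\D$-almost every $(x, y)$. Since $f$ is continuous on the compact cube $[0,1]^n$, the universal approximation theorem \citep{G1989} yields a two-layer $\Relu$ network $G$ of some finite width $W'$ (with possibly unbounded real parameters) satisfying $\|G - f\|_\infty \le c_0/2$ on $[0,1]^n$, and therefore $y G(x) \ge c_0/2$ holds $\D$-almost surely.

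I then convert $G$ into a network in $\Hyp_W(n)$ while preserving its pointwise values. Writing $G(x) = \sum_{i=1}^{W'} a_i \sigma(W_i x + b_i) + c'$, set $\alpha_i = \max(\|W_i\|_\infty, |b_i|, 1)$ and use $\sigma(\alpha z) = \alpha \sigma(z)$ for $\alpha \ge 0$ to rewrite
\[
a_i\, \sigma(W_i x + b_i) \;=\; (a_i \alpha_i)\, \sigma\bigl( \tfrac{W_i}{\alpha_i} x + \tfrac{b_i}{\alpha_i} \bigr),
\]
whose inner parameters now lie in $[-1, 1]$. The possibly large outer coefficient $a_i \alpha_i$ is replaced by $k_i = \lceil |a_i \alpha_i| \rceil$ identical copies of this neuron, each bearing coefficient $(a_i \alpha_i)/k_i \in [-1, 1]$. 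The leftover constant $c'$ is realized by $\lceil |c'| \rceil$ additional neurons of the form $\pm \sigma(0\cdot x + 1)$ with outer coefficient in $[-1, 1]$. The resulting $\F$ equals $G$ pointwise, has every parameter in $[-1, 1]$, and hence belongs to $\Hyp_W(n)$ for $W = \sum_i k_i + \lceil |c'| \rceil$. Taking $c = c_0/2$ finishes the argument.

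The main obstacle is the parameter box $[-1, 1]$: classical universal approximation places no bound on weight magnitudes, so one must trade magnitude for width. The homogeneity-based replication above accomplishes this trade exactly, preserving the margin $c_0/2$ while possibly inflating the width to a finite but large $W$, which is acceptable since only existence is required.
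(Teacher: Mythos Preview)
Your argument is correct and follows the same high-level scaffolding as the paper (separation $\to$ continuous target $\to$ universal approximation $\to$ enforce the $[-1,1]$ parameter box), but the two executable steps are handled differently.  For the target, the paper takes $f\equiv\pm1$ on the two classes, approximates it by a sigmoidal network via Cybenko, and then rewrites each sigmoid $\Relu(x+1)-\Relu(x)$ as two ReLUs (so width doubles); you instead approximate the signed distance function directly with a ReLU network.  One caveat: \citet{G1989} is stated for sigmoidal activations, so strictly your citation does not cover ReLU --- you should either cite a ReLU version of the universal approximation theorem or insert the same sigmoid-to-ReLU conversion the paper uses.  For the parameter box, the paper makes the opposite trade from you: it divides every parameter by the largest magnitude $A$ and uses $\Relu$-homogeneity once to obtain $\F_A=\F/A^2$, keeping the width but shrinking the confidence to $0.9/A^2$.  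You instead normalise the inner weights by homogeneity and then replicate neurons to absorb large outer coefficients, preserving the confidence $c_0/2$ at the cost of a finitely larger width.  Both trades are legitimate since the proposition only asserts existence of \emph{some} $W$ and \emph{some} $c$; your replication is more explicit about the mechanics, while the paper's uniform scaling is a one-line shortcut.
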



We have the following relationship between expressive ability and generalization ability. The proof is given in Appendix \ref{p1}.
\begin{Theorem}
\label{th1}
Let $\sigma$ be a continuous function with Lipschitz constant $L_p$ 
and $\D\in \D(n)$ be expressed by $\Hyp^\sigma_{W_0}$ with confidence $c$.
Then for any $W\ge W_{0}+1$, $N\in\N_+$, $\delta\in (0,1)$, with probability at least $1-\delta$ of $\D_{tr}\sim \D^{N}$, the following bound stands for any $\F\in\MH^\sigma_{W}(\D_{tr},n)$:
\begin{equation}
\label{eq-gb1}
A_\D(\F)\ge 1-O(\frac{W_{0}}{cW}+ \frac{nL_p(W_{0}+c)\sqrt{\log(4n)}}{c\sqrt{N}}+\sqrt{\frac{\ln(2/\delta)}{N}}).
\end{equation}
%
\end{Theorem}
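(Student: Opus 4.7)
The plan is to combine three ingredients: construct an amplified witness inside $\Hyp^\sigma_W$, use the ERM property to force $\F$ to have small empirical loss, and then lift empirical accuracy to population accuracy through a uniform convergence argument anchored at the fixed witness. First, I would invoke the expressibility hypothesis to fix a witness $\G(x) = \sum_{i=1}^{W_0} a_i \sigma(W_i x + b_i) + c_0 \in \Hyp^\sigma_{W_0}$ with $y\G(x) \ge c$ a.s.~under $\D$, and exploit the slack $W \ge W_0 + 1$ to construct an amplified copy $\F^\star \in \Hyp^\sigma_W$. The construction duplicates each hidden unit of $\G$ a factor $k = \Theta(W/W_0)$ times so that the hidden sum becomes $k \sum_i a_i \sigma(W_i x + b_i)$, then absorbs the scaled-up bias $kc_0$ back into the allowed $[-1,1]$ range using a few constant-input units of the form $\pm \sigma(b^\star)$, where $b^\star$ maximises $|\sigma(b)|$ over $[-1,1]$. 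The outcome is $\F^\star \in \Hyp^\sigma_W$ satisfying $y \F^\star(x) \ge kc = \Omega(cW/W_0)$ a.s.~on $\D$.

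Second, using the elementary bound $\ln(1+e^{-t}) \le 1/t$ for $t \ge 1$, the empirical cross-entropy loss of $\F^\star$ on $\D_{tr}$ is at most $N \cdot O(W_0/(cW))$. Since $\F \in \MH^\sigma_W$ is an empirical minimiser over $\Hyp^\sigma_W \ni \F^\star$, the same bound holds for $\F$. Because $L(\F(x), y) \ge \ln 2$ whenever $y\F(x) \le 0$, this forces the empirical misclassification rate of $\F$ on $\D_{tr}$ to be $O(W_0/(cW))$, which recovers the first term of the bound.

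Third, if $\F$ misclassifies a population point $(x,y)$ then $y\F(x) < 0$ while $y\G(x) \ge c$, forcing $|\F(x) - \G(x)| \ge c$. Hence the population error is bounded by $\Pr_\D(|\F - \G| \ge c)$, which after composing with a $1/c$-Lipschitz surrogate $\psi_c$ of $\I(|\cdot| \ge c)$ becomes $\E_\D[\psi_c \circ (\F - \G)]$. The empirical version of this quantity is small by the previous step, and the generalisation gap is controlled by a Rademacher complexity argument taken over a function class anchored at the fixed witness $\G$, so only the Lipschitz constant $\lesssim n L_p W_0$ and the range $\lesssim W_0 + c$ of $\G$ show up. This yields the second term $O(n L_p (W_0+c)\sqrt{\log(4n)}/(c\sqrt{N}))$, with the standard Hoeffding / McDiarmid step contributing the final $\sqrt{\ln(2/\delta)/N}$ confidence adjustment.

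The main obstacle is the third step: a naive uniform convergence over $\Hyp^\sigma_W$ would produce an explicit $W$-dependence (for instance $\sqrt{W}$ inside a Rademacher complexity bound) and become vacuous in the over-parameterised regime $W \gg W_0$ that the theorem is designed for. The decisive trick is to route the generalisation argument through the fixed witness $\G$ so that only the intrinsic distribution-side complexity $W_0$---and not the width of the learner---enters the sample-complexity term, with the ERM property used solely to ensure the anchored empirical discrepancy is small.
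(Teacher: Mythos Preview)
Your first two steps are essentially the paper's Parts One and Two: duplicate the witness $k=\lfloor W/(W_0+1)\rfloor$ times to force the empirical cross-entropy of any minimiser below $N\ln(1+e^{-kc})$, and convert this into an $O(W_0/(cW))$ bound on the empirical fraction of points with small margin. That much is fine.

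The third step, which you correctly flag as the crux, does not work as written. Both sub-claims fail. First, the empirical quantity $\E_{\D_{tr}}[\psi_c(\F-\G)]$ is \emph{not} controlled by step two: a training point with $y\F(x)>0$ and $y\G(x)\ge c$ can have $|\F(x)-\G(x)|$ arbitrarily large (the ERM $\F$ has outputs of order $nL_pW$ while $\G$ is of order $nL_pW_0$), so $\psi_c(\F-\G)$ can be $1$ on essentially all of $\D_{tr}$ even when $\F$ has zero empirical error. Second, and more fundamentally, anchoring at a fixed $\G$ does nothing to the Rademacher complexity: $\Rad(\{\F-\G:\F\in\Hyp^\sigma_W\})=\Rad(\Hyp^\sigma_W)$ by translation invariance, so after the $1/c$-Lipschitz contraction you are left with $\Rad(\Hyp^\sigma_W)/c\sim nL_pW/(c\sqrt N)$, exactly the bad $W$-dependence you set out to avoid. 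The ``decisive trick'' you describe is not a trick at all.

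The paper kills the $W$ by a different mechanism that never touches $\G$ in the generalisation step. It exploits the fact that step two gives not just correct classification but a \emph{large margin}: on all but an $O(W_0/(cW))$-fraction of $\D_{tr}$ one has $y\F(x)\ge kc/2\sim cW/W_0$. Clamp $g=\F_{-kc/2,\,kc/2}$ and apply uniform convergence to the bounded surrogate $yg(x)$ over the full clamped class $(\Hyp^\sigma_W)_{-kc/2,\,kc/2}$; the Rademacher complexity of this class is still $O(nL_p(W+kc)/\sqrt N)$ and does grow with $W$. The point is that the final conversion $A_\D(\F)=A_\D(g)\ge \E_\D[yg]/(kc/2)$ divides by the clamp level $kc/2\sim cW/W_0$, and this cancels the $W$ in the Rademacher term to leave $O(nL_p(W_0+c)/(c\sqrt N))$. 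Equivalently: this is a margin bound with margin $\gamma=kc/2$, and the cancellation is $\Rad/\gamma$, not an anchoring at the witness.
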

{\bf Proof Idea.} There are two main steps in the proof. The first step tries to estimate the minimum value of the empirical risk, which mainly uses the assumption: $\D$ can be expressed by $\Hyp^\sigma_{W_0}$ with confidence $c$. The minimum value is based on $W_0,c,W$. Then, we use the minimum value to estimate the performance of the network on $\D_{tr}$. In the second step, we can use the result in the first step and the classic generalization bound to estimate the performance of the network across the entire distribution and get the result. The core idea of this step is that the minimum value of empirical risk does not depend on $N$, but the Radermacher complexity becomes smaller when increasing $N$. Then, when $N$ is large enough, the performance of networks in $\D$ and $\D_{tr}$ is similar.

Some experimental results used to verify Theorem \ref{th1} are included in Appendix \ref{exp}.
%
%
Formula (\ref{eq-gb1}) differs from the classical generalization limits in that both the number of samples $N$ and the size of the network $W$ are in the denominator, and therefore increasing $N$ and $W$ independently leads to better test accuracy.
As a consequence, the generalization ability of over-parameterized networks can be explained.
Since the values of $N$ and $W$ to ensure generalization are only influenced by the size required for the network to express the data distribution,  we can infer the following corollary.
\begin{corollary}
\label{cor-44}
With probability $1-\delta$ of $\D_{tr}\sim\D^N$, it holds $A_\D(\F)\ge 1-\epsilon$ for any $\F\in\MH^\sigma_{W}(\D_{tr},n)$,
when $W\ge \Omega(W_0/(c\epsilon))$ and $N\ge \Omega(\frac{L_p(W_0+c)n\sqrt{\log(4n)}}{c\epsilon})^2+\Omega(\frac{\ln(2/\delta)}{\epsilon^2})$.
\end{corollary}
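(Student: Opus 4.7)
The plan is to deduce the corollary as an immediate consequence of Theorem~\ref{th1}, which already supplies the quantitative lower bound
\[
A_\D(\F)\ge 1-O\!\Big(\tfrac{W_{0}}{cW}+ \tfrac{nL_p(W_{0}+c)\sqrt{\log(4n)}}{c\sqrt{N}}+\sqrt{\tfrac{\ln(2/\delta)}{N}}\Big).
\]
So the task reduces to choosing $W$ and $N$ large enough that the right-hand side is at least $1-\epsilon$. The natural route is to require each of the three summands inside the $O(\cdot)$ to be at most $\epsilon/3$ (up to a constant absorbed by the $\Omega$ notation), and then read off sufficient conditions on $W$ and $N$.

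First, I would apply Theorem~\ref{th1} verbatim with the same $\sigma$, $L_p$, $\D$, $W_0$, $c$, $\delta$, so that the stated high-probability event holds for every $\F\in\MH^\sigma_{W}(\D_{tr},n)$. Next, I would treat the three error terms separately. Bounding $W_0/(cW)\le \epsilon/3$ yields $W\ge \Omega(W_0/(c\epsilon))$, which matches the first hypothesis. Bounding $nL_p(W_0+c)\sqrt{\log(4n)}/(c\sqrt{N})\le \epsilon/3$ yields, after squaring, $N\ge \Omega\bigl((L_p(W_0+c)n\sqrt{\log(4n)}/(c\epsilon))^2\bigr)$. Bounding $\sqrt{\ln(2/\delta)/N}\le \epsilon/3$ yields $N\ge \Omega(\ln(2/\delta)/\epsilon^2)$. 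Taking the sum (or maximum) of the two lower bounds on $N$ gives exactly the hypothesis on $N$ stated in the corollary, and this can be written as a sum since both summands are nonnegative.

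Finally, I would combine: under the assumed hypotheses on $W$ and $N$, each of the three terms in the $O(\cdot)$ is at most a constant multiple of $\epsilon/3$, so their sum is $O(\epsilon)$, and by absorbing the implicit constant into the $\Omega$'s (equivalently, starting with $\epsilon/C$ for a large enough $C$) we conclude $A_\D(\F)\ge 1-\epsilon$ on the same $1-\delta$ event. There is no real obstacle here; the corollary is essentially a rewriting of Theorem~\ref{th1} in a form that makes the sample-size and width requirements explicit, and the only point to be careful about is to keep the implicit constants in $O$ and $\Omega$ consistent when converting a bound of the form "sum of three terms is $O(\epsilon)$" into "each term is $\le \epsilon/3$.''
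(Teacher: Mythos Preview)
Your proposal is correct and is exactly the approach the paper (implicitly) takes: the corollary is stated immediately after Theorem~\ref{th1} with no separate proof, since it follows by bounding each of the three terms in the $O(\cdot)$ by a constant multiple of $\epsilon$ and solving for $W$ and $N$. The only minor point you might make explicit is that $W\ge\Omega(W_0/(c\epsilon))$ also ensures the hypothesis $W\ge W_0+1$ of Theorem~\ref{th1} once the implicit constant is chosen large enough (or $c\epsilon\le 1$), but this is absorbed in the $\Omega$ notation anyway.
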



The above condition for generalization is different from traditional sample complexity in that besides the requirements on the number of samples, a new requirement on the size of networks is given independently, which is the reason to explain the generalization ability of over-parameterized networks.

\begin{remark}
For networks with depth larger than two, we can show that if the depth and width of the network and the number of data exceed a distribution-dependent threshold, then with high probability, the network minimizing the empirical risk can ensure generalization, as demonstrated in Appendix \ref{ybn}. However, due to the complexity of deep networks, accurately determining the required depth, width, and data volume remains a challenge.
\end{remark}






\subsection{Generalization of Networks Approximately Minimizing Empirical Risk}

In practice, it is often challenging to find the networks that accurately minimize empirical risk. In this section, we show that for networks that approximately minimize empirical risk, its generalization can also be guaranteed if the value of the empirical risk is small.
We define such a set of networks.

\begin{definition}
For any $q\ge 1$ and dataset $\D_{tr}$, we say $\F\in \Hyp^\sigma_W(n)$ is a $q$-approximation of minimizing empirical risk if
$$\sum_{(x,y)\in \D_{tr}}L(\F(x),y)\le q\mathop{\min}_{f\in\Hyp^\sigma_W(n)}\sum_{(x,y)\in \D_{tr}}L(f(x),y).$$
\end{definition}

For all $q$-approximation networks, we have the following result. The proof is given in Appendix \ref{qj}.

\begin{Theorem}
\label{q-jins}
Let $\sigma$ be a continuous function with Lipschitz constant $L_p$
%
and $\D\in \D(n)$ be expressed by $\Hyp^\sigma_{W_0}$  with confidence $c$.
Then for any $W\ge W_0+1$, $N\in\N_+$, $q\ge1$ and $\delta\in (0,1)$, with probability at least $1-\delta$ of $\D_{tr}\sim \D^{N}$, we have
$$A_\D(\F)\ge 1-O(\frac{qW_0}{cW}+ \frac{nL_p(W_0+c)\sqrt{\log(4n)}}{c\sqrt{N}}+\sqrt{\frac{\ln(2/\delta)}{N}}),$$
for any $q$-approximation $\F\in\Hyp^\sigma_W(n)$ to minimize the empirical risk.
\end{Theorem}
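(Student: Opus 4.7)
The plan is to reuse the two-step structure of the proof of Theorem \ref{th1} and only modify the step that relates the empirical loss to the empirical classification error, since that is the only place where the factor $q$ must enter.

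First I would extract from (the proof of) Theorem \ref{th1} an explicit upper bound on the \emph{minimum} empirical risk under the hypothesis of the theorem. Using the assumption that $\D$ is expressed by $\Hyp^\sigma_{W_0}$ with confidence $c$, one constructs a witness $\F^\star\in\Hyp^\sigma_W(n)$ (via the same embedding/scaling of a width-$W_0$ confidence-$c$ network into width $W\ge W_0+1$ that is used in Theorem \ref{th1}) whose pointwise cross-entropy loss $\ln(1+e^{-y\F^\star(x)})$ is small, uniformly over $(x,y)\sim\D$. A one-sided concentration bound on the empirical average of this bounded-Lipschitz loss then yields, with probability at least $1-\delta/2$, an upper bound of the form $R^\star(N)\le N\cdot O\!\bigl(W_0/(cW)\bigr) + N\cdot O\!\bigl(\sqrt{\ln(2/\delta)/N}\bigr)$. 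By definition of the minimum, $\min_{f\in\Hyp^\sigma_W(n)}\sum_{(x,y)\in\D_{tr}} L(f(x),y)\le R^\star(N)$.

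Second, since $\F$ is a $q$-approximation, the defining inequality gives $\sum_{(x,y)\in\D_{tr}} L(\F(x),y)\le q R^\star(N)$. To translate this into an empirical-accuracy bound I use the elementary pointwise inequality $\I(y\F(x)\le 0)\ln 2\le \ln(1+e^{-y\F(x)})$, which upper-bounds the number of training misclassifications of $\F$ by $qR^\star(N)/\ln 2$. Therefore the training accuracy of $\F$ is at least $1-O(qW_0/(cW))$ plus lower-order concentration terms. Note the $q$ enters linearly and only in the expressiveness term, matching the statement.

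Third, I apply exactly the same Rademacher-complexity step used in Theorem \ref{th1} to lift empirical accuracy to population accuracy for the specific $\F$. The Rademacher complexity of the indicator loss over $\Hyp^\sigma_W(n)$, controlled by contraction for Lipschitz $\sigma$ together with bounded parameters in $[-1,1]$ and inputs in $[0,1]^n$, contributes $O\!\bigl(nL_p(W_0+c)\sqrt{\log(4n)}/(c\sqrt{N})\bigr)$, and the standard symmetrization tail with probability $1-\delta/2$ contributes $\sqrt{\ln(2/\delta)/N}$. Crucially, this deviation bound is uniform over $\Hyp^\sigma_W(n)$, so it is oblivious to which $q$-approximation $\F$ we chose; only the empirical error bounded in the previous step inherits the factor $q$. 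A union bound over the two $\delta/2$-events closes the argument.

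The main obstacle I anticipate is bookkeeping rather than ideas: one must verify that the witness-construction step of Theorem \ref{th1} does produce a bound of exactly order $W_0/(cW)$ on the \emph{minimum} (not merely on the accuracy of a particular minimizer), since this is the quantity that multiplies by $q$. If Theorem \ref{th1} is proved by first bounding the minimum empirical risk and then invoking $\F\in\MH^\sigma_W$, the extension is immediate; if instead it argues directly about a minimizer, one has to isolate the minimum-risk inequality from its proof and carry the factor $q$ through without disturbing the Rademacher term or the concentration constants.
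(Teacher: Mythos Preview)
Your overall three-step plan matches the paper's, and your worry at the end is misplaced: the proof of Theorem~\ref{th1} does first bound the \emph{minimum} empirical risk by $N\ln(1+e^{-ck})$ with $k=[W/(W_0+1)]$, and this bound is deterministic (it holds for every $\D_{tr}$, since the expressibility hypothesis gives $y\F^\star(x)\ge ck$ with probability $1$). So no concentration is needed in your first step, and the extension to a $q$-approximation is indeed immediate there: $\sum L(\F(x),y)\le qN\ln(1+e^{-ck})\le qNe^{-ck}$.

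The real gap is in your second and third steps. By passing to the crude inequality $\I(y\F(x)\le 0)\ln 2\le L(\F(x),y)$ you only control the \emph{zero-margin} training error. To then lift this to population accuracy you invoke ``the Rademacher complexity of the indicator loss over $\Hyp^\sigma_W(n)$'' and claim it is $O(nL_p(W_0+c)\sqrt{\log(4n)}/(c\sqrt N))$. That bound cannot be right: the indicator is not Lipschitz, contraction does not apply to it, and any uniform deviation bound for the $0/1$ class over $\Hyp^\sigma_W(n)$ must scale with the complexity of the width-$W$ class, hence with $W$, not with $W_0$. This would destroy exactly the over-parameterization behavior the theorem is meant to capture.

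The paper avoids this by keeping the margin. In Part~Two it shows that at most $qNe^{-kc/2+2}$ training points satisfy $y\F(x)\le kc/2$ (not merely $\le 0$). In Part~Three it clamps $g=\F_{-kc/2,kc/2}$ and bounds the Rademacher complexity of the \emph{real-valued} class $\{(x,y)\mapsto yg(x)\}$, which by Lemma~\ref{radjs} is $O(L_p n(W+kc)/\sqrt N)$; this still has $W$ in it. The $W_0$ finally appears in Part~Four, where one uses $A_\D(g)\ge \E[yg(x)]/(kc/2)$, so the Rademacher term gets divided by the margin $kc/2\asymp cW/W_0$, turning $W$ into $W_0/c$. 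Your zero-margin route throws away precisely the information needed for this division. To fix your proof, replace Step~2 by the margin-$kc/2$ count and replace Step~3 by the clamp-and-divide argument; the factor $q$ then appears only in the $qe^{-kc/2+2}\le O(qW_0/(cW))$ term, exactly as stated.
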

The conditions of the above theorem can be achieved much easier than those of Theorem \ref{th1}, because we do not need $\F$ to achieve any local or global minimum point, but only need to have a small empirical risk.


\subsection{Comparison with classical conclusions}

In this section, we compare our generalization bounds with previous ones. Compared to algorithm-independent generalization bounds, our bound performs better when the data size is not significantly larger than the network size. Compared to algorithm-dependent generalization bounds, our bound does not require overly strong assumptions as prerequisites.

{\bf Compare with the algorithm-independent generalization bound.}
When the scale of the network is bounded, a general generalization bound can be calculated by the VC-dimension.

\begin{Theorem}[P.217 of \citep{mohri2018foundations},  Informal]
\label{fanhua}
Let $\D_{\tr}\sim \D^N$ be the training set.
For the hypothesis space
$\Hyp=\{\sign(\F(x))\,\|\,\F(x):\R^n\to \R\}$ and $\delta\in\R_+$, with probability at least $1-\delta$, for any $\sign(\F(x))\in \Hyp$, we have
%
{
\begin{equation}
\label{th-gb0}
|A_\D(\sign(\F))-\E_{(x,y)\in \D_{\tr}}[I(\sign(\F(x))=y)]| \le O(\sqrt{\frac{\VC(\Hyp)+\ln(1/\delta)}{N}}).
\end{equation}}

When considering the local VC-dimension, we have the following result \citep{ZHIVOTOVSKIY201827}. Under Massart’s bounded noise condition, let $\F_{\D_{tr}}$ be the network that has the highest accuracy over $\D_{tr}$. Then with probability $1-\delta$ of $\D_{tr}\sim\D^N$, it holds
$$A_\D(\F_{\D_{tr}})\ge 1-\Omega(\frac{\VC(\Hyp)\log(N/\VC(\Hyp))+\log(1/\delta)}{N}).$$
\end{Theorem}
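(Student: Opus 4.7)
The plan is to prove the two parts of Theorem~\ref{fanhua} via the standard symmetrization-plus-Rademacher-complexity machinery, with the second bound sharpened by localization under the Massart noise condition. For the first (slow-rate) inequality, I would define the zero--one loss class $\ell\circ\Hyp = \{(x,y)\mapsto \I(\sign(\F(x))\ne y) : \F\in\Hyp\}$, which is $[0,1]$-valued, and set
\[
\Phi(S) = \sup_{\F\in\Hyp}\bigl|A_\D(\sign(\F)) - \E_{(x,y)\in S}[\I(\sign(\F(x))=y)]\bigr|.
\]
Replacing a single sample in $S=\D_{tr}$ perturbs $\Phi$ by at most $1/N$, so McDiarmid's inequality gives $\Phi \le \E[\Phi] + \sqrt{\log(1/\delta)/(2N)}$ with probability $1-\delta$. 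The ghost-sample symmetrization step bounds $\E[\Phi]$ by $2\Rad_N(\ell\circ\Hyp)$. Finally, Sauer--Shelah implies that the set of sign patterns of $\Hyp$ on $N$ points has size at most $(eN/d)^d$ with $d=\VC(\Hyp)$, and Massart's finite-class lemma yields $\Rad_N \le \sqrt{2d\log(eN/d)/N}$. Combining the three estimates produces the claimed $O(\sqrt{(\VC+\log(1/\delta))/N})$ rate.

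For the second (fast-rate) bound, the extra leverage is Massart's bounded noise condition: if $|\eta(x) - 1/2| \ge \gamma$ almost surely, where $\eta(x)=\Pr(y=1\mid x)$, then the excess loss $L_\F - L_{\F^\star}$ relative to the Bayes-optimal $\F^\star\in\Hyp$ satisfies a Bernstein-type variance--mean inequality $\Var[L_\F - L_{\F^\star}] \le B\cdot \E[L_\F - L_{\F^\star}]$ for a constant $B=B(\gamma)$. I would then run a local-Rademacher/Talagrand argument: restrict attention to the slice $\{\F : \E[L_\F - L_{\F^\star}] \le r\}$, bound the local Rademacher complexity $\phi(r) \le c\sqrt{r\cdot\VC\log(N/\VC)/N}$ via Dudley's entropy integral applied to the Sauer--Shelah covering, and solve the sub-root fixed-point equation $r^\star = \phi(r^\star)$. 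The solution $r^\star = O(\VC\log(N/\VC)/N)$ is exactly the claimed rate, and a standard union bound absorbs the $\log(1/\delta)/N$ confidence term.

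The main obstacle is the localization step: one must produce a covering-number bound for the restricted slice of $\ell\circ\Hyp$ whose empirical $L^2$ radius contracts in proportion to $r$, which requires either a peeling device or a chaining argument tailored to VC classes rather than a black-box application of Sauer--Shelah. Since both statements in Theorem~\ref{fanhua} are essentially textbook results cited from \citep{mohri2018foundations} and \citep{ZHIVOTOVSKIY201827}, I would defer the detailed entropy-integral computations to those references rather than reconstruct them here, and focus the paper's own technical effort on the novel bound in Theorem~\ref{th1} whose sample-complexity structure departs from both of these classical rates.
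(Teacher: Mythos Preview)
Your proposal is correct, and in fact it matches the paper's treatment exactly: the paper does not prove Theorem~\ref{fanhua} at all. This theorem is stated purely as a cited comparison result (attributed to \citep{mohri2018foundations} and \citep{ZHIVOTOVSKIY201827}) to contrast the classical VC-dimension rate with the paper's own Theorem~\ref{th1}; there is no proof or appendix devoted to it. Your concluding paragraph already recognizes this---you propose to defer to the references and redirect effort toward Theorem~\ref{th1}---which is precisely what the paper does.
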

%

Theorem \ref{fanhua} 
points out that when the number of data is much more than the VC-dimension of the network hypothesis space, generalization can be ensured. Since the VC-dimension is generally larger than the number of parameters of the network \citep{bartlett2019nearly}, Theorem \ref{fanhua} means that to ensure generalization, the number of data must be greater than the number of parameters of the network, which is contradictory to the fact that over-parameterized models have nice generalizability \citep{belkin2019reconciling,bartlett2021deep}.
Similar results hold for the generalization bound based on Radermacher complexity, due to the observation that the Radermacher complexity for deep networks is close to 1 \citep{Zhang2019Under}.
%
On the other hand, our generalization bounds in Theorem \ref{th1} can be used to explain the fact that over-parameterized models have nice generalizability.




{\bf Compare with the algorithm-dependent generalization bound.}
In the study of algorithm-dependent generalization bound,
some works derive generalization bounds based on gradient descent under strong assumptions not met by neural networks.

\begin{Theorem}[\citet{ji2019polylogarithmic}]
\label{NTK}
     Let $\epsilon\in(0,1)$, $\delta\in (0,1/4)$ and distribution $\D$ over $[0,1]^n$ satisfy the NTK conditions with constant $\gamma$. Let $\lambda=\frac{\sqrt{2\ln(4n/\delta)}+\ln(4/\epsilon)}{\gamma/4}$ and $M=\frac{4096\lambda^2}{\gamma^6}$.     
    If the two-layer network with width $W>M$ and training step $\eta\le 1$, with probability $1-4\delta$ of $\D_{tr}\sim \D^N$ and training initiation point, after at most $\frac{2\lambda^2}{\eta\epsilon}$ times gradient descent on $\D_{tr}$,  it holds for the trained network $\F$
    $$A_\D(\F)\ge1-2\epsilon-16\frac{\sqrt{2\ln(4N/\delta)}+\ln(4/\epsilon)}{\gamma^2\sqrt{N}}-6\sqrt{\frac{\ln(2/\delta)}{N}}.$$
\end{Theorem}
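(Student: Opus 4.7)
The plan is to combine three ingredients standard in NTK analyses: (i) an approximation result showing the NTK condition yields parameters near initialization that achieve small training logistic loss, (ii) an optimization result showing gradient descent finds such parameters when the width exceeds the threshold $M$, and (iii) a uniform convergence argument for networks close to initialization. The three must be coupled so that a single radius $R$ around the random initialization $v_0$ simultaneously controls the gradient descent trajectory and the effective hypothesis class used for generalization.

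First, I would use the NTK margin condition to exhibit a reference parameter $v^*$. The condition with constant $\gamma$ provides a unit-norm function $h^*$ in the NTK RKHS that classifies $\D$ with margin $\gamma$. Projecting $h^*$ onto the finite-width feature map at $v_0$ and rescaling, one obtains, with high probability over the initialization, parameters $v^*$ at Euclidean distance $R = O(\lambda/\gamma)$ from $v_0$ satisfying $y F_{v^*}(x) \ge \ln(4/\epsilon)$ on every training point, and hence empirical logistic loss at most $\epsilon$. The precise scaling $\lambda = (\sqrt{2\ln(4n/\delta)} + \ln(4/\epsilon))/(\gamma/4)$ in the theorem is exactly what is needed to absorb both the margin-to-confidence conversion factor $\ln(4/\epsilon)$ and the union-bound concentration term for the empirical NTK around its population version.

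Second, the width lower bound $W \ge M = 4096\lambda^2/\gamma^6$ powers a linearization lemma: on $\B(v_0, 2R)$, the network $F_v$ is uniformly close to its first-order Taylor expansion at $v_0$, and the empirical loss is approximately convex with bounded smoothness. By induction on iterations, with step size $\eta \le 1$, after $T = 2\lambda^2/(\eta\epsilon)$ steps gradient descent stays inside $\B(v_0, 2R)$ and drives the empirical logistic loss below $2\epsilon$. For the third step, I would bound $\Rad_N(\gH_R) = O(R/\sqrt{N}) = O(1/(\gamma^2\sqrt{N}))$ for the class $\gH_R$ of networks whose parameters lie within $R$ of the random initialization, convert to $0\text{-}1$ loss via the margin estimate $\I[yF(x)\le 0] \le L(F(x),y)/\ln 2$, and apply Talagrand contraction together with standard concentration. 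This delivers exactly the three error terms in the statement: $2\epsilon$ from empirical loss, the $16(\sqrt{2\ln(4N/\delta)} + \ln(4/\epsilon))/(\gamma^2\sqrt{N})$ Rademacher term, and the $6\sqrt{\ln(2/\delta)/N}$ concentration slack.

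The principal obstacle is the circular dependence between optimization and approximation: the linearization lemma requires the iterates to remain in $\B(v_0, 2R)$, whereas the loss decrease needed to keep them there relies on the linearization being valid. I would resolve this by a coupled induction on $t$, in which the iterates-in-ball hypothesis yields smoothness at step $t$, smoothness plus the existence of $v^*$ yields loss decrease, and the resulting bounded loss controls $\|v_{t+1}-v_0\|$. A secondary difficulty is matching the explicit constants $4096$, $16$, $6$, and the probability $1-4\delta$: the factor $4$ comes from a union bound over the four bad events (NTK concentration at initialization, training-set concentration for the NTK features, Rademacher deviation on the test bound, and initialization being sufficiently well-conditioned for the descent lemma), while the numerical constants must be tracked through the contraction inequality and the smoothness estimates to recover the stated bound rather than a weaker asymptotic version.
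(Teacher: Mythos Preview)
This theorem is not proved in the paper at all: it is quoted verbatim from \citet{ji2019polylogarithmic} in Section~4.3 solely as a point of comparison with the paper's own Theorem~\ref{th1}. The paper provides no proof or proof sketch for it, so there is nothing to compare your proposal against here.

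That said, your outline is a faithful high-level summary of the Ji--Telgarsky argument itself: the three pillars (NTK separator yields a nearby low-loss reference point, width controls linearization so gradient descent stays in a ball and drives empirical loss to $O(\epsilon)$, and Rademacher complexity of the ball around initialization gives the generalization term) are exactly how the cited paper proceeds. The coupled induction you describe for the ``circular dependence'' is also how they handle it. If your intent was to reconstruct the original proof rather than something from the present paper, you are on the right track; but be aware that this is background material here, not a contribution of the paper under review.
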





Theorem \ref{NTK} requires NTK conditions for distribution. These conditions can lead the training approach to convex optimization, which is an overly strong condition.
Theorem \ref{th1} only requires that a network interpolates the positive separation distribution, and it stands for any distribution $\D\in\D(n)$ as mentioned in Proposition \ref{heli}.
%
%
%
Stability bounds represent another algorithm-dependent approach to generalization bound, as shown below:

\begin{Theorem}[Theorem 3.7 in \cite{hardt2016train}]
\label{gh}
   Assume that for every sample $(x,y)$, $L(\F_\theta(x),y)$ as a function based on $\theta$ is $\beta$-smooth, convex and $L$-Lipschitz.
   Let $F^*$ be the network obtained by training on dataset $\D_{tr}$ by using SGD $T$ times and each step size $\alpha_t<2/\beta$.
   Then we have
 {$$\E_{\D_{tr}\sim\D^N,\hbox{\footnotesize\rm{SGD}}}
 |\frac{1}{N}\sum_{(x,y)\in\D_{tr}}L(\F^*(x),y)-E_{(x,y)\sim\D}[L(F^*(x),y)]|\le\frac{2L^2\sum_{i=1}^T\alpha_i}{N}.$$}
\end{Theorem}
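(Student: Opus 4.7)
The plan is to route through the classical notion of \emph{uniform stability} and its well-known link to the expected generalization gap. Recall that an algorithm $A$ is $\epsilon$-uniformly stable if, for any two datasets $S, S'$ of size $N$ differing in a single example, $\sup_{(x,y)} \E_A |L(F_{A(S)}(x),y) - L(F_{A(S')}(x),y)| \le \epsilon$, and that this implies $\E_{S,A}|\frac{1}{N}\sum_{(x,y)\in S} L(F^*(x),y) - \E_{(x,y)\sim\D}[L(F^*(x),y)]| \le \epsilon$. Given this, my goal reduces to proving that SGD under the stated hypotheses is $\epsilon$-uniformly stable with $\epsilon = 2L^2\sum_{i=1}^T \alpha_i / N$.

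To establish the stability bound I would couple two SGD runs on neighboring datasets $S$ and $S'$ that share the same random seed (same sample index at each step, same initialization). Let $\theta_t$ and $\theta'_t$ be the two parameter trajectories and $\delta_t = \|\theta_t - \theta'_t\|$. I track $\E[\delta_t]$ step by step. At iteration $t$, with probability $1 - 1/N$ the selected index $i_t$ indexes a sample shared by $S$ and $S'$, so both updates use the \emph{same} loss function $\ell_t(\theta) := L(F_\theta(x_{i_t}),y_{i_t})$. The essential analytic input is the following: if $\ell$ is convex and $\beta$-smooth and $\alpha \le 2/\beta$, then the gradient map $G_\alpha(\theta) = \theta - \alpha \nabla \ell(\theta)$ is $1$-Lipschitz, i.e.\ non-expansive. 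Hence $\delta_{t+1} \le \delta_t$ in this case. With probability $1/N$ the chosen index is the differing sample, so the two updates use different $L$-Lipschitz losses, giving $\delta_{t+1} \le \delta_t + 2\alpha_t L$ by the triangle inequality and the gradient bound $\|\nabla \ell\| \le L$. Taking expectation yields $\E[\delta_{t+1}] \le \E[\delta_t] + 2\alpha_t L / N$, and since $\delta_0 = 0$ we get $\E[\delta_T] \le (2L/N)\sum_{t=1}^T \alpha_t$.

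To close the loop I would then invoke the $L$-Lipschitz property of $L(F_\theta(x),y)$ in $\theta$ a final time: for every $(x,y)$,
\begin{equation*}
\E |L(F_{\theta_T}(x),y) - L(F_{\theta'_T}(x),y)| \le L \cdot \E[\delta_T] \le \frac{2L^2 \sum_{t=1}^T \alpha_t}{N}.
\end{equation*}
This is exactly the uniform stability constant, and combined with the stability-implies-generalization lemma from the first paragraph it yields the theorem.

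The main obstacle, and the only step that is not bookkeeping, is the non-expansiveness of $G_\alpha$ for convex $\beta$-smooth $\ell$ when $\alpha \le 2/\beta$. I would prove this by writing $\|G_\alpha(\theta) - G_\alpha(\theta')\|^2 = \|\theta-\theta'\|^2 - 2\alpha \langle \theta-\theta', \nabla\ell(\theta)-\nabla\ell(\theta')\rangle + \alpha^2 \|\nabla\ell(\theta)-\nabla\ell(\theta')\|^2$ and applying the co-coercivity inequality $\langle \theta-\theta', \nabla\ell(\theta)-\nabla\ell(\theta')\rangle \ge \frac{1}{\beta}\|\nabla\ell(\theta)-\nabla\ell(\theta')\|^2$ (Baillon--Haddad), which is where convexity plus smoothness together are indispensable. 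The condition $\alpha \le 2/\beta$ then makes the cross term dominate the quadratic term, giving $\|G_\alpha(\theta)-G_\alpha(\theta')\|\le\|\theta-\theta'\|$. Once this key lemma is in hand, all remaining steps are standard martingale/coupling calculations.
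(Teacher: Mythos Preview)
The paper does not prove this theorem at all; it is quoted as Theorem~3.7 of \cite{hardt2016train} purely for comparison with the paper's own bounds, and no proof appears anywhere in the text or appendices. Your argument is exactly the uniform-stability proof from that original reference (couple two SGD runs, use co-coercivity to get non-expansiveness of the gradient step under $\alpha_t\le 2/\beta$, sum the $2\alpha_t L/N$ increments, then multiply by $L$), and it is correct.
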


Theorem \ref{gh} requires convex and smooth conditions for the loss function which are not satisfied by neural networks.
Also, the Lipschitz constant is directly related to the network size, which cannot explain the over-parameterization phenomenon.
In Theorem \ref{th1}, there is no such problem because the network size $W$ is in the denominator.

{\bf Compare with the PAC-Bayes-KL Bound.}
Consider the following Bayesian generalization bound:

\begin{Theorem}[\cite{tolstikhin2013pac}]
\label{bayes}
   For any fixed distribution $\pi$ over hypothesis space $\Hyp$ and a distribution $\rho$ over $\Hyp$, with probability at least $1-\delta$ of $\D_{tr}\sim\D^N$, we have
   $$\E_{\F\sim\rho}[\E_{(x,y)\sim\D}[L(\F(x),y)]]-\E_{\F\sim\rho}[\E_{(x,y)\in\D_{tr}}[L(\F(x),y)]]\le \frac{{\rm{KL}}(\rho \| \pi)+O(\ln(N/\delta))}{N}$$
where {\em KL} is the Kullback-Leibler divergence.
\end{Theorem}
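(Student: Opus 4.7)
The plan is to follow the classical PAC-Bayes recipe, whose three ingredients are the Donsker-Varadhan change-of-measure inequality, a moment generating function (MGF) bound for the loss under the fixed prior, and Markov's inequality. To lighten notation, write $R(\F)=\E_{(x,y)\sim\D}[L(\F(x),y)]$ and $\widehat R(\F)=\E_{(x,y)\in\D_{tr}}[L(\F(x),y)]$, so the target is $\E_{\F\sim\rho}[R(\F)-\widehat R(\F)]$.

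First, I would invoke the variational characterization of KL divergence: for any measurable $\phi : \Hyp \to \R$ and any posterior $\rho \ll \pi$,
\begin{equation*}
\E_{\F\sim\rho}[\phi(\F)] \le \KL(\rho \| \pi) + \ln \E_{\F\sim\pi}\bigl[e^{\phi(\F)}\bigr].
\end{equation*}
Specializing this to $\phi(\F) = \lambda N \bigl(R(\F) - \widehat R(\F)\bigr)$ for a free parameter $\lambda > 0$ reduces the desired bound to controlling the log-MGF on the right-hand side, and, crucially, yields an inequality that is valid uniformly over all (possibly data-dependent) posteriors $\rho$ at once.

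Second, I would bound the MGF. Because $\pi$ is data-independent, Fubini swaps the order of expectation under $\pi$ and under $\D_{tr}$. For each fixed $\F$, $\widehat R(\F)$ is an average of $N$ i.i.d.\ bounded random variables with mean $R(\F)$, so when $L\in[0,1]$ Hoeffding's lemma delivers $\E_{\D_{tr}}[e^{\lambda N (R(\F)-\widehat R(\F))}] \le e^{\lambda^2 N/8}$; replacing Hoeffding with Maurer's sharper binary-KL estimate $\E_{\D_{tr}}[e^{N\,\mathrm{kl}(\widehat R(\F)\|R(\F))}] \le 2\sqrt{N}$ is what ultimately produces the linear-in-$1/N$ form in the statement. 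Markov's inequality then upgrades the in-expectation estimate to a $1-\delta$ high-probability statement at the cost of an additive $\ln(1/\delta)$ inside the logarithm.

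Combining the two ingredients and dividing by $\lambda N$ gives a master inequality
\begin{equation*}
\E_{\F\sim\rho}[R(\F) - \widehat R(\F)] \le \frac{\KL(\rho\|\pi) + \psi(\lambda, N, \delta)}{\lambda N},
\end{equation*}
valid uniformly in $\rho$; selecting $\lambda$ (or, in the Maurer route, inverting the binary KL via $\mathrm{kl}(a\|b) \ge 2(b-a)^2$) produces the advertised rate. The main obstacle I anticipate is the $\lambda$-calibration step: with the Hoeffding MGF and a single fixed $\lambda$, one recovers only the square-root McAllester bound, so the stated linear form requires either the sharper Maurer MGF together with a careful inversion of the binary KL or a union bound over a dyadic grid of $\lambda$'s; the $\sqrt{N}$ prefactor inside Maurer's lemma is what manifests as the $O(\ln(N/\delta))$ slack in the final statement. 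Uniformity over $\rho$, on the other hand, is essentially free, since the change-of-measure inequality already holds for every $\rho$ simultaneously with a fixed (data-independent) prior $\pi$.
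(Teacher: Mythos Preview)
The paper does not actually prove this theorem; it is quoted verbatim from \cite{tolstikhin2013pac} in Section~4.3 solely as a point of comparison with the authors' own bounds, so there is no in-paper proof to set your proposal against. Your outline is the standard PAC-Bayes derivation and is correct: the Donsker--Varadhan change of measure, an MGF bound under the prior (with Fubini licensed by the data-independence of $\pi$), and Markov's inequality are precisely the three moves that yield such inequalities, and you rightly note that uniformity over $\rho$ comes for free from the variational formula.

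You also correctly isolate the only real subtlety: Hoeffding's MGF alone gives a $1/\sqrt{N}$ McAllester-type rate, and the quoted $1/N$ form requires Maurer's estimate $\E_{\D_{tr}}[e^{N\,\mathrm{kl}(\widehat R(\F)\|R(\F))}]\le 2\sqrt{N}$ followed by a suitable inversion of the binary KL. One caution: the Pinsker inversion $\mathrm{kl}(a\|b)\ge 2(b-a)^2$ that you mention would again collapse the rate back to $1/\sqrt{N}$; the linear-in-$1/N$ form as stated in the paper is somewhat informal, and the genuine Tolstikhin--Seldin result controls a Bernstein-type functional of $(R,\widehat R)$ rather than the raw gap. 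Your plan would in fact recover the precise statement of the cited source rather than the simplified version reproduced here.
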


The result estimates the overall generalization bound of networks in the hypothesis space based on a distribution. In contrast, we attempt to provide estimates for networks minimizing the empirical risk.
%

\section{Lower bound for sample complexity based on expressive ability}
\label{s2}
In this section, we consider the lower bound of data complexity necessary for generalization, similar to Section \ref{s121}, the lower bound of data complexity which we are looking for should only rely on the distribution itself but not rely on the hypothesis space, such as the result in \citep{wainwright2019high}.

\subsection{Upper bound for Accuracy without Enough Data}
\label{s21}
This section illustrates that in the worst-case scenario, the minimum number of data  needed to guarantee accuracy is constrained by the VC-dimension of the smallest hypothesis space necessary to represent a distribution.
We give a definition first.

\begin{definition}
    For a hypothesis space $\Hyp\subset\R^n\to\R$, $\VC(\Hyp)$ is the maximum number of data in $[0,1]^n$ that $\Hyp$ can shatter.
    Precisely, there exist $\VC(\Hyp)$ samples
    $\{x_i\}_{i=1}^{\VC(\Hyp)}\subset[0,1]^n$, such that for any $\{y_i\}_{i=1}^{\VC(\Hyp)}\in\{-1,1\}$, there is an $\F\in \Hyp$ such that $\sign(\F(x_i))=y_i$ for all $i\in[\VC(\Hyp)]$. But there do not exist $\VC(\Hyp)+1$ such samples.
\end{definition}

We have the following theorem. The proof is given in Appendix \ref{yqd}.
\begin{Theorem}
\label{yqq}
 For any $n, W,W_{0}\in\N_+$ and activation function $\sigma$, there is a $\D\in \D(n)$ that satisfies the following properties.

(1) There is an $\F\in \Hyp^\sigma_{W_{0}}(n)$ such that $A_{\D}(\F)=1$.

(2) For any given $\epsilon,\delta\in(0,1)$, if $N \le \VC(\Hyp_{W_{0}}^\sigma(n))(1-4\epsilon-\delta)$, then with probability $1-\delta$ of $\D_{tr}\sim \D^{N}$, we have $A_\D(\F)< 1-\epsilon$ for some
$\F\in\MH^\sigma_{W}(\D_{tr},n)$.
\end{Theorem}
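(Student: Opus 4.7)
The plan is to place $\D$ on a VC-shattering witness of $\Hyp^\sigma_{W_0}(n)$ with a single label, use shattering to verify (1), and for (2) exploit the fact that the cross-entropy loss on a one-label training sample places no constraint on the network's outputs at unsampled points.

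Set $d := \VC(\Hyp^\sigma_{W_0}(n))$ and let $\{x_i\}_{i=1}^d \subseteq [0,1]^n$ be a shattering witness: every sign pattern $y \in \{-1,+1\}^d$ is realized by some $\F_y \in \Hyp^\sigma_{W_0}(n)$. Define $\D$ as the uniform distribution on $\{(x_i,+1)\}_{i=1}^d$. All labels are $+1$, so the positive-separation bound of Definition \ref{godd} is vacuous and $\D \in \D(n)$. Applying shattering with $y = (+1,\dots,+1)$ yields $\F \in \Hyp^\sigma_{W_0}(n)$ with $A_\D(\F) = 1$, which proves (1).

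For (2), draw $\D_{tr} \sim \D^N$ with $N \le d(1-4\epsilon-\delta)$ and let $S \subseteq [d]$ be the set of indices actually sampled; deterministically $|S| \le N < d$. Since every training label equals $+1$, the empirical risk
$$R(\F) \;=\; \sum_{i \in S} \ln\!\bigl(1 + e^{-\F(x_i)}\bigr)$$
depends only on the outputs at the sampled points, so $\MH^\sigma_W(\D_{tr},n)$ is exactly the fiber of the evaluation map $\F \mapsto (\F(x_i))_{i \in S}$ over the optimal tuple $(v_i^\star)_{i\in S} \in \R^{|S|}$. Any $\F^\star$ in this fiber with $\sign(\F^\star(x_j)) = -1$ for each $j \in [d]\setminus S$ gives
$$A_\D(\F^\star) \;=\; |S|/d \;\le\; N/d \;\le\; 1 - 4\epsilon - \delta \;<\; 1 - \epsilon,$$
and this bound is deterministic, so it holds with probability $\ge 1-\delta$ automatically.

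The main obstacle is the existence of such an $\F^\star$: shattering delivers any sign pattern on $\{x_i\}$, but the ERM pins down the actual values of $\F^\star$ at each sampled $x_i$. When $W \ge W_0$ one has $\Hyp^\sigma_{W_0}(n) \subseteq \Hyp^\sigma_W(n)$ (pad the extra $W-W_0$ neurons with $a_i = 0$), and I would start from a shattering network $\F_{y^\star}$ realizing $y^\star_i = +1$ on $S$, $y^\star_j = -1$ off $S$ with a positive margin on every coordinate, then use the padded capacity to add a small correction that brings the outputs at $S$ up to $(v_i^\star)_{i\in S}$ without flipping the signs on $[d]\setminus S$. Quantifying the shattering margin and keeping the correction inside the parameter box $[-1,1]^{W(n+2)+1}$ is the delicate part, and I expect the appendix to choose the shattering witness so that its fibers under evaluation at $S$ have enough freedom in the missed coordinates for this combination to succeed. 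The case $W < W_0$ is handled by an analogous construction on a shattering witness of $\Hyp^\sigma_W(n)$ itself (whose size is at most $d$), with (1) preserved by the inclusion $\Hyp^\sigma_W(n) \subseteq \Hyp^\sigma_{W_0}(n)$ transferred to the new witness.
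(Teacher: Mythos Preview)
Your approach has a genuine gap, and your speculation about how the appendix closes it is incorrect: the paper does not engineer a shattering witness with controllable fibers, it uses an entirely different counting argument.

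The gap is fatal for the distribution you chose. With all labels $+1$ and (say) $\sigma=\Relu$, \emph{every} $\F\in\MH^\sigma_W(\D_{tr},n)$ is strictly positive on all of $[0,1]^n$, so $A_\D(\F)=1$ and property (2) fails outright. To see this, note that $\partial R/\partial c<0$ forces $c=1$ at any minimizer. Next, if some neuron $j$ satisfies $W_j x_i+b_j\le 0$ for all $i\in S$, replacing it by $(W_j,b_j,a_j)=(\one,1,1)$ strictly decreases every loss term, contradicting optimality; hence every neuron is active on some sampled point, and then $\partial R/\partial a_j<0$ forces $a_j=1$. With $c=1$ and all $a_j=1$ we get $\F(x)=\sum_j\Relu(W_jx+b_j)+1\ge 1$ everywhere. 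So the ``bad'' ERM you need simply does not exist; the freedom you hoped for at unsampled points is killed by the parameter box pushing everything positive.

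The paper instead argues by contradiction over the whole family of $2^d$ labelings of the shattering witness $\{u_i\}_{i=1}^d$ (with $d=\VC(\Hyp^\sigma_{W_0}(n))$). If the theorem failed, then for every such labeling $\D$ a $(1-\delta)$-fraction of sampling patterns $Z\in[d]^N$ would make all ERMs $(1-\epsilon)$-accurate. A double-counting argument then bounds, for each fixed $Z$, how many labelings can put $Z$ in this ``good'' set: two labelings that agree on the sampled indices $\len(Z)$ but differ on more than $2d\epsilon$ unsampled indices cannot both be $(1-\epsilon)$-fit by the \emph{same} ERM set, so at most $2^{|\len(Z)|}\sum_{j\le 2d\epsilon}\binom{d-|\len(Z)|}{j}$ labelings qualify. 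Since $|\len(Z)|\le N\le d(1-4\epsilon-\delta)$, a short binomial estimate shows this is strictly less than $2^d(1-\delta)$, and summing over $Z$ contradicts the assumed $(1-\delta)$ bound. The argument never constructs an explicit bad ERM; it exploits instead that one training set cannot determine enough labels to be good for most labelings simultaneously.
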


The theorem indicates that for distributions that require networks with width $W_{0}$ to express, some of them require at least $\Omega(\VC(\Hyp_{W_{0}}^\sigma(n)))$ training data to ensure generalization. It is worth mentioning that this conclusion is true for any given $W$ in the theorem. It is easy to see that a larger $W_{0}$ makes $\VC(\Hyp_{W_{0}}^\sigma(n))$ larger, so as the cost of expression increases, generalization becomes difficult. However, it is difficult to accurately calculate $\VC(\Hyp_{W_{0}}^\sigma(n))$ for general $\sigma$. If we focus on $\Relu$ networks, by the result in \citep{bartlett2019nearly}, we have

\begin{corollary}
For any given $n, W,W_{0}\in\N_+$, there is a $\D\in \D(n)$ that satisfies the following properties.

(1) There is an $\F\in \Hyp_{W_{0}}(n)$ such that $A_{\D}(\F)=1$.

(2) For any given $\epsilon,\delta\in(0,1)$, if $N \le O(nW_{0}(1-4\epsilon-\delta))$, then for all $\D_{tr}\sim \D^{N}$, it holds $A_\D(\F)< 1-\epsilon$ for some $\F\in\MH^\sigma_{W}(\D_{tr},n)$.
\end{corollary}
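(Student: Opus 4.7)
The plan is to derive this corollary as a direct specialization of Theorem \ref{yqq} to the case $\sigma = \Relu$, converting the abstract factor $\VC(\Hyp_{W_0}^\sigma(n))$ appearing in its sample-size threshold into the explicit quantity $\Omega(nW_0)$. Property (1) is immediate: Theorem \ref{yqq} applied with the ReLU activation already furnishes a distribution $\D \in \D(n)$ admitting an interpolating two-layer ReLU network of width $W_0$. Hence it suffices to lower-bound $\VC(\Hyp_{W_0}(n))$ by $\Omega(nW_0)$ and substitute into Theorem \ref{yqq}(2).

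For the VC-dimension lower bound I would invoke the piecewise-linear construction of \citet{bartlett2019nearly}, which shows that ReLU networks with $P$ parameters and $L$ layers have $\VC \ge \Omega(PL\log(P/L))$. For a two-layer ReLU network with hidden width $W_0$ on $n$-dimensional input, the parameter count is $W_0(n+2)+1 = \Theta(nW_0)$, which already yields at least $\Omega(nW_0)$ shattering capacity. The one subtlety is that $\Hyp_{W_0}(n)$ constrains every parameter to $[-1,1]$, while the standard shattering construction may use unbounded weights. I would close this gap using positive homogeneity of ReLU: for each hidden unit the reparameterization $(W_i, b_i, a_i) \mapsto (\lambda_i W_i, \lambda_i b_i, a_i/\lambda_i)$ with $\lambda_i>0$ preserves the contribution $a_i \sigma(W_i x + b_i)$, and the global rescaling $(a_i, c) \mapsto (\mu a_i, \mu c)$ with $\mu>0$ scales the output uniformly and therefore preserves $\sign(\F(x))$. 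Choosing each $\lambda_i$ to normalize $(W_i,b_i)$ into the unit cube and then $\mu$ to normalize the resulting $a_i$ and $c$ places every parameter in $[-1,1]$ without changing the classifier, so the bounded class shatters the same set as the unbounded one.

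Substituting $\VC(\Hyp_{W_0}(n)) \ge c \cdot nW_0$ for some absolute constant $c > 0$ into the threshold of Theorem \ref{yqq}(2) yields $N \le c\cdot nW_0(1 - 4\epsilon - \delta) = O(nW_0(1 - 4\epsilon - \delta))$, and the conclusion that some $\F \in \MH^\sigma_W(\D_{tr}, n)$ has $A_\D(\F) < 1-\epsilon$ transfers verbatim. The main obstacle is the scaling argument above: the simultaneous choice of $\{\lambda_i\}_i$ and $\mu$ must land every coordinate in $[-1,1]$, and degenerate hidden units with $(W_i, b_i)=0$ need to be handled separately (they can simply be dropped or assigned the all-zero parameterization). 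These are mechanical but non-trivial verifications that I would carry out against the explicit shattering construction in \citet{bartlett2019nearly}, after which the corollary follows.
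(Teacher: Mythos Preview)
Your proposal is correct and follows the same approach as the paper: specialize Theorem \ref{yqq} to $\sigma=\Relu$ and invoke the VC-dimension lower bound $\VC(\Hyp_{W_0}(n)) = \Omega(nW_0)$ from \citet{bartlett2019nearly}, then substitute into the sample-size threshold. The paper's own derivation is exactly this one-line substitution; your homogeneity rescaling argument to accommodate the $[-1,1]$ parameter constraint is a correct piece of additional rigor that the paper leaves implicit.
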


Besides, for any distribution, we can show that if the parameters required to express a distribution tend to infinity, the required number of data to ensure generalization for such a distribution must also tend to infinity. As shown in the following theorem. The proof is given in Appendix \ref{p5}.

\begin{Theorem}
\label{wx}
Suppose $\D\in \D(n)$, $W_{0}\ge 2^{n+1}$, and $A_{\D}(\F)\le 1-\epsilon$ for any $\epsilon$ and $\F\in \Hyp_{W_{0}}(n)$.
If $N \le W_{0}^{\frac{1}{n+1}}(n+1)/e$, then for any $\D_{tr}\sim \D^{N}$ and $W\in\N_+$, there is an
$\F\in\MH_{W}(\D_{tr},n)$
such that $A_\D(\F)\le 1-\epsilon$.
\end{Theorem}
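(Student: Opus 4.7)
The strategy is a case analysis on the relative size of $W$ and $W_0$; the easy case $W \le W_0$ is immediate, whereas the construction lives in the hard case $W > W_0$. Throughout, fix some $\F^\star \in \Hyp_{W_0}(n)$; the hypothesis applied to this one network gives $A_\D(\F^\star) \le 1-\epsilon$, and I let $B \subseteq [0,1]^n$ be the set of $x$ where $\sign \F^\star(x)$ disagrees with the $\D$-label, so $\Pr_{(x,y)\sim \D}(x\in B) \ge \epsilon$. In the easy case, any $\F \in \Hyp_W(n)$ embeds into $\Hyp_{W_0}(n)$ by appending $W_0 - W$ hidden units with all parameters set to $0\in[-1,1]$, which leaves the function unchanged; hence $\Hyp_W(n) \subseteq \Hyp_{W_0}(n)$ and the hypothesis yields $A_\D(\F) \le 1-\epsilon$ for every $\F \in \MH_W(\D_{tr},n)$.

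For the hard case $W > W_0$, the plan is to start with an arbitrary $\F_0 \in \MH_W(\D_{tr},n)$ (which exists by the existence proposition preceding Section \ref{s121}) and surgically modify it so that its background behaviour matches $\F^\star$ while its values on $\D_{tr}$ are preserved. Using the positive separation bound of $\D$ from Definition \ref{godd}, I would select pairwise disjoint neighborhoods $U_i$ of each $x_i$ whose total $\D$-measure is strictly below some slack margin of $\epsilon$, and build localized ReLU bump functions $B_i$ supported in $U_i$, each using $O(n)$ hidden units with weights in $[-1,1]$ via a standard product-of-thresholds box-bump construction. The $B_i$ are tuned so that $\F^\star + \sum_i B_i$ agrees pointwise with $\F_0$ at every $x_i$. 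The resulting network $\widetilde{\F}$ has width at most $W_0 + cNn$ for an absolute constant $c$; the hypothesis $N \le W_0^{1/(n+1)}(n+1)/e$ together with $W_0 \ge 2^{n+1}$ is calibrated precisely so that $cNn \le W-W_0$ whenever $W > W_0$, allowing $\widetilde{\F}$ to be zero-padded into $\Hyp_W(n)$. Because $\widetilde{\F}$ agrees with $\F_0$ on $\D_{tr}$, it attains the same minimum empirical loss as $\F_0$ and hence lies in $\MH_W(\D_{tr},n)$; because it agrees with $\F^\star$ on $[0,1]^n \setminus \bigcup_i U_i$, a set of $\D$-probability at least $1-\epsilon$-minus-slack under $\D$, one concludes $A_\D(\widetilde{\F}) \le 1-\epsilon$.

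The main obstacle will be the bookkeeping inside the bump construction: verifying that a box-bump of $O(n)$ ReLU units with weights bounded by $1$ can achieve any prescribed height needed to match $\F_0(x_i) - \F^\star(x_i)$, while its support is contained in an arbitrarily small $U_i$ and all parameters remain in $[-1,1]$. If a single box-bump is too weak to reach the required height, one must stack several, which inflates the width count; the hypothesis $W_0 \ge 2^{n+1}$ and the explicit combinatorial inequality $N \le W_0^{1/(n+1)}(n+1)/e$ are precisely what guarantee this inflated overhead still fits inside the budget $W-W_0$. A secondary care is that tuning the bumps must not disturb $\F_0$'s values at $x_j$ for $j\ne i$, which is why disjointness of the $U_i$ (available for training points of the same label simply by shrinking, and available for different labels by the positive separation) is essential. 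Once these two width-accounting inequalities and the non-interference of the bumps are nailed down, the theorem follows from the surgery described above.
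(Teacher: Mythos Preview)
Your approach has a genuine gap in the case $W > W_0$. The claim that the hypothesis is ``calibrated precisely so that $cNn \le W - W_0$ whenever $W > W_0$'' is false: the theorem must hold for \emph{every} $W \in \N_+$, in particular for $W = W_0 + 1$, where $W - W_0 = 1$ but $cNn$ can be large. The inequality $N \le W_0^{1/(n+1)}(n+1)/e$ bounds $N$ in terms of $W_0$ and $n$ only; it contains no information about $W$ and so cannot control the width budget $W - W_0$. A second, equally fatal problem is the bump-height accounting you flag but do not resolve: since $\F_0 \in \Hyp_W(n)$, one has $|\F_0(x_i)| = O(Wn)$, so the required height $|\F_0(x_i) - \F^\star(x_i)|$ is generically of order $Wn$; with parameters bounded by $1$, a single ReLU box-bump contributes height at most $O(n)$, so you need $\Omega(W)$ stacked bumps per training point and hence $\Omega(WN)$ hidden units overall---which never fits inside width $W$ once $N \ge 2$.

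The paper's argument is entirely different and explains the specific form of the hypothesis. Starting from an arbitrary $\F \in \MH_W(\D_{tr}, n)$, one partitions its $W$ hidden units by their sign pattern $\bigl(\sign(W_i x_j + b_i)\bigr)_{j=1}^N$ on the $N$ training inputs, then replaces the units in each cell by copies of their (outer-weight--multiplied) average. This produces a width-$W$ network $\F_f$ that agrees with $\F$ on every training point (so $\F_f \in \MH_W(\D_{tr}, n)$) but whose hidden layer uses at most as many \emph{distinct} units as there are realized sign patterns. By the Sauer--Shelah lemma for affine halfspaces in $\R^n$ (VC dimension $n+1$), the number of such patterns is at most $\max\{2^{n+1}, (eN/(n+1))^{n+1}\}$, which the hypotheses $W_0 \ge 2^{n+1}$ and $N \le W_0^{1/(n+1)}(n+1)/e$ force to be at most $W_0$. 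Hence $\F_f$ coincides, as a function, with a width-$W_0$ network (after a harmless positive rescaling of the outer weights), and the accuracy hypothesis on $\Hyp_{W_0}(n)$ gives $A_\D(\F_f) \le 1-\epsilon$. The bound on $N$ is precisely the inversion of the Sauer growth-function estimate, not any constraint on $W - W_0$.
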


However, since Theorem \ref{wx} is correct for all distributions and datasets, it can only provide a relatively loose bound. If the distribution is given, we can calculate the relationship between the minimum number of data required and the minimum number of parameters required to fit it, as shown in the following section.









\subsection{Appropriate network model helps with generalization}

As mentioned in the previous sections, expressive ability and generalization ability are closely related. Section \ref{s1} demonstrates that simpler expressions facilitate generalization; Section \ref{s21} reveals that, in the worst-case scenario, the amount of data required to guarantee generalization is at least the VC-dimension of the hypothesis space that can express the distribution.

Therefore, for a given distribution, selecting an appropriate network model that can fit the distribution easily may help facilitate better expression with fewer data and network size, ultimately leading to improved generalization.
In this section, we illustrate that selecting an appropriate activation function for the neural network according to the target distribution enhances generalization.



To better explain this conclusion, let us examine the following distribution.

\begin{definition}
Let $\D_n$ be a distribution defined over $\{(\frac{i}{n}\one, \I(i))\}_{i=1}^{n}$, where $\one$ is the vector with all one entries in $\R^n$, $\I(x)=1$ if $x$ is odd and $\I(x)=-1$ if $x$ is even, and the probability of each point is the same.
\end{definition}
As shown below, $\Relu$ networks need $\Omega(n)$ width to express this distribution and require $\Omega(n)$ data to ensure generalization. The proof is given in Appendix \ref{p2}.
\begin{proposition}
\label{lizi}
    (1) For any $n$, $A_{\D_{n}}(\F)<1$ for any $\F\in\Hyp_{W}(n)$ when $W<n/2$;

    (2) If $N\le \delta n$ where $\delta\in(0,1)$, then for all $\D_{tr}\sim \D_n^{N}$ and $W\in\N_+$, it holds $A_\D(\F)\le 0.5+2\delta$ for some
    $\F\in \MH_{W}(\D_{tr},n)$.
\end{proposition}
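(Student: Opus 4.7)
Restricted to the diagonal line $\{t\one:t\in[0,1]\}$, any $\F\in\Hyp_W(n)$ becomes a continuous one-dimensional piecewise linear function
\[
g_\F(t):=\F(t\one)=\sum_{i=1}^W a_i\,\sigma\big((W_i\one)t+b_i\big)+c
\]
with at most $W$ breakpoints, hence at most $W+1$ linear pieces and at most $W+1$ zeros. Requiring $A_{\D_n}(\F)=1$ forces $\sign(g_\F(i/n))$ to alternate in $i$, so by the intermediate value theorem $g_\F$ must vanish on each of the $n-1$ gaps $(i/n,(i+1)/n)$, which gives $W+1\ge n-1$, i.e.\ $W\ge n-2$. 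This contradicts $W<n/2$ whenever $n\ge 4$; the small cases $n\le 3$ reduce to the observation that a single ReLU plus bias creates at most one sign change while $\D_3$ demands two.

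\textbf{Plan for Part (2): universal ceiling plus a simplifying construction.} I would first prove the uniform bound: if $g_\F$ splits $[0,1]$ into $k\le W+2$ maximal sign-constant intervals with the $j$-th containing $m_j$ points of $\D_n$ (so $\sum_j m_j=n$), then because labels alternate at most $\lceil m_j/2\rceil$ points in that interval can match $\sign(g_\F)$, so $A_{\D_n}(\F)\le \tfrac12+\tfrac{W+2}{2n}$ for every $\F\in\Hyp_W(n)$. When $W\le 4N-2$ this already delivers $A_{\D_n}(\F)\le 1/2+2\delta$ for \emph{every} minimizer $\F\in\MH_W(\D_{tr},n)$, and the claim is automatic. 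For $W>4N-2$ I would fix any minimizer $\F^*\in\MH_W(\D_{tr},n)$, set $V_j:=\F^*(t_{i_j}\one)$, and note that the empirical risk only depends on $\F$ through the values $\F(t_{i_j}\one)$, so every $\F'\in\Hyp_W(n)$ reproducing the $V_j$ again lies in $\MH_W$. I would then take $\F'$ whose diagonal restriction is the piecewise linear interpolant through the points $(t_{i_j},V_j)$, extended constantly outside $[t_{i_1},t_{i_N}]$. This interpolant has at most $N$ breakpoints, so the universal ceiling applied to $\F'$ gives $A_{\D_n}(\F')\le 1/2+(N+2)/(2n)\le 1/2+2\delta$.

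\textbf{Main obstacle.} The principal technical hurdle is realising this interpolant as a genuine element of $\Hyp_W(n)$ with all parameters in $[-1,1]$: since $|V_j|$ can be as large as $O(Wn)$ while consecutive training spacings can be as small as $1/n$, the slope change required at a single breakpoint can exceed the bound $|a_i(W_i\one)|\le n$ contributed by one ReLU unit. The remedy is to cluster several of the $W-N$ surplus ReLU units near each breakpoint and split the required slope change among them; this cluster construction is conceptually routine but is where essentially all of the technical bookkeeping of the full proof lives.
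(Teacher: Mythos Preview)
Your Part (1) argument is essentially the paper's: restrict to the diagonal, count sign changes of a piecewise linear function with $W$ breakpoints, and conclude via the alternating labels. The paper packages this as Lemma~\ref{xxfd} (a width-$M$ network correctly classifies at most $M+n/2$ of the $n$ points), which is the same counting, so no issue here.

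For Part (2) your strategy diverges from the paper's, and the divergence matters. The paper does not try to realize a prescribed interpolant. Instead, starting from any minimizer $\F=\sum_{i=1}^W a_i\Relu(W_ix+b_i)+c$, it partitions the $W$ hidden units into at most $2N$ groups according to the activation pattern each unit induces on the $N$ training points along the diagonal; within each group it \emph{averages} the effective parameters, setting $P_v=\tfrac{1}{|S_v|}\sum_{j\in S_v}a_jW_j$ and $Q_v=\tfrac{1}{|S_v|}\sum_{j\in S_v}a_jb_j$, and then repeats each averaged unit $|S_v|$ times to form $\F_f\in\Hyp_W(n)$. Because $|a_jW_j|,|a_jb_j|\le 1$, the averages automatically lie in $[-1,1]$, so the parameter box constraint is free; and since $\F_f$ has at most $2N$ \emph{distinct} units, the universal ceiling gives $A_{\D_n}(\F_f)\le \tfrac12+2N/n\le\tfrac12+2\delta$.

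Your interpolant route hits a harder wall than you indicate. It is not just bookkeeping: the bias bound $|b_i|\le 1$ forces any ReLU with diagonal breakpoint at $t_j$ to satisfy $|W_i\one|\le 1/t_j$, so a single clustered unit can contribute slope change at most $\min(n,1/t_j)$, not $n$. For training points with $t_j$ close to $1$ this is $O(1)$, yet the interpolant may demand a slope change of order $Wn$ there (the secant slopes inherit the Lipschitz bound $|g_{\F^*}'|\le Wn$ but not a local refinement), so you could need $\Theta(Wn)$ units at a single breakpoint. The surplus $W-N$ units do not cover this. Spreading breakpoints over a neighborhood or choosing non-constant extensions does not obviously rescue the budget. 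The paper's averaging trick is precisely what circumvents this: it never prescribes where the new breakpoints sit, and the parameter bounds come for free from convexity of the box.
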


But if we use the activation function $\sigma(x)=\sin(\pi x)$, the networks only need $O(1)$ width to express such a distribution and require fewer data to ensure generalization. The proof is given in Appendix \ref{p3}.
\begin{proposition}
\label{lizi2}
    (1) For any $n$, $\D_n$ can be expressed by $\Hyp_{1}^\sigma(n)$ with confidence $1$;

    (2) For any $W\ge 2,n>2$, $\delta\in(0,1)$ and $N\ge 4\frac{\ln(\delta/2)}{\ln(0.5+1/n)}$, with probability $1-\delta$ of $\D_{tr}\sim \D_n^{N}$, it holds $A_\D(\F)=1$ for all $\F\in \MH^\sigma_{W}(\D_{tr},n)$.
\end{proposition}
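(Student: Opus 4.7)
I would give an explicit width-$1$ construction: take $w=\mathbf{1}$, $b=-1/2$, $a=1$, $c=0$ (all in $[-1,1]$), so $\F(x)=\sin(\pi(\mathbf{1}^\top x-1/2))\in\Hyp^\sigma_1(n)$. At the support point $x_i=(i/n)\mathbf{1}$ one has $\mathbf{1}^\top x_i=i$, hence $\F(x_i)=\sin(\pi(i-1/2))=(-1)^{i+1}=y_i$, giving $y_i\F(x_i)=1$ on the whole support and therefore confidence $1$.

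\textbf{Part (2), setup.} My plan has three stages. \emph{Stage A} upper bounds the minimum empirical risk: doubling the Part-$(1)$ neuron produces $\F_0\in\Hyp^\sigma_2(n)\subseteq\Hyp^\sigma_W(n)$ with $y_i\F_0(x_i)=2$ on the support, so every ERM $\F^*\in\MH^\sigma_W(\D_{tr},n)$ inherits the bound $\sum_{(x,y)\in\D_{tr}}L(\F^*(x),y)\le N\ln(1+e^{-2})$. \emph{Stage B} defines the bad event $\mathcal{E}$ on $\D_{tr}$: the event that the labels in $\D_{tr}$ are so imbalanced that a nearly constant network in $\Hyp^\sigma_W(n)$---obtainable by setting $w^{(j)}=\mathbf{0}$, which freezes each sine neuron at the constant $\sin(\pi b_j)$, and then tuning $c$ together with the output weights---matches or beats the empirical risk of $\F_0$. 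Using $\Pr_{(x,y)\sim\D_n}(y=+1)\le\lceil n/2\rceil/n\le 1/2+1/(2n)\le 1/2+1/n$, a Chernoff/product bound gives $\Pr[\mathcal{E}]\le 2(1/2+1/n)^{N/4}$; requiring this to be $\le\delta$ rearranges exactly to $N\ge 4\ln(\delta/2)/\ln(1/2+1/n)$. The factor $4$ is slack that I expect is needed because a constant in $[-3,3]$ can beat the per-sample loss $\ln(1+e^{-2})\approx 0.127$ whenever the empirical label fraction falls roughly outside $(1/4,3/4)$.

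\textbf{Main obstacle.} \emph{Stage C}, the structural step, is the hardest part. Outside $\mathcal{E}$ I must show that every ERM $\F^*$ satisfies $\sign(\F^*(x_i))=y_i$ for every $i\in\{1,\dots,n\}$, including the $i$ not appearing in $\D_{tr}$. The key algebra is that at a lattice point $x_i$ each neuron contributes $a_j\sin(\pi(s_j i/n+b_j))$, where $s_j=\sum_\ell w^{(j)}_\ell\in[-n,n]$ constrains the effective frequency $s_j/n$ to $[-1,1]$: at $s_j/n=\pm 1$ the neuron equals $\pm a_j(-1)^i\sin(\pi b_j)$, which is exactly the target alternation; at $s_j/n=0$ it is constant in $i$; and at any strictly fractional frequency its restriction to $\{1,\dots,n\}$ is a non-degenerate discrete sinusoid that cannot equal $(-1)^{i+1}$ on all of $\{1,\dots,n\}$. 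The plan is to argue that, with label balance guaranteed on $\mathcal{E}^c$, no combination of constants and fractional-frequency sines can drive the empirical risk below the Stage-A bound; hence at least one neuron must lock into $s_j/n=\pm1$ with the correct $b_j$ and sign, after which the remaining neurons cannot destructively cancel the resulting $\pm(-1)^i$ signal at any $x_i$ without pushing the empirical loss above $N\ln(1+e^{-2})$. Ruling out fractional-frequency aliases that happen to agree with $(-1)^{i+1}$ at the observed indices but differ at unseen ones is the delicate point; I expect to need a case analysis on $s_j$ combined with the loss lower bound $\ln 2$ at any misclassified training point.
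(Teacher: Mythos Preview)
Part (1) is fine and identical to the paper.

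For Part (2), your Stage-C plan has a real gap and your Stage-B event is the wrong one. Two key ideas from the paper are missing.

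\textbf{First, a per-neuron replacement argument.} The paper does not bound the ERM loss by that of a fixed competitor and then try to squeeze out structural information (your Stage~A/C pipeline). Instead it shows that for any $\F=\sum_{i=1}^W a_i\sigma(W_ix+b_i)+c\in\MH^\sigma_W(\D_{tr},n)$, \emph{every single neuron} must satisfy $y\,\sigma(W_ix+b_i)=1$ at every $(x,y)\in\D_{tr}$. The reason is simple: if some neuron fails this at even one training point, replace that one neuron by the perfect neuron from Part~(1) (with output weight $a_i=1$). Since $|\sigma|\le 1$, the new neuron weakly increases $y\F(x)$ at every training point and strictly increases it at the failing point, so the cross-entropy loss strictly drops --- contradicting optimality. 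This collapses the whole multi-neuron problem to a single-neuron rigidity question, and completely sidesteps your worry about ``remaining neurons destructively cancelling'': once every neuron outputs $y$ on all of $\D_n$, the sum is at least $W-|c|\ge W-1>0$.

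\textbf{Second, the correct good event is arithmetic, not about label balance.} Your $\mathcal{E}^c$ (balanced labels) does not prevent a single neuron from aliasing: a neuron with $s/n$ at a fractional frequency can perfectly match $(-1)^{i+1}$ at all observed indices while disagreeing at unseen ones, regardless of label balance. The paper's good event is that $\D_{tr}$ contains four indices $i_1,i_2,i_3,i_4$ with $i_1-i_2$ and $i_3-i_4$ coprime. Under this event, if $y_{i_j}\sin(\pi(s\,i_j/n+b))=1$ for $j=1,\dots,4$, then $s\,i_j/n+b\in\tfrac12+\mathbb Z$ with the right parity, and subtracting pairs forces $s(i_1-i_2)/n,\,s(i_3-i_4)/n\in\mathbb Z$; coprimality and $|s/n|\le 1$ then pin down $s/n=\pm1$ and $b=-1/2$, so the neuron is perfect on all of $\{1,\dots,n\}$. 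The sample-size condition $N\ge 4\ln(\delta/2)/\ln(1/2+1/n)$ comes from bounding the probability that \emph{no} such coprime quadruple appears among $\lfloor N/4\rfloor$ independent blocks of four draws (using $\Pr(\gcd(m_1-m_2,m_3-m_4)=1)\ge 1/2-1/n$), not from a label-imbalance Chernoff bound. Your Stage~B numerology happens to land on the same formula, but the event it controls is not the one that drives the structural conclusion.
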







As shown in the above example, using $\sigma(x)=\sin(\pi x)$ as the activation function, it only requires $O(\ln(\delta))$ samples and $O(1)$ width to ensure generalization, but $\Relu$ networks require at least $\Omega(n)$ samples and width to ensure generalization. This demonstrates the crucial role of selecting the appropriate network models.

\begin{remark}
    It is worth mentioning that for some very simple distributions like the Bernoulli distribution,
    the performance of various activation functions is similar, so we cannot provide a general conclusion for any distribution.
\end{remark}


\section{Interpretability of some Phenomena in Deep Neural Network}
\label{cc}
Although networks minimizing empirical risk are good for generalization, many classic experimental results have shown that networks still have problems. In this section, we will provide interpretability for some classic experimental results based on our theoretical results.

\subsection{Why do general networks lack robustness?}
\label{cc1}

Experiments show that deep neural networks can easily lead to low robustness accuracy \citep{S2013}. In this section, we provide some explanations for this fact.

The {\em robustness accuracy} of network $\F$ under distribution $\D$ and robust radius $\epsilon$ is defined as
$$\Rob_{\D,\epsilon}(\F)=\Pr_{(x,y)\sim\D}(\I(\widehat{\F}(x')=y),\forall x'\in \B(x,\epsilon)\cap[0,1]^n).$$

The robustness accuracy requires not only correctness on the samples but also correctness within a neighborhood of the sample. We introduce a notation.
\begin{definition}
For a dataset $\D_{tr}=\{(x_i,y_i)\}_{i=1}^N$ and an $\epsilon>0$,  define
$$R(\D_{tr},\epsilon)=\{\D_r\,\|\,\D_r=\D_{tr}\cup \{(x_i+\epsilon_i,y_i)\}_{i=1}^N, \hbox{ for some } ||\epsilon_i||\le \epsilon\}\}.$$
\end{definition}

It is easy to see that $R(\D_{tr},\epsilon)$ contains all the data formed by adding a perturbation with budget $\epsilon$ to $\D_{tr}$.
In the above section, we mainly discussed the network expression ability in distribution. On the other hand, there are also some studies on the network expression ability on dataset such as memorization.
Moreover, previous studies \citep{memp,xin,2024-iclr} have shown that robustly memorizing a dataset may be much more difficult than memorizing a dataset.
So, for a given hypothesis space $\Hyp$ that can express a normal dataset well, it may not be able to express the dataset after disturbance. In this case, in order to minimize the empirical risk, the network will prioritize simple features that are easy to fit, but will ignore the complex robust features, which leads to low robustness, as shown in the following theorem. The proof is given in Appendix \ref{p6}.

\begin{Theorem}
\label{sct-r}
Let $\D\in \D(n)$ and $L_p$ be the Lipschitz constant of activation function $\sigma$. If $N_0,W_{0}\in\N_+$ and $\epsilon,\delta,c_0,c_1>0$ satisfy that with probability $1-\delta$ of $\D_{tr}\sim\D^{N_0}$,  it holds

(1) there exists an $\F\in \Hyp^\sigma_{W_{0}}(n)$ such that $y\F(x)\ge c_0$ for all $(x,y)\in \D_{tr}$;

    (2) there exists a $\D_r\in R(\D_{tr},\epsilon)$, such that $\sum_{(x,y)\in\D_r}\frac{y\F(x)}{|\D_r|}\le c_1$ for any $\F\in \Hyp^\sigma_{W_{0}}(n)$.

Then, for any $W\ge W_{0}+1$, with probability $1-O(\delta)$ of $\D_{tr}\sim \D^{N_0}$ and $\F\in \MH^\sigma_{W}(\D_{tr},n)$,
we have $\Rob_{\D,\epsilon}(\F)\le 1-\Omega(\frac{c_0-2c_1}{nL_pW_{0}}-\frac{c_1}{nL_pW_{0}}(\frac{W_{0}}{W}+\frac{1}{W_{0}})-\sqrt{\frac{\ln(n/\delta)}{N_0}})$.
\end{Theorem}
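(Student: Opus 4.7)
The plan is to combine two ingredients: a lower bound on the empirical margin of $\F$ on $\D_{tr}$ that comes from assumption~(1), and an upper bound on the empirical margin of $\F$ on the specific perturbed dataset $\D_r$ that comes from assumption~(2). The mismatch between the two will force a significant fraction of perturbed points to carry a non-positive margin, and a standard uniform-generalization step will promote this empirical robustness failure into a bound on $\Rob_{\D,\epsilon}(\F)$. The overall scaffolding mirrors the proof of Theorem~\ref{th1}, with robustness on the perturbed dataset $\D_r$ playing the role played there by accuracy on the training set.

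\textbf{Margin on $\D_{tr}$.} First I would use assumption~(1) exactly as in Theorem~\ref{th1}: the width-$W_0$ interpolator $\F^{\star}$, padded by $W-W_0\ge1$ zero neurons, lies in $\Hyp^\sigma_W(n)$, so
\begin{equation*}
\min_{\G\in\Hyp^\sigma_W(n)}\sum_{(x,y)\in\D_{tr}}L(\G(x),y)\;\le\;N_0\ln(1+e^{-c_0}),
\end{equation*}
and therefore every $\F\in\MH^\sigma_W(\D_{tr},n)$ inherits the same upper bound on its empirical risk. Applying Jensen's inequality to the convex, decreasing loss $t\mapsto\ln(1+e^{-t})$ then yields $\frac{1}{N_0}\sum_{(x,y)\in\D_{tr}}y\F(x)\ge c_0$.

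\textbf{Margin on $\D_r$.} Since assumption~(2) only constrains networks of width $W_0$, the key technical move will be to reduce $\F\in\Hyp^\sigma_W(n)$ to a width-$W_0$ surrogate. I would pick a uniformly random subset $S$ of $W_0$ out of the $W$ hidden neurons of $\F$ (rescaling the bias term so all parameters remain in $[-1,1]$) to obtain an $\F_S\in\Hyp^\sigma_{W_0}(n)$, compute $\E_S[\F_S(x)]$ as a controlled affine function of $\F(x)$, and then average assumption~(2) over $S$. Undoing the scaling transfers the inequality $\sum_{(x,y)\in\D_r}y\F_S(x)\le c_1|\D_r|$ into an upper bound on $\frac{1}{|\D_r|}\sum_{(x,y)\in\D_r}y\F(x)$ of the form $c_1$ plus an additive error controlled by $W_0/W$ and $1/W_0$, matching the correction terms in the theorem.

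\textbf{Perturbation gap, range bound, and generalization.} Writing $\D_r=\D_{tr}\cup\D_{tr}'$ with $|\D_{tr}'|=N_0$, subtracting the $\D_{tr}$ lower bound from twice the $\D_r$ upper bound gives $\frac{1}{N_0}\sum_{(x,y)\in\D_{tr}'}y\F(x)\le 2c_1-c_0+\mathrm{error}$. A reverse-Markov step that uses $|y\F(x)|=O(nL_pW_0)$ on the scale relevant for comparison with the width-$W_0$ class then converts this negative average margin into a lower bound on the fraction of $(x_i+\epsilon_i,y_i)\in\D_{tr}'$ with $y_i\F(x_i+\epsilon_i)\le 0$; each such point witnesses that the parent $(x_i,y_i)\in\D_{tr}$ is not $\epsilon$-robust for $\F$. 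Finally I would lift this empirical robustness deficit to $\D$ via a uniform-generalization argument (Rademacher complexity on $\Hyp^\sigma_W$ applied to the robustness indicator $x\mapsto\I(\exists x'\in\B(x,\epsilon):y\F(x')\le 0)$), which contributes the $\sqrt{\ln(n/\delta)/N_0}$ slack.

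\textbf{Main obstacle.} The delicate part is the reduction from width $W$ down to width $W_0$. The surrogate distribution has to preserve the average-margin information on $\D_r$ tightly enough that the main term comes out as $(c_0-2c_1)/(nL_pW_0)$ and that the randomization error is absorbed into the displayed $(W_0/W+1/W_0)/(nL_pW_0)$ correction; separately handling the bias $c$ of $\F$ is what keeps this clean. A second subtle point is getting the scale $nL_pW_0$, rather than the naive $nL_pW$, to appear in the reverse-Markov step, which requires comparing $\F$ on $\D_r$ against its width-$W_0$ surrogate rather than bounding $|\F|$ crudely on $[0,1]^n$.
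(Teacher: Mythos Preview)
Your proposal has a genuine gap: the accounting between Parts One and Two does not close, because you miss the amplification (``stacking'') used in the paper.

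In your margin step on $\D_{tr}$ you only \emph{pad} the width-$W_0$ interpolator with zero neurons, obtaining $\frac{1}{N_0}\sum_{(x,y)\in\D_{tr}}y\F(x)\ge c_0$. The paper instead \emph{repeats} the width-$(W_0+1)$ block $k=[W/(W_0+1)]$ times to build a competitor in $\Hyp^\sigma_W(n)$ with margin $kc_0$ on every sample; Jensen then forces any $\F\in\MH^\sigma_W(\D_{tr},n)$ to satisfy $\frac{1}{N_0}\sum_{(x,y)\in\D_{tr}}y\F(x)\ge c_0\,[W/(W_0+1)]$. This $W/W_0$ amplification is essential.

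On the $\D_r$ side your random-subset idea is fine in spirit (the paper uses the equivalent deterministic move: split the $W$ neurons into $\lceil W/W_0\rceil$ blocks and apply assumption~(2) to each block), but your stated outcome is wrong. Averaging $\F_S$ over subsets gives $\E_S[\F_S(x)]=\tfrac{W_0}{W}(\F(x)-c)+c$, so ``undoing the scaling'' yields $\frac{1}{|\D_r|}\sum_{(x,y)\in\D_r}y\F(x)\le \tfrac{W}{W_0}c_1+O(1)$, not $c_1+O(W_0/W)$. With your un-amplified lower bound $c_0$ and this correct upper bound $c_1W/W_0$, the difference $c_0-2c_1W/W_0$ has the wrong sign for large $W$, and no robustness failure can be extracted. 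Once the stacking trick is in place, both bounds carry the factor $W/W_0$, their difference is $(c_0-2c_1)W/W_0$, and dividing by $\F_{\max}=O(nL_pW)$ produces exactly $(c_0-2c_1)/(nL_pW_0)$; this, not any comparison of $\F$ to a width-$W_0$ surrogate, is how the scale $nL_pW_0$ arises.

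A secondary issue: your final lift (Rademacher complexity of the robustness indicator $x\mapsto\I(\exists x'\in\B(x,\epsilon):y\F(x')\le 0)$) is not obviously tractable, since this class is not covered by the Rademacher bounds available for $\Hyp^\sigma_W$. The paper sidesteps this by working throughout with the real-valued functional $(x,y)\mapsto y\F(x)+\min_{\|\delta\|\le\epsilon}y\F(x+\delta)$: it lifts the empirical $\D_{tr}$ lower bound to population via the Rademacher bound for $\Hyp^\sigma_W$, lifts the $\D_r$ upper bound to population via McDiarmid, subtracts to bound $\E_\D[\min_\delta y\F(x+\delta)]$ from above, and only then applies the reverse-Markov step $\Rob_{\D,\epsilon}(\F)\le 1+\E_\D[\min_\delta y\F(x+\delta)]/\F_{\max}$ at the population level.
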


This theorem implies that if the dataset after adding perturbations becomes more difficult to fit, the network may have a low robustness generalization.
Please note that $\epsilon$ affects the conclusion implicitly, because $c_1$ is related to $\epsilon$.

\begin{remark}
Conditions (1) and (2) required in the theorem are reasonable. It is obvious that as $\epsilon$ increases, $c_1$ will decrease and when $\epsilon$ is large enough, we have $c_0\gg c_1\approx 0$.
Hence, in some situation, a small $\epsilon$ is also enough to make $c_0\gg c_1$, such as the example given in the proof of Theorem 4.3 in \citep{xin}.
\end{remark}

\subsection{Importance of over-parameterized networks}
\label{cc2}

In the above section, we mainly consider $\F\in\MH_{W}(\D_{tr},n)$. But what we really need is $\F\in{\arg\max}_{\G\in\Hyp_{W}(n)}A_\D(\G)$.
By Theorem \ref{th1}, it is easy to show that when the number of data and the size of the network are large enough, the generalization of $\F\in \MH_{W}(\D_{tr},n)$
and $\F\in{\arg\max}_{\G\in\Hyp_{W}(n)}A_\D(\G)$ are close, as shown below. Following Theorem \ref{th1},  we have
\begin{corollary}
\label{yyym}
For all
$\F_1\in \MH_{W}(\D_{tr},n)$
and $\F_2\in{\arg\max}_{\G\in\Hyp_{W}(n)}A_\D(\G)$,
it holds  $|A_\D(\F_2)-A_\D(\F_1)|\le O(\frac{W_0}{cW}+ \frac{nL_p(W_0+c)\sqrt{\log(4n)}}{c\sqrt{N}}+\sqrt{\frac{\ln(2/\delta)}{N}})$.
\end{corollary}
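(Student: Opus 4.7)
The plan is to observe that this corollary is an essentially immediate consequence of Theorem \ref{th1}, since the bound it asserts matches exactly the generalization bound there, and the quantity $A_\D(\F_2) - A_\D(\F_1)$ is automatically nonnegative and bounded above by $1 - A_\D(\F_1)$.

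First I would unpack the two sides. Since $\F_2 \in \arg\max_{\G \in \Hyp_W(n)} A_\D(\G)$, we have $A_\D(\F_2) \ge A_\D(\G)$ for every $\G \in \Hyp_W(n)$; in particular $A_\D(\F_2) \ge A_\D(\F_1)$, so the absolute value in the statement equals $A_\D(\F_2) - A_\D(\F_1)$. On the other hand, $A_\D$ is a probability, hence $A_\D(\F_2) \le 1$. Combining gives the chain
\[
0 \;\le\; A_\D(\F_2) - A_\D(\F_1) \;\le\; 1 - A_\D(\F_1).
\]

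Next I would apply Theorem \ref{th1} directly to $\F_1 \in \MH_W(\D_{tr}, n)$ (taking $\sigma = \Relu$, so $L_p = 1$). Under the hypotheses of that theorem, with probability at least $1 - \delta$ over $\D_{tr} \sim \D^N$,
\[
1 - A_\D(\F_1) \;\le\; O\!\left(\frac{W_0}{cW} + \frac{nL_p(W_0+c)\sqrt{\log(4n)}}{c\sqrt{N}} + \sqrt{\frac{\ln(2/\delta)}{N}}\right).
\]
Chaining this with the previous display yields the claimed bound on $|A_\D(\F_2) - A_\D(\F_1)|$ on the same event.

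There is essentially no obstacle here; the only thing to be careful about is that Theorem \ref{th1} requires $W \ge W_0 + 1$ and that $\D$ be expressible by $\Hyp^\sigma_{W_0}$ with confidence $c$, so I would state the corollary under the same hypotheses (which is implicit from its dependence on $W_0$ and $c$). The key conceptual point worth emphasizing in the write-up is the interpretation: because the empirical-risk minimizer already achieves accuracy within the stated bound of $1$, and the population-accuracy maximizer cannot exceed $1$, the two must be close, which formalizes the intuition that minimizing empirical risk is an effective surrogate for maximizing population accuracy in the over-parameterized regime.
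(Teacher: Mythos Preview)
Your proposal is correct and follows essentially the same approach as the paper: the paper's proof is the one-line observation that $1 \ge A_\D(\F_2) \ge A_\D(\F_1)$ implies $|A_\D(\F_2)-A_\D(\F_1)| \le 1 - A_\D(\F_1)$, and then invokes Theorem~\ref{th1}. Your write-up simply makes the same chain of inequalities more explicit and correctly notes the implicit hypotheses inherited from Theorem~\ref{th1}.
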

\begin{proof}
Since $1\ge A_\D(\F_2)\ge A_\D(\F_1)$, we have
$|A_\D(\F_2)-A_\D(\F_1)|\le 1-A_\D(\F_1)$, and by Theorem \ref{th1}, we obtain the result.
\end{proof}

The above corollary shows that if the size of the network is large enough, the gap will be small. In the following, we point out that for some distribution $\D$, if the size of network is too small, even with enough data, it may lead to a large gap of $A_\D(\F_2)-A_\D(\F_1)$.
This emphasizes the importance of over-parameterization, as shown below. The proof is given in the Appendix \ref{p0}.
\begin{proposition}
\label{ovf}
    For some distribution $\D\in\D(n)$, there is a $W_{0}>0$, such that

    (1) There exists an $\F\in\Hyp_{W_{0}}(n)$ such that $A_\D(\F)\ge 0.99$.

    (2) For any $\delta>0$, if $N\ge \Omega(n^2\ln(n/\delta))$, with probability $1-O(\delta)$ of $\D_{tr}\sim\D^N$, we have $A_\D(\F)\le 0.6$ for all $\F\in \MH_{W_{0}}(\D_{tr},n)$.
\end{proposition}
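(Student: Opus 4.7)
The plan is to exhibit a concrete distribution $\D \in \D(n)$ together with a choice of $W_0$ and then argue the two claims separately. The construction I would use is a strongly imbalanced mixture: place essentially all positive mass (say $0.99$) on a single point $x^\star \in [0,1]^n$ with label $+1$, and spread the remaining $0.01$ mass (nearly uniformly) over a carefully chosen collection of $\Theta(n)$ well-separated points $\{q_i\}$ with label $-1$. The dispersion and geometry of $\{q_i\}$ will be chosen so that any width-$W_0$ ReLU network with parameters in $[-1,1]$ that outputs sufficiently negative values on many of the $q_i$'s must, by its limited piecewise-linear capacity, also output non-positive values on a macroscopic $\D$-region around $x^\star$.

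For part (1), I would simply exhibit the constant network $\F^\star(x) \equiv c$ for a small positive $c$, which lies in $\Hyp_{W_0}(n)$ (set all non-bias parameters to zero). It correctly classifies the dominant mass, so $A_\D(\F^\star) \ge 0.99$.

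For part (2), I would combine concentration with a geometric capacity argument. With $N \ge \Omega(n^2 \ln(n/\delta))$ samples, a Chernoff bound plus a coupon-collector-type argument gives, with probability $1 - O(\delta)$, both that $\D_{tr}$ contains $\Omega(N/100)$ negative samples and that these samples cover a constant fraction of the $q_i$'s. The constant network $\F^\star$ incurs loss $\Omega(N \ln 2 / 100)$ on the $-1$ training samples, so it is not in $\MH_{W_0}(\D_{tr}, n)$; any $\F \in \MH_{W_0}(\D_{tr}, n)$ achieves strictly smaller empirical loss, which can happen only if $\F$ produces sufficiently negative outputs on a positive fraction of the sampled $q_i$'s. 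Finally, invoking the geometric/capacity property of the construction, any such $\F$ must also have $\F(x) \le 0$ on a set of $\D$-mass at least $0.4$, yielding $A_\D(\F) \le 0.6$.

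The main obstacle is the last step: certifying that \emph{every} empirical-risk minimizer, not just some, has low test accuracy. This reduces to a geometric statement about the expressive limitations of $\Hyp_{W_0}(n)$ with parameters in $[-1,1]$, roughly of the form: no such network can simultaneously place $x^\star$ in its positive region and place many $q_i$'s deep in its negative region. I expect the cleanest route is to fix the minimum-loss value achievable on the training set (a quantity controlled from below by the bounded Lipschitz constant and the geometry of $\{q_i\}$), then show the pre-image of that minimum in parameter space consists entirely of networks whose positive region avoids a large $\D$-mass neighborhood of $x^\star$. Balancing the degradation ($0.99 \to 0.6$) with the sample-complexity constant $n^2 \ln(n/\delta)$ will dictate the exact number of $q_i$'s and the precise placement required.
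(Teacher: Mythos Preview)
Your construction has a fatal flaw. With mass $0.99$ on a single atom $(x^\star,+1)$, the training set contains roughly $0.99N$ copies of that one sample, so the empirical cross-entropy risk is dominated by the term $0.99N\cdot\ln(1+e^{-\F(x^\star)})$. Any candidate $\F$ with $\F(x^\star)\le 0$ has empirical loss at least $0.99N\ln 2\approx 0.686N$, whereas the constant network $\F'\equiv 1\in\Hyp_{W_0}(n)$ has empirical loss at most $0.99N\ln(1+e^{-1})+0.01N\ln(1+e)\approx 0.323N$. Hence no minimizer can have $\F(x^\star)\le 0$, which means every $\F\in\MH_{W_0}(\D_{tr},n)$ classifies $x^\star$ correctly and therefore satisfies $A_\D(\F)\ge 0.99$, not $\le 0.6$. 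The ``geometric capacity'' obstacle you flag at the end is not just the hard step---it is impossible under your imbalance, because the single heavy point will always be fit.

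The paper's construction avoids this by making the distribution \emph{nearly balanced} and by engineering a confidence gap rather than a mass gap. It takes $W_0=1$ and places mass $99/200$ on each of two close parallel slabs $s_1$ (label $+1$, on $\sum x=n/2+c$) and $s_2$ (label $-1$, on $\sum x=n/2-c$), plus mass $1/100$ on a far-away slab $s_3$ (label $-1$, on $\sum x=n-c$). A width-$1$ network separating $s_1$ from $s_2$ attains $A_\D\ge 0.99$, proving (1). For (2), the key is that any width-$1$ network gets only margin $O(c)$ on the paired points of $s_1,s_2$ (a pairing argument gives $L(\F(x_1),1)+L(\F(x_2),-1)\ge 2\ln(1+e^{-c})$), whereas the competitor $-\Relu(\one^\top x-(n/2+c))$ obtains margin $\Theta(n)$ on $s_3$ and zero elsewhere, yielding strictly smaller empirical risk. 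So every ERM must output $\F(x)<0$ for some $x\in s_3$. The finish is a one-dimensional monotonicity fact: for width-$1$ ReLU, along any line the sign changes at most twice with a fixed pattern, so if $\F(x)<0$ at a point of $s_3$ then for each $z\in s_2$ the collinear point $x_z\in s_1$ or $z$ itself must be misclassified, giving $A_\D(\F)\le 0.6$. The $N\ge\Omega(n^2\ln(n/\delta))$ requirement enters through a Rademacher-type uniform deviation bound relating empirical and population loss on $\Hyp_1(n)$, not through coupon collecting.
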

\begin{remark}
  In Proposition \ref{ovf}, $0.99$ can be changed to any real number in $(0,1)$ and $0.6$ can be changed to any real number  in $(0.5,1)$, and the result is still correct.
\end{remark}
By Corollary \ref{yyym}, a large width does not make (2) in Proposition \ref{ovf} true. So, the above conclusion indicates that for some distributions, when the network is not large enough, even if there exist networks with high accuracy, they cannot be found by minimizing the empirical risk.
The distribution considered here contains some outliers. In order to fit these outliers, the small network must reduce generalization.

\subsection{The impact of Loss Function}
\label{cc3}

In order to ensure generalizability of the network after empirical risk minimization, it is necessary to choose an appropriate loss function because minimizing some types of loss function is not good for generalization. In the previous sections, we mainly discussed the crossentropy loss function. In this section, we point out that not all loss functions can reach conclusions similar to Theorem \ref{th1}.

\begin{definition}
\label{bdl}
   We say that the loss function $L_b:\R^2\to\R$ is bad if (1) or (2) is valid.

    (1) There exist $x_{-1},x_1\in \R$ such that
    $L_b(x_{-1},-1)=\min_{x\in \R}L_b(x,-1)$ and $L_b(x_{1},1)=\min_{x\in \R}L_b(x,1)$.

    (2) $L_b(\F(x),y)=\phi(y\F(x))$, where $\phi$ is a strictly decreasing concave function.
\end{definition}

Condition (1) in the definition means that the loss function can reach its minimum value and condition (2) means that the loss function is a concave function.
Some commonly used loss functions, such as the MSE loss  $L_{\MSE}(\F(x),y)=||\F(x)-y||_2$, or $L_{q}(\F(x),y)=-y\F(x)$, are all bad loss functions.

For such bad loss functions, we have
\begin{Theorem}
\label{lossf}
For any $n$ and bad loss function $L_b$, there is a distribution $\D\in \D(n)$ satisfying the following property. For any $N\ge 0$, there is a $W_{0}\ge 0$, such that if $W\ge W_{0}$, then with probability $0.99$ of $\D_{tr}\sim \D^{N}$, we have $A_\D(\F)\le 0.5$ for some $\F\in \arg\min_{\G\in\Hyp_{W}(n)}\sum_{(x,y)\in \D_{tr}}L_b(\G(x),y)$.
\end{Theorem}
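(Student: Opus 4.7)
I fix $\D$ to be the uniform law on $\{x\in[0,1]^n : x_1\le 1/4\}\cup\{x\in[0,1]^n : x_1\ge 3/4\}$, labelling the left slab $-1$ and the right slab $+1$; this $\D$ lies in $\D(n)$ (separation bound $1/2$) and has class probabilities $1/2,1/2$. My strategy is to pick any $\F_0\in\arg\min_{\G\in\Hyp_W(n)}\sum_{(x,y)\in\D_{tr}}L_b(\G(x),y)$ (which exists because $L_b$ is continuous in the parameter cube $[-1,1]^{W(n+2)+1}$, by the same compactness argument that proves the existence proposition for the cross-entropy case), record the training values $v_i:=\F_0(x_i)$, and then exhibit a \emph{different} two-layer ReLU $\F$ with the same training values $\F(x_i)=v_i$ (hence identical empirical risk, so also an ERM) but with $\sign(\F(x))\neq y$ on almost all of $\mathrm{supp}(\D)$. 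Concretely $\F$ has base value $+1$ on the negative slab (wrong sign for $y=-1$) and $-1$ on the positive slab (wrong sign for $y=+1$), together with narrow ReLU bumps centred at each $x_i$ that restore $\F(x_i)=v_i$. Training points carry zero $\D$-mass, so $A_\D(\F)$ equals the $\D$-mass inside the bumps, which will be $o(1)$; in particular $A_\D(\F)\le 0.5$.

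\paragraph{Why both clauses of ``bad'' admit this construction.}
For Clause~(1) the empirical risk decomposes pointwise, so its global minimum equals $\sum_i L_b(x_{y_i},y_i)$ and the ERM set is precisely $\{\F\in\Hyp_W(n):\F(x_i)=x_{y_i}\text{ for all }i\}$; I take $v_i=x_{y_i}$. For Clause~(2), the strict monotonicity of $\phi$ gives $\sum_i\phi(y_i\F(x_i))\ge N\phi(M_0)$ with equality iff $y_i\F(x_i)=M_0$ for every $i$, where $M_0:=\sup\{m\in\R:\exists\,\F\in\Hyp_W(n)\text{ with }y_i\F(x_i)\ge m\text{ for all }i\}$; the ERM set is therefore $\{\F:\F(x_i)=y_iM_0\}$ and I take $v_i=y_iM_0$. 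In both clauses the ERM set is an affine slice of $\Hyp_W(n)$ parameterised by fixed training values, so any sign-flipped $\F$ inside that slice suffices.

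\paragraph{Main obstacle.}
The delicate step is realising the slab-base-plus-bumps function as a single two-layer ReLU with weights in $[-1,1]$ and width $W$. The per-neuron slope cap $|a_kw_k|\le 1$ implies that a bump of amplitude $A$ and width $\delta$ costs $\Omega(A/\delta)$ neurons, so one has to trade off $|v_i|$, $\delta$ and $W$. Under Clause~(2), $|v_i|=M_0$ may scale with $W$, but a Lipschitz argument on any pair of training points at separation $d$ forces $M_0\le O(Wnd)$, so $M_0$ stays small relative to $W$ whenever $d$ is small. Since $\D$ has a continuous density, with probability at least $0.99$ over $\D_{tr}\sim\D^N$ the minimum pairwise separation $d_{\min}(\D_{tr})$ is at least some explicit $d_\ast(N)>0$. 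Taking $W_0=\poly(N,1/d_\ast(N),\max_i|v_i|)$ then makes $N$ disjoint bumps of width $\delta\ll d_\ast(N)/N$ fit inside the width budget $W$, while keeping the total bump-covered $\D$-mass at $O(N\delta)=o(1)$. This slope/neuron accounting is the step I expect to require the most care.
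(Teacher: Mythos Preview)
Your Clause~(1) plan is close in spirit to the paper's: both exploit that, once $L_b$ attains its pointwise minima at fixed targets $x_{y_i}$, the ERM set is exactly the interpolation set, so one only has to exhibit \emph{some} interpolant that is wrong on half of $\D$. The paper does this more cleanly by dropping the label-carrying coordinate: it interpolates the projected points $(x_i)_t\in[0,1]^{n-1}$ to the targets $z_{y_i}$ (its Lemma~\ref{txx}), then lifts the resulting width-$O(|z_1|{+}|z_{-1}|\,N^5n^2)$ network back to $\R^n$ by padding a zero in the first slot. The lifted network is automatically constant along the first axis, hence gives the same sign to both labels and has accuracy exactly $0.5$; no bump-mass bookkeeping is needed. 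Your base-plus-bumps variant should also work for Clause~(1), since there the amplitudes $|x_{y_i}|$ are $W$-independent constants.

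The real gap is Clause~(2). Your claim that strict monotonicity of $\phi$ yields $\sum_i\phi(y_i\F(x_i))\ge N\phi(M_0)$ with equality iff all margins equal $M_0$ is false for concave $\phi$: Jensen runs the other way, and concave decreasing losses reward pushing some margins far past $M_0$ at the expense of others. Consequently the ERM set is \emph{not} $\{\F:\F(x_i)=y_iM_0\}$, and even your fallback (copy the values $v_i=\F_0(x_i)$ of an arbitrary ERM) breaks: for concave $\phi$ any ERM will drive some $|v_i|$ up to the hypothesis-class ceiling $\Theta(Wn)$. Your bump budget is then $\Theta(N|v_i|/\delta)=\Theta(NWn/\delta)$ neurons, which must be $\le W$, forcing $\delta\ge\Omega(Nn)$---far larger than the unit cube. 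The Lipschitz bound you invoke, $M_0\le O(Wnd)$, uses the \emph{cross-label} gap $d$, which on your slab distribution is always $\ge 1/2$, so it does not make $M_0$ small; and in any case the actual ERM values are not $y_iM_0$. This is a genuine circularity: $W_0$ cannot be chosen to dominate a quantity that itself scales with $W$.

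The paper sidesteps all of this for Clause~(2) by changing the distribution to the two-atom law on $\{(0,-1),(\one,1)\}$ and arguing directly that the ``all-ones'' network $\F^\star(x)=\sum_{i=1}^W\Relu(\one^\top x+1)+1$ lies in the argmin for every $W$: if some competitor $\F_0$ with $\F_0(0)=b$, $\F_0(\one)=a$ beat it, concavity together with $\phi$ decreasing would force $a-b>Wn$, contradicting the two-layer Lipschitz bound $|a-b|\le Wn$. Since $\F^\star(0)>0$, it mislabels $(0,-1)$ and $A_\D(\F^\star)=0.5$. Note that here concavity is used to \emph{certify} a specific extreme network as optimal, which is exactly the opposite of the max--min balancing you assumed.
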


This theorem means that to ensure generalizability, it is important to choose the appropriate loss function. The proof is given in the Appendix \ref{p8}.

\section{Conclusion}
In this paper, we give a lower bound for the population accuracy of the neural networks that minimize the empirical risk, which implies that as long as there exist enough training data and the network is large enough, generalization can be achieved. The data and network sizes required only depend on the size required for the network to represent the target data distribution. Furthermore, we show that if the scale required for the network to represent a data distribution increases, the amount of data required to achieve generalization on that distribution will also inevitably increase. Finally, the results are used to explain some phenomena in deep learning.

{\bf Limitation and future work.}
Although considering 2 layer networks is quite common in theoretical analysis of deep learning, it is still desirable to extend the result to deep neural networks.
Preliminary results for deep neural networks are given in Appendix \ref{ybn}, which need to be further studied.
%
A more accurate estimate of the cost required to represent a given data distribution is needed.
\section*{Acknowledgments}
This work is supported by CAS Project for Young Scientists in Basic Research, Grant No.YSBR-040, ISCAS New Cultivation Project ISCAS-PYFX-202201, and ISCAS Basic Research ISCAS-JCZD-202302.
This work is also supported by NSFC grant No.12288201, NKRDP grant No.2018YFA0704705, and grant GJ0090202. The authors  thank anonymous referees for their valuable comments.


\newpage
\appendix

\section{Proof of Proposition \ref{heli}}
\label{p4}
A function $\sigma$ is sigmoidal if
$\hbox{limit}_{x\to-\infty}\sigma(x)=0$ and $\hbox{limit}_{x\to\infty}\sigma(x)=1$. Then, we have

\begin{Theorem}[Theorem 1 in \citet{G1989}]
\label{g19}
For any continuous sigmoidal activation function $\sigma$, $\epsilon\in(0,1)$ and continuous function $f:[0,1]^n\to \R$, there exist $W\ge0$ and $F\in\Hyp_W^\sigma(n)$ such that $|f(x)-F(x)|\le \epsilon$.
\end{Theorem}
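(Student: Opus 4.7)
The plan is to follow Cybenko's original strategy and then adapt it to the bounded-parameter hypothesis class $\Hyp^\sigma_W(n)$. The argument naturally splits into two stages: first establish density of general-parameter sigmoid networks in $C([0,1]^n)$, and then reduce to parameters in $[-1,1]$ at the cost of enlarging the width $W$.

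For the first stage I would proceed by contradiction via Hahn--Banach. Let $S\subset C([0,1]^n)$ denote the closed linear span of $\{\sigma(y^\top x+\theta):y\in\R^n,\,\theta\in\R\}$. If $S\neq C([0,1]^n)$, there exists a non-zero bounded linear functional on $C([0,1]^n)$ vanishing on $S$, and by the Riesz representation theorem it is integration against a non-zero signed Borel measure $\mu$ on $[0,1]^n$. Thus $\int \sigma(y^\top x+\theta)\,d\mu(x)=0$ for every $y,\theta$. Replacing $y$ by $\lambda y$ and letting $\lambda\to\infty$, the sigmoidal hypothesis on $\sigma$ together with the dominated convergence theorem forces $\mu$ to vanish on every affine half-space of $[0,1]^n$, and the fact that signed Borel measures are determined by their values on half-spaces yields $\mu=0$, a contradiction. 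This step supplies an approximator $G(x)=\sum_{j=1}^{W_1}\alpha_j\sigma(y_j^\top x+\theta_j)$ with $\|f-G\|_\infty<\epsilon/2$ but with unrestricted real parameters.

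For the second stage I would reshape $G$ into an element of $\Hyp^\sigma_W(n)$. Outer coefficients $\alpha_j$ with $|\alpha_j|>1$ are handled by replicating the corresponding neuron $K_j=\lceil|\alpha_j|\rceil$ times with outer coefficient $\alpha_j/K_j\in[-1,1]$; the constant term is split the same way. Inner parameters $(y_j,\theta_j)$ with $\|(y_j,\theta_j)\|_\infty>1$ are the main obstacle: my plan is to approximate the univariate map $t\mapsto\sigma(t)$ on the compact range traced out by $y_j^\top x+\theta_j$ as $x$ varies over $[0,1]^n$ by a bounded-parameter univariate sigmoid network, exploiting even/odd combinations $\sigma(wt+b)\pm\sigma(-wt+b)$ to extract low-order polynomial building blocks with small weights when $\sigma$ is smooth, and then lifting to merely continuous sigmoidal $\sigma$ by a mollification step; pulling this approximation back through the affine map $x\mapsto y_j^\top x+\theta_j$ after a rescaling keeps the effective inner weights and biases in $[-1,1]$.

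The hardest step is exactly this final reduction to $[-1,1]$-bounded inner weights, because unlike the ReLU case there is no scaling identity $\sigma(\lambda t)=\lambda\sigma(t)$ available for general continuous sigmoidal $\sigma$, so the reduction rests on a non-trivial self-similar approximation of $\sigma$ by itself at smaller parameter scales. If this obstruction turns out to be unavoidable for some pathological $\sigma$, a fallback would be to prove the theorem only for a natural subclass (e.g., Lipschitz or $C^1$ sigmoidal $\sigma$), which is already sufficient for every application of Theorem~\ref{g19} made in the paper.
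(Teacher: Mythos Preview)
The paper does not prove this statement at all: it is simply quoted as Theorem~1 of Cybenko (1989) and then invoked as a black box in the proof of Proposition~\ref{heli}. Your Stage~1 is a faithful sketch of Cybenko's original Hahn--Banach/Riesz argument, so in that sense you have reproduced the ``paper's proof'' (i.e., the proof in the cited reference).

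Your Stage~2, however, is chasing a phantom. The theorem as printed in the paper uses the symbol $\Hyp_W^\sigma(n)$, which by the paper's own conventions means parameters in $[-1,1]$, but this is a notational slip: Cybenko's theorem is for \emph{unrestricted} real parameters, and that is exactly how the paper uses it. Look at the proof of Proposition~\ref{heli} immediately following the statement: the authors introduce a separate symbol $Z_W^\sigma(n)$ for networks with unbounded parameters, apply Theorem~\ref{g19} to obtain $\F\in Z_W^\sigma(n)$, and only \emph{afterwards} rescale to $[-1,1]$-parameters---and they do this rescaling only for $\sigma=\Relu$, exploiting the positive homogeneity $\Relu(\lambda t)=\lambda\,\Relu(t)$ that you correctly note is unavailable for a general sigmoidal $\sigma$.

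So your diagnosis of the ``hardest step'' is accurate, but the resolution is not a clever self-similar approximation of $\sigma$: it is simply that the bounded-parameter claim is neither asserted by Cybenko nor needed by the paper for general sigmoidal $\sigma$. Drop Stage~2 entirely; Stage~1 alone is what is being cited.
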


We prove Proposition \ref{heli} by using the above Theorem.
\begin{proof}
   It is easy to see that $\sigma(x)=\Relu(x+1)-\Relu(x)$ is a continuous sigmoidal activation function.

Denote $Z^\sigma_{W}(n)$ as the set of all two-layer neural networks with input dimension $n$, width $W$, and activation function $\sigma$. For simplicity,  $Z_{W}(n)$ means $Z^{\Relu}_{W}(n)$.

    Firstly, it is easy to see that $Z_W^\sigma(n)\subset Z_{2W}(n)$ for any $W\in\N_+$.

    Then, because $\D$ has a positive separation distance with a different label, there is a continuous function $f$ such that: $f(x)=1$ if $x$ has label 1 in distribution $\D$; $f(x)=-1$ if $x$ has label -1 in distribution $\D$.

    Finally, by Theorem \ref{g19}, there exist a $W$ and a $\F\in Z_W^\sigma(n)$ such that $|\F(x)-f(x)|\le 0.1$ for all $x\in[0,1]^n$. Thus, $\F\in Z^\sigma_{W}(n)\subset Z_{2W}(n)$ and  $P_{(x,y)\sim\D}(y\F(x)\ge0.9)=1$.

    Let the maximum of the absolute value of the parameters of $\F$ be $A$. If $A\le 1$, then $\F$ is what we want. If $A>1$, then we write $\F= a\Relu(Wx+b)+c$, let $\F_A=(a/A)\Relu((W/A)x+b/A)+c/A^2$, then there are $\F_A=a\Relu(Wx+b)/A^2+c/A^2=\F/A^2$, so $\F_A$ is a network whose parameter is in $[-1,1]$ and $\F_A=\F/A^2$. Hence, there are $\F_A\in\Hyp_{2W}(n)$ and $P_{(x,y)\sim\D}(y\F(x)\ge0.9/A^2)=1$. The proposition is proved.
\end{proof}

\section{Proof of Theorem \ref{th1}}
\label{p1}
\subsection{Preparatory results}
We give some definitions of the hypothesis space.
\begin{definition}
\label{clamp}
    For a network $\F:\R^n\to\R$ and an $a>0$, let $\F_{-a,a}(x)=\min\{\max\{-a,\F(x)\},a\}$, that is, clamp $\F$ in $[-a,a]$.    Then for any hypothesis space $\Hyp$, let $\Hyp_{-a,a}=\{\F_{-a,a}\|\F\in H\}$.
\end{definition}

We define the Radermacher complexity.
\begin{definition}
\label{trad}
    For a hypothesis space $\Hyp$ and dataset $\D$, the Radermacher complexity of $\Hyp$ under dataset $\D$ is:
    $$\Rad_{\Hyp}(\D)=\E_{(q_i)_{i=1}^{|\D|}}\,\left[\sup_{\F\in \Hyp}\frac{\sum_{x_i\in \D}q_i \F(x_i)}{|\D|}\right]$$
    where $q_i$ satisfies that $P(q_i=1)=P(q_i=-1)=0.5$ and $q_i$ are i.i.d.
\end{definition}

Here are some results about the Radermacher complexity:
\begin{lemma}
\label{chaNGS}
    For any hypothesis space $\Hyp$, let $\Hyp_{+a}=\{\F+a\|\F\in \Hyp\}$, where $a\in\R$. Then for any hypothesis space $\Hyp$, $a\in\R$ and dataset $\D$, there are $\Rad_{\Hyp}(\D)=\Rad_{\Hyp_{+a}}(\D)$.
\end{lemma}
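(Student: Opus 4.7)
The plan is to unfold the definition of Rademacher complexity for the translated class $\Hyp_{+a}$ and use two elementary facts: (i) a summand inside a supremum that does not depend on the optimization variable can be pulled outside the supremum, and (ii) the Rademacher random variables $q_i$ have mean zero. Both are immediate from Definition \ref{trad}, so this is essentially a bookkeeping exercise and no serious obstacle is anticipated; the statement is the standard shift-invariance property of Rademacher complexity.

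Concretely, I would proceed in three steps. First, using the bijection $\F \mapsto \F + a$ between $\Hyp$ and $\Hyp_{+a}$, unfold
$$\Rad_{\Hyp_{+a}}(\D) = \E_{(q_i)_{i=1}^{|\D|}}\left[\sup_{\F\in\Hyp}\frac{1}{|\D|}\sum_{x_i\in\D} q_i(\F(x_i)+a)\right].$$
Second, split the sum inside the supremum as $\sum_i q_i\F(x_i) + a\sum_i q_i$ and observe that the second summand does not depend on $\F$; hence it passes outside the supremum, giving
$$\Rad_{\Hyp_{+a}}(\D) = \E_{(q_i)}\left[\sup_{\F\in\Hyp}\frac{\sum_i q_i\F(x_i)}{|\D|}\right] + \frac{a}{|\D|}\,\E_{(q_i)}\left[\sum_i q_i\right].$$
Third, linearity of expectation together with $\E[q_i]=0$ makes the second term vanish, while the first is exactly $\Rad_{\Hyp}(\D)$, which yields the claimed equality.

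The only subtle point to check is the validity of moving the $\F$-independent summand outside the supremum, i.e.\ the identity $\sup_{\F}(g(\F) + h) = \sup_{\F} g(\F) + h$ when $h$ does not depend on $\F$; this is immediate since $a\sum_i q_i/|\D|$ is determined by the Rademacher draw alone. No measurability or integrability issues arise, because all relevant sums are finite and the parameters of each $\F\in\Hyp$ lie in a compact box.
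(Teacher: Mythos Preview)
Your proof is correct and is the standard argument for shift-invariance of Rademacher complexity. The paper itself states this lemma without proof, treating it as a known fact, so there is nothing to compare against; your three-step argument (unfold via the bijection $\F\mapsto\F+a$, pull the $\F$-independent term $a\sum_i q_i/|\D|$ outside the supremum, and use $\E[q_i]=0$) is exactly the expected justification.
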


Let the $L_{1,\infty}$ norm of a matrix $W$ be the maximum value of the $L_1$ norm for each row of the matrix $W$.
\begin{lemma}
\label{yzw1}
    Let $\F_{n,d,(L_i),(c_i)}:\R^n\to \R$ be a network with $d$ hidden layers, $L_i$ Lipschitz-continuous activation function for i-th activation function, and the output layer does not contain an activation function. Let $w_i$ be the $i$-th transition matrix and $b_i$ be the
     i-th bias vector. Let the $L_{1,\infty}$ norm of $w_i$ plus the $L_{1,\infty}$ norm of $b_i$ be not more than $c_i$.
Let $\Hyp_{n,d,L_i,c_i}=\{\F_{n,d,L_i,c_i}\}$.
Then when $L_i\ge 1$, $c_i\ge 1$, for any $\{x_i\}_{i=1}^N\subset[0,1]^n$, there are:

$$\Rad_{H_{n,d,L_i,c_i}}(\{x_i\}_{i=1}^N)\le \frac{\Pi_{i=1}^{d}L_i\Pi_{i=1}^{d+1}c_i}{\sqrt{N}}(\sqrt{(d+3)\log(4)}+\sqrt{2\log(2n)}).$$
\end{lemma}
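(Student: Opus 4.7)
The plan is to proceed by induction on the depth $d$, peeling off one layer at a time. For the base case $d=0$, a function in $\Hyp_{n,0,L_i,c_i}$ is affine: $\F(x) = w_1^T x + b_1$ with $\|w_1\|_1 + |b_1| \le c_1$. Using the dual identity $\sup_{\|w\|_1 \le c} w^T v = c\,\|v\|_\infty$ together with Lemma \ref{chaNGS} to absorb the bias shift out of the Rademacher average, the complexity reduces to $c_1\,\E_q\|\tfrac{1}{N}\sum_i q_i x_i\|_\infty$ plus an $O(c_1/\sqrt{N})$ bias-generated correction. Since each coordinate of $\sum_i q_i x_i$ is sub-Gaussian with variance proxy at most $N$ (as $x_i \in [0,1]^n$), Massart's finite-class lemma applied to the $2n$ signed coordinate projections yields the $c_1\sqrt{2\log(2n)/N}$ contribution that matches the $\sqrt{2\log(2n)}$ summand in the stated bound.

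For the inductive step, decompose a depth-$d$ network as $\F(x) = w_{d+1}^T \sigma_d(g(x)) + b_{d+1}$, where $g$ is the vector-valued preactivation of the top hidden layer. The row-wise $L_{1,\infty}$ constraint on the $d$-th weight matrix is independent across rows, so each coordinate of $g$ ranges over the same scalar class $\Hyp_{n,d-1,L_i,c_i}$. Taking the supremum over the top-layer parameters $(w_{d+1}, b_{d+1})$ with $\ell_1$--$\ell_\infty$ duality produces
\[
c_{d+1}\,\max\Bigl(\bigl\|\tfrac{1}{N}\sum_i q_i \sigma_d(g(x_i))\bigr\|_\infty,\ \bigl|\tfrac{1}{N}\sum_i q_i\bigr|\Bigr),
\]
whose expectation is at most the sum of the two pieces. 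The second is $O(1/\sqrt{N})$; the first is bounded, after the standard symmetrization of absolute values, by $2L_d\,\Rad_{\Hyp_{n,d-1,L_i,c_i}}(\{x_i\})$ via the Ledoux--Talagrand contraction inequality (which strips off $\sigma_d$ at cost $L_d$). This produces the recursion
\[
R_d \le L_d\, c_{d+1}\, R_{d-1} + O\bigl(c_{d+1}/\sqrt{N}\bigr).
\]
Unrolling while using $L_i, c_i \ge 1$ produces the multiplicative prefactor $\prod_{i=1}^{d} L_i \prod_{i=1}^{d+1} c_i$; the bias corrections from the $d+1$ layers, together with the Massart factor from the base case, fit inside the bracket $\sqrt{(d+3)\log 4} + \sqrt{2\log(2n)}$.

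The step I expect to be the main obstacle is precisely the last one: accounting for the additive bias corrections so that they aggregate as $\sqrt{d+3}$ rather than the naive $d$. The cleanest way is to avoid peeling the bias terms one layer at a time and instead, at every level, recast the supremum over the pair (top-weight coordinate, top-bias) as an $\ell_\infty$ maximum over a finite index set. Composing these finite maxima across all $d+1$ layers exhibits the entire complexity as a single Massart-style $\ell_\infty$ maximum over a set of size $O(4^{d+3})$, whose contribution is exactly $\sqrt{(d+3)\log 4}/\sqrt{N}$ under sub-Gaussian tails. This interaction between the $L_{1,\infty}$ constraint and the logarithmic finite-class factor is the reason the $L_1$ (rather than $L_2$) norm on weights is essential here, and it is where the constants must be tracked carefully.
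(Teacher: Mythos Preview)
The paper does not actually prove this lemma: immediately after stating it, the paper writes ``The above lemma is an obvious corollary of Theorem~1 in \citep{yzw}'' and moves on. So there is no in-paper argument to compare against; your sketch supplies a proof that the paper simply outsources to the cited reference.

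Your approach---$\ell_1/\ell_\infty$ duality at each layer, Ledoux--Talagrand contraction to strip the activations, and Massart's finite-class lemma for the resulting $\ell_\infty$ maximum---is the standard technique for $L_{1,\infty}$-constrained networks and is essentially what the cited result of Wen et~al.\ does. The outline is sound, and you correctly isolate the real obstacle: peeling layer by layer either accumulates a multiplicative $2^d$ (from symmetrizing absolute values before each contraction---note your own text says ``$2L_d$'' but your displayed recursion drops the $2$) or $d$ additive bias corrections, neither of which matches the target $\sqrt{(d+3)\log 4}$. Your proposed fix---unfolding all the $\ell_1$ suprema simultaneously into a single $\ell_\infty$ maximum over an index set of size $O(4^{d+3}\cdot 2n)$ and applying Massart once---is exactly the device that avoids this blowup. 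The one place that needs care when you write it out is the claim ``composing these finite maxima across all $d+1$ layers'': the coordinates of the hidden layers are functions of $x$, not fixed indices, so the finite enumeration is really over sign choices and weight-vs-bias selections at each layer, with the $2n$ input-coordinate factor entering only at the base. Tracked carefully this reproduces the stated constants, but it is worth spelling out rather than asserting.
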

The above lemma is an obvious corollary of Theorem 1 in \citep{yzw}. By the above two lemmas we can calculate the Radermacher complexity of $\Hyp^\sigma_{W}(n)_{-a,a}$.

\begin{lemma}
\label{radjs}
    Let $\sigma$ be a $L_p$ Lipschitz-continuous activation function and $L_p\ge 1$, and let $\Hyp=\{F(x,y):F(x,y)=y\F(x),\F(x)\in \Hyp^{\sigma}_{W}(n)_{-a,a}\}$ where $a>0$ is given in Definition \ref{clamp}. Then for any $S=\{(x_i,y_i)\}_{i=1}^N\subset [0,1]^n\times \{-1,1\}$, there are
   $$\Rad_{\Hyp}(S)\le\frac{2L_p(n+1)(W+1+a)}{\sqrt{N}}(\sqrt{5\log(4)}+\sqrt{2\log(2n)}).$$
\end{lemma}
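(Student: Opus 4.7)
The plan is to bound $\Rad_{\Hyp}(S)$ in three steps, reducing to a direct application of Lemma~\ref{yzw1}.

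First I would absorb the label factor $y_i$ into the Rademacher variables. Since $y_i\in\{-1,1\}$ and the random signs $q_i$ are i.i.d.\ symmetric $\pm 1$, the collection $\{q_i y_i\}_{i=1}^N$ has the same joint distribution as $\{q_i\}_{i=1}^N$. Consequently
$$\Rad_{\Hyp}(S)\;=\;\E_q\sup_{\F\in \Hyp^{\sigma}_{W}(n)_{-a,a}}\frac{1}{N}\sum_{i=1}^N q_i y_i\F(x_i)\;=\;\Rad_{\Hyp^{\sigma}_{W}(n)_{-a,a}}(\{x_i\}_{i=1}^N),$$
so the task reduces to bounding the Rademacher complexity of the (unlabeled) clamped hypothesis class.

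Second, I would recast each clamped two-layer network as a two-hidden-layer feedforward network that Lemma~\ref{yzw1} applies to. Writing $\F(x)=\sum_{i=1}^W a_i\sigma(W_i x+b_i)+c$, I treat this scalar as the pre-activation of a single auxiliary neuron whose activation is the clamping map $\phi_a(u)=\min\{\max\{-a,u\},a\}$, followed by an identity output layer. The key structural observations are that $\phi_a$ is globally $1$-Lipschitz with $\phi_a(0)=0$, so it qualifies as a Lipschitz activation with constant $L_2=1$. Because every parameter of $\F$ lies in $[-1,1]$, the $L_{1,\infty}$ norms of the three combined weight-and-bias groups satisfy $c_1\le n+1$ (from rows of $W_1$ and entries of $b_1$), $c_2\le W+1$ (from the output weights $(a_i)$ and bias $c$), and $c_3\le 1$ (from the identity read-out).

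Third, I would invoke Lemma~\ref{yzw1} with $d=2$, $L_1=L_p\ge 1$, $L_2=1$, and the $c_i$ above to obtain
$$\Rad_{\Hyp}(S)\;\le\;\frac{L_p(n+1)(W+1)}{\sqrt{N}}\bigl(\sqrt{5\log 4}+\sqrt{2\log(2n)}\bigr),$$
and finally loosen this via the trivial inequality $W+1\le 2(W+1+a)$, valid for $a>0$, to match the stated form. The main obstacle is Step~2: the clamping has to be interpreted as a genuine extra Lipschitz activation layer so that Lemma~\ref{yzw1} fires directly with the right depth and the right $L_{1,\infty}$ bounds at each level. Once this reinterpretation is in place, the bound follows by substitution, and the $+a$ in the final expression is absorbed costlessly.
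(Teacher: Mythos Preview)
Your argument is correct and in fact yields the sharper intermediate bound $\frac{L_p(n+1)(W+1)}{\sqrt{N}}(\sqrt{5\log 4}+\sqrt{2\log(2n)})$ before you loosen it. The paper proceeds differently: rather than treating the clamp $\phi_a$ as an abstract $1$-Lipschitz activation, it realizes $f_{-a,a}$ explicitly in terms of $\Relu$ via
\[
f_{-a,a}(x)=\sum_{i=1}^{k}\bigl(\Relu(f(x)/k+a/k)-\Relu(f(x)/k-a/k)\bigr)-a,\qquad k=\lfloor W/2\rfloor,
\]
then invokes Lemma~\ref{chaNGS} to shift the class by $+a$ and eliminate the trailing constant, and only then applies Lemma~\ref{yzw1} with $d=2$. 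The $k$-fold splitting is there to balance the $L_{1,\infty}$ norms of the second and third layers so that each satisfies $c_i\ge 1$; the product $c_2c_3=\frac{W+1+a}{k}\cdot 2k=2(W+1+a)$ is independent of $k$ and produces the stated constant directly, which is why the factor $2$ and the $+a$ appear naturally in the paper's version but have to be inserted by hand in yours. Your route is more economical---it avoids both the ReLU decomposition and the shift lemma---at the mild cost of introducing a bespoke activation $\phi_a$ rather than staying within the $\sigma$/$\Relu$ family; either way Lemma~\ref{yzw1} carries the weight.
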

\begin{proof}
    First, we have
    $$\Rad_\Hyp(S)=\Rad_\Hyp(\{(x_i,y_i)\}_{i=1}^N)=\E_{(q_i)_{i=1}^{N}}[\sup_{f\in \Hyp^{\sigma}_{W}(n)_{-a,a}}\frac{\sum_{i=1}^Nq_iy_if(x_i)}{|\D|}].$$
    Taking into account the definition of $q_i$ in definition \ref{trad}, there are $\Rad_\Hyp(\{(x_i,y_i)\}_{i=1}^N)=\E_{(q_i)_{i=1}^{N}}[\sup_{f\in \Hyp^{\sigma}_{W}(n)_{-a,a}}\frac{\sum_{i=1}^Nq_if(x_i)}{|D|}]=\Rad_{\Hyp^{\sigma}_{W}(n)_{-a,a}}(\{x_i\}_{i=1}^N)$.

So, we just need to calculate $\Rad_{\Hyp^{\sigma}_{W}(n)_{-a,a}}(\{x_i\}_{i=1}^N)$.

First, for any function $f$ and $a>0$, $k\in N^+$, we have
\begin{equation*}
\begin{array}{ll}
&f_{-a,a}(x)\\
=&\Relu(f(x)+a)-\Relu(f(x)-a)-a\\
=&\sum_{i=1}^k (\Relu(f(x)/k+a/k)-\Relu(f(x)/k-a/k))-a\\
\end{array}
\end{equation*}

On the other hand, let $H_{+a}=\{f+a\|f\in \Hyp^{\sigma}_{W}(n)_{-a,a}\}$. Then for any $F\in \Hyp_{+a}$, there are $F=f_{-a,a}(x)+a$ for some $f\in \Hyp^{\sigma}_{W}(n)$. Then by the above form of expression, take $k=[W/2]$, $F$ and write it as a network with:

(1): Depth 3. Because $f$ has depth 2, after adding a $\Relu$ activation function, it was depth 3.

(2): The first layer has an $L_p$ Lipschitz-continuous activation function; the second layer has a $1$ Lipschitz-continuous activation function, that is, $\Relu$.

(3): The $L_{1,\infty}$ norm of the three transition matrices plus bias vectors are $n+1$, $\frac{W+1+a}{[W/2]}$ and $2[W/2]$, as shown in the below.

The first transition matrices: this layer is the same as the first layer of $f$. Consider the bound of values of parameters of $f$ is not more than 1, and the first transition matrix of
 has $n$ weights in each row, and with the bias added, there are a total of $n+1$ weights, so its norm is $n+1$.

The second transition matrices: Let the second transition matrices of $f$ be $W_f$ and bias be $c$. Then the second transition matrices of $\F$ is $W_f/k$, bias is $c/k+a/k$ or $c/k-a/k$. Using the bound of value of parameters of $f$, the width of $W_f$ is $W+1$, and the value of $k$, so we get the result.

The third transition matrix: It is $(1,1,1,\dots,1,1,-1,-1,-1,\dots,-1,-1)$, where there are
$k$ number of 1 and $k$ number of -1 in it, and we get the result.

So, by Lemmas \ref{yzw1} and \ref{chaNGS}, there are $\Rad_{\Hyp_{+a}}(\{x_i\}_{i=1}^N)=\Rad_{\Hyp^{\sigma}_{W}(n)_{-a,a}}(\{x_i\}_{i=1}^N)=\frac{2L_p(n+1)(W+1+a)}{\sqrt{N}}(\sqrt{5\log(4)}+\sqrt{2\log(2n)})$.
The theorem is proved.
\end{proof}

Another important Theorem is required.
\begin{Theorem}[Theorem in \citet{mohri2018foundations}]
\label{zfh}
    Let $H=\{F:\R^n\to [-a,a]\}$, and $\D$ be a distribution, then with probability $1-\delta$ of $\D_{tr}\sim\D^N$, there are:
    $$|\E_{x\sim\D}[F(x)]-\sum_{x\in \D_{tr}}\frac{F(x)}{N}|\le 2\Rad_{H}(\D_{tr})+6a\sqrt{\frac{\ln(2/\delta)}{2N}},$$
    for any $F\in H$.
\end{Theorem}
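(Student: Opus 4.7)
The plan is to prove this classical Rademacher-complexity uniform convergence bound by the standard three-step argument from the Mohri textbook: McDiarmid concentration of the supremum deviation around its expectation, symmetrization with a ghost sample to control that expectation by the Rademacher complexity, and a second McDiarmid step to localize the Rademacher complexity at the observed sample.

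First I would set $\Phi(S) = \sup_{F\in H}|\E_{x\sim\D}[F(x)] - \frac{1}{N}\sum_{x\in S}F(x)|$ for $S\in([0,1]^n)^N$. Since every $F\in H$ takes values in $[-a,a]$, replacing a single coordinate of $S$ alters the empirical mean by at most $2a/N$ and hence alters $\Phi$ by at most $2a/N$. McDiarmid's bounded-differences inequality then yields
\begin{equation*}
\Phi(\D_{tr}) \le \E_{S\sim\D^N}[\Phi(S)] + 2a\sqrt{\tfrac{\ln(2/\delta)}{2N}}
\end{equation*}
with probability at least $1-\delta/2$.

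Next I would reduce $\E[\Phi(S)]$ to the Rademacher complexity by a standard symmetrization. Introducing an independent ghost sample $S'\sim\D^N$, writing $\E_\D[F] = \E_{S'}[\frac{1}{N}\sum_{x'\in S'}F(x')]$, and pulling the supremum outside by Jensen's inequality gives
\begin{equation*}
\E[\Phi(\D_{tr})] \le \E_{S,S'}\sup_{F\in H}\Bigl|\tfrac{1}{N}\textstyle\sum_{i=1}^N (F(x'_i) - F(x_i))\Bigr|.
\end{equation*}
Because each pair $(x_i,x'_i)$ is exchangeable, injecting Rademacher signs $q_i\in\{\pm 1\}$ preserves the joint distribution; splitting the absolute value and using that $\Rad_H$ is invariant under replacing $H$ by $-H$ (since $q_i$ and $-q_i$ are equidistributed) then yields $\E[\Phi(\D_{tr})] \le 2\,\E_S[\Rad_H(S)]$.

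Finally I would apply McDiarmid once more to $S\mapsto\Rad_H(S)$, which by the same boundedness argument is also $(2a/N)$-bounded in each coordinate, giving $\E_S[\Rad_H(S)] \le \Rad_H(\D_{tr}) + 2a\sqrt{\ln(2/\delta)/(2N)}$ with probability at least $1-\delta/2$. Chaining the three inequalities and applying a union bound delivers $\Phi(\D_{tr}) \le 2\Rad_H(\D_{tr}) + 6a\sqrt{\ln(2/\delta)/(2N)}$ with probability at least $1-\delta$, which is exactly the statement. The main bookkeeping hazard is the absolute-value version of symmetrization, which must be handled either by symmetry of $H$ under negation or by applying the one-sided bound separately to $H$ and $-H$ before union-bounding; everything else uses only the uniform bound $|F|\le a$ and requires no structural assumption on the hypothesis class.
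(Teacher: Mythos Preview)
The paper does not prove this statement at all: it is quoted as a known result from \citet{mohri2018foundations} and used as a black box in the proof of Theorem~\ref{th1}. Your proposal correctly reconstructs the standard textbook argument (McDiarmid on the supremum deviation, symmetrization with a ghost sample, then McDiarmid on the empirical Rademacher complexity), and the constant $6a$ falls out exactly as you indicate from $2a + 2\cdot 2a$ after the union bound.
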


We give a simple lemma below.
\begin{lemma}
\label{lemma1}
   (1): When $0<x\le e$, there are $\ln(1+x)\ge x/(e+1)$.

   (2): When $x>0$, there are $xe^{-x}\le 1/e$.
\end{lemma}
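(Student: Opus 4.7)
For part (2), the plan is a standard one-variable calculus argument. Define $g(x)=xe^{-x}$ on $(0,\infty)$ and compute $g'(x)=(1-x)e^{-x}$. Since $g'>0$ on $(0,1)$ and $g'<0$ on $(1,\infty)$, the maximum of $g$ on $(0,\infty)$ is attained at $x=1$, with value $g(1)=1/e$, which is exactly the claimed bound. This is short enough that I would write out the derivative explicitly and conclude in a single line.

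For part (1), my preferred approach is to exploit the concavity of $\ln(1+x)$ on $[0,\infty)$. Since $\ln(1+\cdot)$ is concave and vanishes at $0$, the chord inequality on $[0,e]$ gives
\begin{equation*}
\ln(1+x)\ \ge\ \frac{\ln(1+e)}{e}\,x\qquad\text{for all }x\in[0,e].
\end{equation*}
It therefore suffices to verify the numerical inequality $\ln(1+e)/e\ \ge\ 1/(e+1)$, equivalently $(e+1)\ln(1+e)\ge e$, i.e.\ $\ln(1+e)\ge e/(e+1)$. Since $e/(e+1)<1$ and $\ln(1+e)>\ln(1+1)=\ln 2>0.69$ while $e/(e+1)<3/4$ (as $4e<3(e+1)$ gives $e<3$), this holds comfortably.

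An alternative I would mention as a backup is to set $h(x)=\ln(1+x)-x/(e+1)$ and check $h(0)=0$ together with $h'(x)=1/(1+x)-1/(e+1)\ge 0$ for $x\le e$; this gives the inequality directly by monotonicity on $[0,e]$, which is arguably cleaner than the chord argument and avoids the small numerical verification. Either way, the whole lemma is routine and I would keep the proof to just a few lines; there is no real obstacle, since both parts are one-variable monotonicity/concavity facts. The main value of stating it as a lemma is simply to have the bounds $\ln(1+x)\ge x/(e+1)$ and $xe^{-x}\le 1/e$ ready for use in later arguments (for instance, when bounding the cross-entropy empirical risk from below or estimating margin-type quantities).
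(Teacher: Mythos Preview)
Your proposal is correct. For part (2) you give exactly the paper's argument. For part (1), the paper uses precisely what you call your ``backup'': it sets $f(x)=\ln(1+x)-x/(e+1)$, notes $f'(x)=1/(1+x)-1/(e+1)\ge 0$ on $(0,e]$, and concludes $f(x)\ge f(0)=0$. Your primary chord/concavity route is valid but, as you yourself observe, requires the extra numerical check $\ln(1+e)\ge e/(e+1)$; the monotonicity argument is cleaner and is what the paper chose.
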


\begin{proof}
    For (1): Consider $f(x)=\ln(1+x)-x/(e+1)$, there are $f'(x)=1/(1+x)-1/(1+e)\ge0$, so $f(x)\ge f(0)=0$, which means that $\ln(1+x)-x/(e+1)\ge0$.

    For (2): Consider $f(x)=xe^{-x}$, there are $f'(x)=e^{-x}(1-x)$, it is easy to see that $f'(x)$ become positive then negative when $x$ from $0$ to $\infty$, and $f'(1)=0$, so $f(x)\le f(1)=1/e$.
\end{proof}

\subsection{Proof of Theorem \ref{th1}}
\begin{proof}
Let $\D_{tr}\sim \D^N$ and $\F$ be a network in $\MH^\sigma_{W}(\D_{tr},n)$. We prove Theorem \ref{th1} in four parts:

{\bf Part one:} We have $\sum_{(x,y)\in\D_{tr}}L(\F(x),y)\le N\ln(1+e^{-c[\frac{W}{W_0+1}]})$.

    Because $\D$ can be expressed by $\Hyp^\sigma_{W_0}(n)$ with confidence $c$, so there is a network $\F_0=\sum_{i=1}^{W_0}a_i\sigma(W_ix+b_i)+c_1$ such that $y\F(x)\ge c$ for all $(x,y)\sim \D$. Moreover, we can write such network as $\F_0=\sum_{i=1}^{W_0+1}a_i\sigma(W_ix+b_i)$, where $a_{W_0+1}=\sign(c_1)$, $W_{W_0+1}=0$, $b_{W_0+1}=|c_1|$.

    Now, we consider the following network in $\Hyp^\sigma_{W}(n)$:

    $$\F_W=\sum_{i=1}^{(W_0+1)[\frac{W}{W_0+1}]}a_{i\%(W_0+1)}\sigma(W_{i\%(W_0+1)}x+b_{i\%(W_0+1)}),$$

    Here, we stipulate that $i\%(W_0+1)=W_0+1$ when $W_0+1|i$. Then we have $\F_W(x)=[\frac{W}{W_0+1}]\F_0(x)$ and $\F_W(x)\in\Hyp_W^{\sigma}(n)$. Moreover, there are $y\F_W(x)=y[\frac{W}{W_0+1}]\F_0(x)\ge [\frac{W}{W_0+1}]c$ for all $(x,y)\sim \D$, so $\sum_{(x,y)\in\D_{tr}}L(\F_W(x),y)\le N\ln(1+e^{-c[\frac{W}{W_0+1}]})$. So for any $\F\in \arg\min_{f\in\Hyp^\sigma_{W}(n)}\sum_{(x,y)\in \D_{tr}}L(f(x),y)$, there are $\sum_{(x,y)\in\D_{tr}}L(\F(x),y)\le\sum_{(x,y)\in\D_{tr}}L(\F_W(x),y)\le  N\ln(1+e^{-c[\frac{W}{W_0+1}]})$.

    {\bf Part Two:} Let $k=[\frac{W}{W_0+1}]$, by the assumption in Theorem, there is $k\ge 1$. We will show that $|\{(x,y):(x,y)\in\D_{tr},y\F(x)\le kc/2\}|\le Ne^{-kc/2+2}$.

    Let $S=\{(x,y):(x,y)\in\D_{tr},y\F(x)\le kc/2\}$, then according to part one, there are: $|S|\ln (1+e^{-kc/2})\le \sum_{(x,y)\in S}L(\F(x),y)\le\sum_{(x,y)\in \D_{tr}}L(\F(x),y)\le N\ln 1+e^{-kc}\le Ne^{-kc}$. So, there are $|S|\ln 1+e^{-kc/2}\le Ne^{-kc}$.

    By Lemma \ref{lemma1}, there are $|S|e^{-kc/2}/(e+1) \le|S|\ln 1+e^{-kc/2}\le Ne^{-kc}$, so $|S|\le Ne^{-kc/2}(e+1)<Ne^{-kc/2+2}$.

    {\bf Part Three:} By Definition \ref{clamp}, let network $g=\F_{-kc/2,kc/2}$, we show that, with high probability, $\E_{(x,y)\sim \D}yg(x)$ has a lower bound.

    Firstly, by part two, there are $\sum_{(x,y)\in\D_{tr}}yg(x)\ge N(kc(1-e^{-kc/2+2})/2-kce^{-kc/2+2}/2)=Nkc(1-2e^{-kc/2+2})/2$.

    Then, let $H=\{y\F(x):\F(x)\in \Hyp_W^\sigma(n)_{-kc/2,kc/2}\}$,
    by Lemma \ref{radjs}, there are $Rad_{H}(\D_{tr})\le \frac{2(n+1)(W+1+kc/2)L_p}{\sqrt{N}}(\sqrt{5\log(4)}+\sqrt{2\log(2n)})$, $Rad_{H}(\D_{tr})$ is defined in definition \ref{trad}.

    So, considering that $yg(x)\in H$ and by Theorem \ref{zfh}, with probability $1-\delta$ of $\D_{tr}$, there are
    \begin{equation*}
\begin{array}{ll}
&\E_{(x,y)\sim \D}yg(x)\\
\ge&\frac{1}{N}\sum_{(x,y)\in\D_{tr}}yg(x)-2\Rad([\Hyp_W^\sigma(n)]_{-kc/2,kc})-3kc\sqrt{\frac{\ln(2/\delta)}{2N}}\\
\ge& kc(1-2e^{-kc/2+2})/2-\frac{2(n+1)L_p(W+1+kc/2)}{\sqrt{N}}(\sqrt{5\log(4)}+\sqrt{2\log(2n)})-3kc\sqrt{\frac{\ln(2/\delta)}{2N}}.\\
\end{array}
\end{equation*}

    {\bf Part Four:} Now, we prove Theorem \ref{th1}.

    Firstly, there are $A_\D(g)=\Pr_{(x,y)\sim\D}(yg(x)>0)\ge \E_{(x,y)\sim \D}[yg(x)]/(kc/2)$, we use $|g(x)|\le kc/2$ in here. So, by part three, with probability of $\D_{tr}$, there are $$A_\D(g)\ge 1-2e^{-kc/2+2}-\frac{4(n+1)L_p(W+1+kc/2)}{\sqrt{N}kc}(\sqrt{5\log(4)}+\sqrt{2\log(2n)})-6\sqrt{\frac{\ln(2/\delta)}{2N}}.$$

By Lemma \ref{lemma1} and $k=[W/(W_0+1)]\ge \frac{W}{2W_0}$ which is because $[W/(W_0+1)]=k\ge1$ and $W_0\ge 2$, there are $2e^{-kc/2+2}\le \frac{4e}{kc}=\frac{4e}{c[\frac{W}{W_0+1}]}\le \frac{8eW_0}{Wc}$; and it is easy to see that $\frac{4(n+1)L_p(W+1+kc/2)}{\sqrt{N}kc}\le\frac{4(n+1)WL_p(2+kc/2W)}{\sqrt{N}kc}\le\frac{8nWL_p(2+c/2W_0)}{\sqrt{N}[W/(W_0+1)]c}\le \frac{8nL_p(4W_0+c)}{\sqrt{N}c}$, the last inequality uses $[W/(W_0+1)]\ge \frac{W}{2W_0}$.

%
%
The last step uses $k=[W/(W_0+1)]\ge \frac{W}{2W_0}$. And $\sqrt{5\log(4)}+\sqrt{2\log(2n)}\le (\sqrt{5}+\sqrt{2})\sqrt{\log(4n)}$. So there are:

$$A_\D(g)\ge 1-\frac{8eW_0}{Wc}-\frac{8nL_p(1+4\frac{W_0}{c})}{\sqrt{N}}(\sqrt{5}+\sqrt{2})\sqrt{\log(4n)}-6\sqrt{\frac{\ln(2/\delta)}{2N}}.$$

Lastly, because $A_\D(g)=A_\D(\F)$, we have $A_\D(\F)\ge 1-O(\frac{W_0}{Wc}+ \frac{nL_p(W_0+c)\sqrt{\log(4n)}}{\sqrt{N}c}+\sqrt{\frac{\ln(2/\delta)}{N}})$.

The theorem is proved.
\end{proof}

\section{Proof of Theorem \ref{q-jins}}
\label{qj}
The proof is similar to the proof of Theorem \ref{th1}, so we just follow the proof of Theorem \ref{th1}.

\begin{proof}
    Let $\D_{tr}\sim \D^N$, $\F$ be a network in $\MH^\sigma_{W}(\D_{tr},n)$, and $\F_q$ be a network that is a $q$ approximation of the empirical risk minimization.

    We prove Theorem \ref{q-jins} in four parts below.

    {\bf Part one:} It holds $\sum_{(x,y)\in\D_{tr}}L(\F(x),y)\le N\ln(1+e^{-c[\frac{W}{W_0+1}]})$. This is the same as in Part one in the proof of Theorem \ref{th1}

     {\bf Part Two:} Let $k=[\frac{W}{W_0+1}]\ge 1$. Then, $|\{(x,y):(x,y)\in\D_{tr},y\F_q(x)\le kc/2\}|\le qNe^{-kc/2+2}$.

    Let $S=\{(x,y):(x,y)\in\D_{tr},y\F_q(x)\le kc/2\}$, then according to part one, there are: $|S|\ln (1+e^{-kc/2})\le \sum_{(x,y)\in S}L(\F_q(x),y)\le q\sum_{(x,y)\in \D_{tr}}L(\F(x),y)\le qN\ln 1+e^{-kc}\le qNe^{-kc}$. So, there are $|S|\ln 1+e^{-kc/2}\le qNe^{-kc}$.

    By Lemma \ref{lemma1}, there are $|S|e^{-kc/2}/(e+1) \le|S|\ln 1+e^{-kc/2}\le qNe^{-kc}$, so $|S|\le qNe^{-kc/2}(e+1)<qNe^{-kc/2+2}$.

     {\bf Part Three:} By Definition \ref{clamp}, let network $g=(\F_q)_{-kc/2,kc/2}$. We will show that, with high probability, $\E_{(x,y)\sim \D}yg(x)$ has a lower bound.

    Firstly, by part two, we have $\sum_{(x,y)\in\D_{tr}}yg(x)\ge N(kc(1-qe^{-kc/2+2})/2-qkce^{-kc/2+2}/2)=Nkc(1-2qe^{-kc/2+2})/2$.

 So, with probability $1-\delta$ of $\D_{tr}$, it holds that
\begin{equation*}
\begin{array}{ll}
&\E_{(x,y)\sim \D}yg(x)\\
\ge&\frac{1}{N}\sum_{(x,y)\in\D_{tr}}yg(x)-2\Rad([\Hyp_W^\sigma(n)]_{-kc/2,kc})-3kc\sqrt{\frac{\ln(2/\delta)}{2N}}\\
\ge& kc(1-2qe^{-kc/2+2})/2-\frac{2(n+1)L_p(W+1+kc/2)}{\sqrt{N}}(\sqrt{5\log(4)}+\sqrt{2\log(2n)})-3kc\sqrt{\frac{\ln(2/\delta)}{2N}}.\\
\end{array}
\end{equation*}

 {\bf Part Four:} Now, we prove Proposition \ref{q-jins}.

    Firstly, there are $A_\D(g)=\Pr_{(x,y)\sim\D}(yg(x)>0)\ge \E_{(x,y)\sim \D}[yg(x)]/(kc/2)$. So, by part three, with $1-\delta$ probability of $\D_{tr}$, there are $$A_\D(g)\ge 1-2qe^{-kc/2+2}-\frac{4(n+1)L_p(W+1+kc/2)}{\sqrt{N}kc}(\sqrt{5\log(4)}+\sqrt{2\log(2n)})-6\sqrt{\frac{\ln(2/\delta)}{2N}}.$$

    Then, similar as part four in proof of Theorem \ref{th1}, there are
    $$A_\D(g)\ge 1-\frac{8qeW_0}{Wc}-\frac{8nL_p(1+4\frac{W_0}{c})}{\sqrt{N}}(\sqrt{5}+\sqrt{2})\sqrt{\log(4n)}-6\sqrt{\frac{\ln(2/\delta)}{2N}},$$
    which is what we want.
\end{proof}

\section{Proof of Theorem \ref{yqq}}
\label{yqd}
\begin{proof}
Assume that Theorem \ref{yqq} is wrong, then there exist $n$, $W$ and $W_0$ such that
for given $\epsilon,\delta\in(0,1)$, if $\D\in\D(n)$ and $N\ge \VC(\Hyp_{W_0}^\sigma(n))(1-4\epsilon-\delta)$, with probability $1-\delta$ of $\D_{tr}$, we have $A_\D(\F)\ge1-\epsilon$ for all $\F\in\arg\min_{f\in\Hyp_{W}(n)}\sum_{(x,y)\in \D_{tr}}L(f(x),y)$.

We will derive contradictions on the basis of this conclusion.

{\bf Part 1: Find some points and values.}

For a simple expression, let $k=\VC(\Hyp_{W_0}^\sigma(n))$, and $\{u_i\}_{i=1}^{k}$ be $k$ points that can be shattered by $\VC(\Hyp_{W_0}^\sigma(n))$. Let $q=\VC(\Hyp_{W_0}^\sigma(n))(1-4\epsilon-\delta)$.

Now, we consider the following types of distribution $\D$:

    (c1): $\D$ is a distribution in $\D(n)$ and $\Pr_{(x,y)\sim\D}(x\in\{u_i\}_{i=1}^k)=1$.

    (c2): $\Pr_{(x,y)\sim\D}(x=u_i)=\Pr_{(x,y)\sim\D}(x=u_j)=1/k$ for any $i,j\in[k]$.

Let $S$ be the set that contains all such distributions, and it is easy to see that for any $\D\in S$, it can be expressed by $\Hyp^\sigma_{W_0}(n)$.

{\bf Part 2: Some definition.}

Moreover, for $\D\in S$, we define $S(\D)$ as the following set:

$Z\in S(\D)$ if and only if $Z\in [k]^q$ is a vector satisfying: Define $D(Z)$ as  $D(Z)=\{(u_{z_i},y_{z_i})\}_{i=1}^q$, then $A_\D(\F)\ge 1-\epsilon$ for all $\F\in\arg\min_{f\in\Hyp_{W}(n)}\sum_{(x,y)\in D_{Z}}L(f(x),y)$, where $z_i$ is the $i$-th weight of $Z$ and $y_{z_i}$ is the label of $u_{z_i}$ in distribution $\D$.

It is easy to see that if we i.i.d. select $q$ samples in distribution $\D$ to form a dataset $\D_{\tr}$, then by $c2$, with probability 1, $\D_{\tr}$ only contain the samples $(u_j,y_j)$ where $j\in[k]$.


Now for any $\D_{tr}$ selected from $\D$, we construct a vector in $[k]^q$ as follows: the index of $i$-th selected samples as the $i$-th component of the vector. Then each selection situation corresponds to a vector in $[k]^q$ which is constructed as before. Then by the definition of $S(\D)$, we have $A_\D(\F)\ge 1-\epsilon$ for all $\F\in\arg\min_{f\in\Hyp_{W}(n)}\sum_{(x,y)\in \D_{tr}}L(f(x),y)$ if and only if the corresponding vector of $\D_{\tr}$ is in $S(\D)$.

By the above result and by the assumption at the beginning of the proof,  for any $\D\in S$ we have $\frac{|S(\D)|}{q^k}\ge 1-\delta$.

{\bf Part 3: Prove the theorem.}

Let $S_s$ be a subset of $S$, and $S_s=\{\D_{i_1,i_2,\dots,i_k}\}_{i_j\in\{-1,1\},j\in[k]}\subset S$, where the distribution $\D_{i_1,i_2,\dots,i_k}$ satisfies the label of $u_j$ is $i_j$, where $j\in[k]$.

We will show that there exists at least one $\D\subset S_s$, such that $|S(\D)|<(1-\delta)q^k$, which is contrary to the inequality $\frac{|S(\D)|}{q^k}\ge 1-\delta$ as shown in the above. To prove that, we only need to prove that $\sum_{\D\in S_s}|S(\D)|<(1-\delta)2^kq^k$, use $|S_s|=2^k$ here.

To prove that, for any vector $Z\in [k]^q$, we estimate how many $\D\in S_s$ make $Z$ included in $S(\D)$.

{\bf Part 3.1, situation of a given vector $Z$ and a given distribution $\D$.}

For a $Z=(z_i)_{i=1}^q$ and $\D$ such that $Z\in S(\D)$, let $\len(Z)=\{c\in[k]:\exists i,c=z_i\}$. We consider the distributions in $S_s$ that satisfy the following condition: for $i\in \len(Z)$, the label of $u_i$ is equal to the label of $u_i$ in $\D$. Obviously, we have $2^{k-|\len(Z)|}$ distributions that can satisfy the above condition in $S_s$. Let such distributions make up a set $S_{ss}(\D,Z)$. Now, we estimate how many distributions $\D_s$ in $S_{ss}(\D,Z)$ satisfy $Z\in S(\D_s)$.


It is easy to see that if $\D_s\in S_{ss}(\D,Z)$ such that there are more than $[2k\epsilon]$ of $i\in[k]$, $\D_s$ and $D$ have different labels of $u_i$, then $\min\{A_\D(\F),A_{\D_s}(\F)\}< 1-\epsilon$ for any $\F$. So considering $A_\D(\F)\ge1-\epsilon$ for all $\F\in\arg\min_{f\in\Hyp_{W}(n)}\sum_{(x,y)\in D_{Z}}L(f(x),y)$, by the above result, such kind of $\D_s$ is at most $\sum_{i=0}^{[2k\epsilon]}C_{k-|\len(Z)|}^i$. So, we have that: There are at most $\sum_{i=0}^{[2k\epsilon]}C_{k-|\len(Z)|}^i$ numbers of distributions $\D_s$ in $S_{ss}(\D,Z)$ satisfy $Z\in S(\D_s)$.

{\bf Part 3.2, for any vector $Z$ and distribution $\D$.}

For any distribution $\D\in S_s$, let $y(\D)_i$ be the label of $u_i$ in distribution $\D$.

Firstly, for a given $Z$, we have at most $2^{|len(Z)|}$ different $\S_{ss}(\D,Z)$ for $\D\in \D_S$. Because when $\D_1$ and $\D_2$ satisfy $y(\D_1)_i=y(\D_2)_i$ for any $i\in \len(Z)$, we have $\D_{ss}(\D_1,Z)=\D_{ss}(\D_2,Z)$, and $2^{|\len(Z)|}$ situations of label of $u_i$ where $i\in \len(Z)$, so there exist at most $2^{|\len(Z)|}$ different $\S_{ss}(\D,Z)$.

Then, by part 3.1, for an $\S_{ss}(\D,Z)$, at most $\sum_{i=0}^{[2k\epsilon]}C_{k-|\len(Z)|}^i$ of $\D_s\in S_{ss}(\D,Z)$ satisfies $Z\in S(\D_s)$.
So by the above result and consider that $\D_s=\cup_{\D\in\D_s}\S_{ss}(\D,Z)$, at most $2^{|\len(Z)|}\sum_{i=0}^{[2k\epsilon]}C_{k-|\len(Z)|}^i$ number of $\D_s\in S_s$ such that $Z\in S(\D_s)$.

And there exist $q^k$ different $Z$, so $\sum_{\D\in S_s}|S(\D)|=\sum_Z\sum_{\D\in S_s}I(Z\in S(\D))\le \sum_{Z} 2^{|\len(Z)|}\sum_{i=0}^{[2k\epsilon]}C_{k-|\len(Z)|}^i\le \sum_{Z}2^k(1-\delta)=q^k2^k(1-\delta)$. For the last inequality, we use $\sum_{i=0}^{[2k\epsilon]}C_{k-|\len(Z)|}^i< 2^{k-|\len(Z)|}(1-\delta)$, which can be shown by $|\len(Z)|\le q\le k(1-4\epsilon-\delta)$ and Lemma \ref{dfg2}.

This is what we want. We proved the theorem.
\end{proof}
A required lemma is given.
\begin{lemma}
\label{dfg2}
If $\epsilon,\delta\in(0,1)$ and $k,x\in\Z_+$ satisfy that: $x\le k(1-2\epsilon-\delta)$, then $2^x(\sum_{j=0}^{[k\epsilon]}C_{k-x}^j)<2^k(1-\delta)$.
\end{lemma}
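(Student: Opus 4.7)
I would reformulate the inequality probabilistically and reduce it to a tail estimate for a symmetric binomial. Writing $n = k-x$ and $m = \lfloor k\epsilon \rfloor$, dividing both sides by $2^k$ turns the desired inequality into $\Pr[Y \le m] < 1-\delta$ with $Y \sim \mathrm{Bin}(n, 1/2)$. The hypothesis $x \le k(1-2\epsilon-\delta)$ rearranges to $n \ge 2k\epsilon + k\delta \ge 2m + k\delta$, which says the threshold $m$ sits at least $k\delta/2$ below the mean $n/2$ of $Y$; moreover, since $n$ is an integer and $k\delta > 0$, we have $n \ge 2m+1$, so $m < n/2$.

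My plan is then to split on the size of $\delta$. For $\delta \le 1/2$ I would use only the symmetry of $Y$: the events $\{Y \le m\}$ and $\{Y \ge n-m\}$ are disjoint (since $m < n-m$) and equiprobable, so $\Pr[Y \le m] \le 1/2 \le 1-\delta$. Strictness follows automatically when $\delta < 1/2$, and when $\delta = 1/2$ from the fact that the middle interval $(m, n-m)$ carries positive mass — which holds once $n \ge 2m+2$, and the hypothesis forces $k\delta > 1$ in this subcase so that $n \ge 2m+2$ is guaranteed.

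For $\delta > 1/2$ the symmetry bound is insufficient (we need $\Pr[Y \le m] < 1-\delta < 1/2$), and I would instead apply Hoeffding's inequality $\Pr[Y \le n/2 - t] \le e^{-2t^2/n}$ with $t = (n-2m)/2 \ge k\delta/2$. Combined with $n \le k$, this yields
\[
\Pr[Y \le m] \le \exp\!\Bigl(-\tfrac{(n-2m)^2}{2n}\Bigr) \le \exp\!\Bigl(-\tfrac{k\delta^2}{2}\Bigr).
\]
The hypothesis simultaneously forces $k \ge 1/(1-2\epsilon-\delta)$ (because $x\ge 1$), and this lower bound is the lever I would use to certify $\exp(-k\delta^2/2) < 1-\delta$, i.e.\ $k\delta^2/2 > -\ln(1-\delta)$, in the range where it matters.

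The main obstacle I anticipate is the narrow band just above $\delta = 1/2$, where the symmetry estimate is exactly $1/2$ (too loose) while Hoeffding's exponential bound is not yet small enough because the hypothesis only pins $k$ to be a small integer. To plug this gap I would fall back to the sharper entropy bound $\sum_{j\le m}\binom{n}{j} \le 2^{n H(m/n)}$, equivalently the Chernoff form $\Pr[Y \le m] \le e^{-n D(m/n\,\|\,1/2)}$, which via Pinsker's inequality $D(p\|1/2) \ge 2(p-1/2)^2$ already recovers Hoeffding but is tight for the small-$m$ corner (and reduces to the exact identity $\Pr[Y=0]=2^{-n}$ when $m=0$, which is the worst case). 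Covering the residual finitely many admissible $(k,x,\epsilon,\delta)$ by direct verification then completes the argument.
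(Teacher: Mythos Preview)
Your probabilistic reformulation is correct, but the route you then take is far more elaborate than the paper's. The paper argues in a single chain: with $n=k-x$ and $m=[k\epsilon]$, it asserts the elementary tail estimate $\sum_{j=0}^{m}\binom{n}{j}\le 2^{n}\cdot\tfrac{m}{n}$ (using only $m\le n/2$, which follows from the hypothesis) and then checks $\tfrac{m}{n}\le\tfrac{k\epsilon}{k-x}<1-\delta$ by straight algebra from $x\le k(1-2\epsilon-\delta)$. There is no split on $\delta$, no Hoeffding, no entropy bound, and no residual verification. What the paper's approach buys is brevity; what yours buys is that every step is a named, standard inequality rather than an ad hoc one --- and in fact the paper's first estimate is not literally valid for all $m\le n/2$ (try $n=4$, $m=1$), so your care is not misplaced. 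Your symmetry argument for $\delta\le\tfrac12$ is clean and already handles half the range in one line.

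The loose end in your plan is the closing appeal to ``finitely many admissible $(k,x,\epsilon,\delta)$'': this set is not literally finite, since $\epsilon$ and $\delta$ vary over a continuum. To make that step rigorous you must first observe that for fixed integers $(n,m,x)$ the left-hand side is determined and the binding constraint is $\delta$ at its supremum $(n-2m)/(n+x)$, which collapses the problem to a discrete family indexed by $(n,m,x)$; only then can you isolate a finite residual set where neither symmetry nor Hoeffding nor the entropy bound bites. That reduction is routine, but it is precisely the bookkeeping that a single uniform tail bound (as the paper attempts) would avoid.
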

\begin{proof}
    We have
    \begin{equation*}
    \begin{array}{cl}
    &2^x(\sum_{j=0}^{[k\epsilon]}C_{k-x}^j)
    \le2^x2^{k-x}\frac{[k\epsilon]}{k-x}
    \le 2^k\frac{k\epsilon}{k-x}
    < 2^k(1-\delta).\\
    \end{array}
\end{equation*}

The first inequality sign uses $\sum_{j=0}^{m}C_{n}^m\le m2^n/n$ where $m\le n/2$, and by $x\le k(1-2\epsilon-\delta
)$, so $[k\epsilon]\le (k-x)/2$.
The third inequality sign uses the fact $x\le k(1-2\epsilon-\delta)$.
\end{proof}

\section{Proof of Theorem \ref{wx}}
\label{p5}
We give the proof of Theorem \ref{wx}.
\begin{proof}
    Let $\D_{tr}\sim \D^N$ and $\D_{tr}=\{(x_i,y_i)\}_{i=1}^N$.
    For any given $W$, let $\F$ be a network in
    $\MH^\sigma_{W}(\D_{tr},n)$ and $\F=\sum_{i=1}^Wa_i\Relu(W_ix+b_i)+c$.

    Then, we consider another network $F_f$ that is constructed in the following way:

    (1): For a $v\in\{-1,1\}^N$, we say $i\in S_v$ if: $\Relu(W_ix_j+b_i)\ge 0$ for all $j$ such that $v_j=1$; $\Relu(W_ix_j+b_i)< 0$ for all $j$ such that $v_j=-1$.

    (2): For any $v\in\{-1,1\}^N$, if $S_v\ne\phi$, let $\Pr_v=\sum_{i\in S_v}a_iW_i/|S_v|$ and $Q_v=\sum_{i\in S_v}a_ib_i/|S_v|$.

    (3): Define $F_f$ as: $F_f(x)=\sum_{v\in\{-1,1\}^N,S_v\ne\phi}\sum_{i=1}^{|S_v|}\Relu(\Pr_vx+Q_v)+c$.

    Then we have the following result:

    (r1):  $F_f\in \arg\min_{f\in\Hyp^\sigma_{W}(n)}\sum_{(x,y)\in \D_{tr}}L(f(x),y)$.

    Firstly, it is easy to see that each parameter of $F_f$ is in $[-1,1]$, because for any $v$, $||\Pr_v||_\infty=||\sum_{i\in S_v}\frac{a_iW_i}{|S_v|}||_\infty\le \sum_{i\in S_v}\frac{||a_iW_i||_\infty}{|S_v|}\le |S_v|\frac{1}{|S_v|}=1$, and $||Q_v||_\infty=||\sum_{i\in S_v}\frac{a_ib_i}{|S_v|}||_\infty\le \sum_{i\in S_v}\frac{||a_ib_i||_\infty}{|S_v|}\le |S_v|\frac{1}{|S_v|}=1$.

    Then, $F_f$ has width $W$, because for each $i$, there is only one $v$ such that $i\in S_v$, so $\sum_{v\in\{-1,1\}^N,S_v\ne\phi}\sum_{i=1}^{|S_v|}1=W$, which implies that $F_f$ has width $W$.

    Finally, there are $\F_f(x_i)=\F(x_i)$ for all $(x_i,y_i)\in \D_{tr}$. We just need to show that for $x_1$, others are similar.

There are $\F(x_1)=\sum_{i=1}^Wa_i\Relu(W_ix_1+b_i)+c=\sum_{i\in[W],W_ix_1+b_i\ge0}a_i(W_ix_1+b_i)+c$. Hence, letting $V1=\{v:v\in\{-1,1\}^N,v_1=1\}$, then there is $F_f(x_1)=\sum_{v\in\{-1,1\}^N,S_v\ne\phi}\sum_{i=1}^{|S_v|}\Relu(\Pr_vx_1+Q_v)+c=\sum_{v\in V1,S_v\ne\phi}\sum_{i=1}^{|S_v|}(\Pr_vx_1+Q_v)+c$.

Consider that $\{i\in[W],W_ix_1+b_i\ge0\}=\{i:i\in S_{v},v\in V1\}$, so:
\begin{equation*}
\begin{array}{ll}
&\F(x_1)\\
=&\sum_{i\in[W],W_ix_1+b_i>0}a_i(W_ix_1+b_i)+c\\
=&\sum_{i:i\in S_{v},v\in V1}a_i(W_ix_1+b_i)+c\\
=&\sum_{v\in V1,S_v\ne\phi}\sum_{i\in S_v}a_i(W_ix_1+b_i)+c\\
=&\sum_{v\in V1,S_v\ne\phi}|S_v|(\Pr_vx_1+b_v)+c\\
=&\F_f(x_1).
\end{array}
\end{equation*}

By such three points and considering $\F\in  \arg\min_{f\in\Hyp^\sigma_{W}(n)}\sum_{(x,y)\in \D_{tr}}L(f(x),y)$, so there are $F_f\in \arg\min_{f\in\Hyp^\sigma_{W}(n)}\sum_{(x,y)\in \D_{tr}}L(f(x),y)$.

(r2): $A_\D(F_f)\le 1-\delta$ when $N\le W_0^{\frac{1}{n+1}}(n+1)/e$, where $W_0$ is defined in Theorem. This is what we want.

Firstly, we show that
$|\{v:S_v\ne \phi\}|\le max\{2^{n+1},\frac{eN}{n+1}^{n+1}\}$, just by Lemma \ref{vcdm}.


Secondly, consider the network $F_{f1}=\sum_{v\in\{-1,1\}^N,S_v\ne\phi}\Relu(|S_v|\Pr_vx_1+|S_v|Q_v)+c$. By the assumption of $\D$ and $|\{v:S_v\ne \phi\}|\le max\{2^{n+1},\frac{eN}{n+1}^{n+1}\}$, then we know that, when $N\le W_0^{\frac{1}{n+1}}(n+1)/e$, there are $A_\D(F_{f1})\le 1-\delta$.

Moreover, there are
$F_{f1}(x)=\sum_{v\in\{-1,1\}^N,S_v\ne\phi}\Relu(|S_v|\Pr_vx+|S_v|Q_v)+c=\sum_{v\in\{\-1,1\}^N,S_v\ne\phi}\sum_{i=1}^{|S_v|}\Relu(\Pr_vx+Q_v)+c=F_f(x)$, so $A_\D(\F_f)=A_\D(F_{f1})\le1-\delta$, this is what we want.
\end{proof}

A required lemma is given:
\begin{lemma}
\label{vcdm}
    For any $S=\{x_i\}_{i=1}^N\subset\R^n$, let $\Pi(S)=\{(\sign(Wx_i+b))_{i=1}^n:W\in\R^n,b\in\R\}$. Then $|\Pi(S)|\le max\{2^{n+1},\frac{eN}{n+1}^{n+1}\}$.
\end{lemma}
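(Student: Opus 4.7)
The plan is to recognize $\Pi(S)$ as the collection of distinct sign patterns realized on the $N$ points of $S$ by the hypothesis class $\mathcal{G} = \{x \mapsto \sign(Wx+b) : W \in \R^n,\, b \in \R\}$ of affine halfspaces in $\R^n$. The argument then proceeds in three steps: identify $\VC(\mathcal{G})$, invoke Sauer's lemma to bound $|\Pi(S)|$ polynomially in $N$, and finally compare with the trivial bound $2^N$ to obtain the stated maximum.

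First I would establish $\VC(\mathcal{G}) = n+1$. The lower bound is witnessed by the $n+1$ affinely independent points $\{0, e_1, \dots, e_n\}$, which are shattered by appropriate affine halfspaces. The upper bound is the classical Radon-theorem argument: any collection of $n+2$ points in $\R^n$ admits a partition $A \sqcup B$ whose convex hulls intersect, and no affine halfspace can simultaneously label every point of $A$ as $+1$ and every point of $B$ as $-1$, so $n+2$ points cannot be shattered.

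Next I would apply Sauer's lemma \citep{sauer1972density} to the class $\mathcal{G}$ restricted to $S$, yielding
\[
|\Pi(S)| \le \sum_{i=0}^{n+1}\binom{N}{i}.
\]
Splitting into two cases handles the max in the statement. If $N \le n+1$, then trivially $|\Pi(S)| \le 2^N \le 2^{n+1}$. If $N > n+1$, the standard estimate $\sum_{i=0}^{d}\binom{N}{i} \le (eN/d)^{d}$ (valid for $N \ge d$) gives $|\Pi(S)| \le (eN/(n+1))^{n+1}$. Taking the maximum over the two cases gives the claimed bound $\max\{2^{n+1},\, (eN/(n+1))^{n+1}\}$.

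There is no substantial obstacle in this argument; the only point requiring care is the case split between $N \le n+1$ and $N > n+1$, since for small $N$ the Sauer bound does not dominate the trivial $2^{n+1}$, so the maximum in the statement is genuinely needed. One minor bookkeeping issue is that the definition of $\Pi(S)$ in the lemma indexes up to $n$ rather than $N$, which I would interpret as a typo for the vector of signs over all $N$ points of $S$; the proof above is the natural reading and is what the proof of Theorem \ref{wx} uses.
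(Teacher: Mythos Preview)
Your proof is correct and follows essentially the same approach as the paper: recognize $\Pi(S)$ as the growth function of affine halfspaces, use that $\VC=n+1$, apply Sauer's lemma, and handle the case $N\le n+1$ via the trivial $2^N$ bound and $N>n+1$ via the standard estimate $\sum_{i=0}^{d}\binom{N}{i}\le (eN/d)^d$. Your write-up is in fact more careful than the paper's, which simply asserts the VC dimension and cites \citep{sauer1972density} for both inequalities, whereas you spell out the Radon-theorem argument and note the indexing typo.
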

\begin{proof}
    It is easy to see that $|\Pi(S)|\le 2^N$ because $\sign(Wx_i+b)\in\{-1,1\}$. So, when $N\le n+1$, it is obviously correct.

    When $N> n+1$. Consider that the VC-dim of the linear space is $n+1$, and $\Pi(S)=\{(\sign(Wx_i+b))_{i=1}^n:W\in\R^n,b\in\R\}$ is the growth function of linear space under dataset $S$. So by Theorem 1 of \citep{sauer1972density}, we have $|\Pi(S)|\le \sum_{i=0}^{n+1}C_{N}^i$.

    Moreover, there are $\sum_{i=0}^{n+1}C_{N}^i\le\frac{eN}{n+1}^{n+1}$ as shown in \citep{sauer1972density}, this is what we want.
\end{proof}

\section{Proof of Proposition \ref{lizi}}
\label{p2}
We give the proof of Proposition \ref{lizi}.
\begin{proof}



Firstly, it is easy to show that $\D_n$ cannot be expressed by $\Hyp_{W}(n)$ when $W<n/2$ by Lemma \ref{xxfd}, so we have proved (1) of Proposition \ref{lizi}.

Let $\D_{tr}\sim \D^N_n$ and $N\le n\delta$, for any given $W$, let $\F$ be a network in $\MH^\sigma_{W}(\D_{tr},n)$, and $\F=\sum_{i=1}^Wa_i\Relu(W_ix+b_i)+c$.


Now we prove (2) of Proposition \ref{lizi}. Let $\D_{tr}=\{(\frac{x_i}{n}\one,\I(x_i))\}_{i=1}^N$ where $x_i\in[n]$ be selected from the distribution, without loss of generality, let $x_i<x_{i+1}$ for any $i\in[N]$.

We will divide $[W]$ into several subsets based on the intersection of the plane $W_jx+b$ and the line $-\infty\one\to\infty\one$, let $[W]=\cup_{i=1}^{2N}s_i$, and:

1. For any $i\in[N-1]$: if $j\in[W]$ such that $\frac{x_i}{n}W_j\one+b_i<0$ and $\frac{x_{i+1}}{n}W_{j}\one+b_{j}\ge0$, then $j\in s_i$;

2. If $j\in[W]$ such that $\frac{x_i}{n}W_j\one+b_i<0$ for any $i\in[N]$, then $j\in s_N$;

3. For any $i\in\{N+1,N+2,\dots,2N-1\}$: if $j\in[W]$ such that $\frac{x_{i-N}}{n}W_j\one+b_i\ge0$ and $\frac{x_{i-N+1}}{n}W_{j}\one+b_{j}<0$, then $j\in s_i$;

4. If $j\in[W]$ such that $\frac{x_i}{n}W_j\one+b_i\ge0$ for any $i\in[N]$, then $j\in s_{2N}$.

Now, by such $2N$ subset, we consider another network $\F_f$ that is defined as:

For any $i\in[2N]$, if $S_i\ne\phi$, define $P_i=\sum_{j\in S_i}a_iW_i/|S_i|$ and $Q_i=\sum_{j\in S_i}a_ib_i/|S_i|$. Then $F_f=\sum_{i\in[2N],S_i\ne\phi}\sum_{j=1}^{|S_i|}\Relu(P_ix+Q_i)+c=\sum_{i\in[2N],S_i\ne\phi}|S_i|\Relu(P_ix+Q_i)+c$.

Because there is only one intersection point between a straight line and a plane, each $j\in[W]$ is only in one subset $s_i$. So, $\F_f\in \Hyp^\sigma_{W}(n)$. Moreover, we show that $\F_f(x)=\F(x)$ for any $(x,y)\in\D_{tr}$, which implies $\F_f\in \arg\min_{f\in\Hyp^\sigma_{W}(n)}\sum_{(x,y)\in \D_{tr}}L(f(x),y)$.

For any $j\in[N]$, by the definition of $s_i$, we know that $\frac{x_j}{n}W_i\one+b_i\ge 0$ if and only if $i\in\{1,2,\dots,j-1\}\cup\{N+j,N+j+1,\dots,2N\}$, so:

\begin{equation*}
    \begin{array}{cl}
         &\F_f(x_j)  \\         =&\sum_{i\in[2N],S_i\ne\phi}\sum_{j=1}^{|S_i|}\Relu(P_ix_j+Q_i)+c\\
         =&\sum_{i\in\{1,2,\dots,j-1\}\cup\{N+j,N+j+1,\dots,2N\},S_i\ne\phi}\sum_{j=1}^{|S_i|}(P_ix_j+Q_i)+c\\
         =&\sum_{k\in \mathop{\cup s_i}\limits_{i\in \{1,2,\dots,j-1,N+j,N+j+1,\dots,2N\}}}(W_kx_j+b_k)+c\\
         =&\sum_{k\in [W]}\Relu(W_kx_j+b_k)+c\\
         =&\F(x_j)
    \end{array}
\end{equation*}

This is what we want. At last, by $\F_f=\sum_{i\in[2N],S_i\ne\phi}|S_i|\Relu(P_ix+Q_i)+c$ has width at most
$2N$ and Lemma \ref{xxfd}, and consider that $N\le n\delta$, we have that: $A_\D(\F_f)\le 0.5+2\delta$, this is what we want.

\end{proof}

A required lemma is given below.

\begin{lemma}
    \label{xxfd}
    If $x_1<x_2<x_3<\dots<x_N$, and $x_i$ has label $y_i=1$ when $i$ is odd, or $x_i$ has label $y_i=-1$.
    We consider dataset $S=\{(x_i\one(n),y_i)\}$, where $\one$ is all-one vector in $\R^n$. Then:
 %
%
     For any two-layer network width $M$, this network can correctly classify at most to $M+\frac{N}{2}$ samples in $S$.
\end{lemma}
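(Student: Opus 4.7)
The plan is to reduce the lemma to a one-dimensional counting problem by restricting $F$ to the line $\ell=\{t\mathbf{1}(n):t\in\R\}$ that contains all sample points. Setting
\[
g(t) \;:=\; F(t\mathbf{1}(n)) \;=\; \sum_{i=1}^{M} a_i\,\Relu(w_i t + b_i) + c, \qquad w_i := W_i\mathbf{1}(n)\in\R,
\]
gives a continuous piecewise linear function of $t$ with at most $M$ breakpoints, hence at most $M+1$ linear pieces; the sample $(x_i\mathbf{1}(n),y_i)$ is correctly classified iff $\sign(g(x_i))=y_i$.

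Next I would count the constant-sign regions of $g$. Each linear piece of $g$ crosses zero at most once, so $\sign(g)$ is constant on each of at most $M+2$ intervals that partition $\R$. Writing $n_j$ for the number of sample coordinates $x_1<\cdots<x_N$ falling in the $j$-th interval (so $\sum_j n_j=N$), the alternation $y_i=\I(i)$ means that among any $n_j$ consecutive labels at most $\lceil n_j/2\rceil$ agree with any prescribed sign. Summing over the intervals,
\[
\#\{\text{correctly classified}\}\;\le\;\sum_j \lceil n_j/2\rceil\;\le\;\frac{N}{2}+\frac{M+2}{2},
\]
which is already bounded by $M+N/2$ as soon as $M\ge 2$.

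For the edge cases $M\in\{0,1\}$ I would tighten the zero count using the specific structure of ReLU. When $M=0$, $g\equiv c$ is constant and at most $\lceil N/2\rceil$ samples can be correct; when $M=1$, the single ReLU is identically zero on one side of its breakpoint, so $g$ has at most one zero crossing and the same summation yields at most $N/2+1$ correct. In both cases the target bound $M+N/2$ is obtained.

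The main obstacle is precisely this tightening for small $M$: the generic ``at most $M+1$ zeros'' bound coming from piecewise linearity is actually attainable by sums of ReLUs once $M\ge 3$ (one can hand-craft three ReLUs plus a constant with four distinct sign changes), yet it is loose by one for $M\in\{0,1\}$ because one of the extremal pieces of $g$ is forced to be constant. Once this small-$M$ refinement is in place, the elementary piecewise-linear counting argument closes the lemma uniformly in $M$.
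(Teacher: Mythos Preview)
Your reduction to the one-dimensional piecewise linear function $g(t)=F(t\mathbf{1})$ followed by a count of constant-sign intervals is exactly the paper's approach; the only cosmetic difference is that the paper splits each linear segment between consecutive breakpoints at its zero to obtain $2M$ half-intervals and reads off $\sum_i\lfloor(1+q_i)/2\rfloor\le M+N/2$ in one line, whereas you group into at most $M+2$ maximal sign intervals and then handle $M\le 1$ separately. One small slip: for $M=0$ and odd $N$ a constant classifier gets $\lceil N/2\rceil>N/2$ samples right, so the bound $M+N/2$ actually fails there---but width $M=0$ is not a two-layer network, so this is immaterial to the lemma and to how it is used.
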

\begin{proof}
   Let $\F=\sum_{i=1}^Ma_i\Relu(W_ix+b_i)+c$. Let $W_ix+b_i$ and the line $-\infty\one(n)\to\infty\one(n)$ intersect at one point $P_i\one(n)$. Let $P_i\le P_j$ if $i\le j$. Let $P_{M+1}=\infty$.

Then it is easy to see that in the line segment $P_i\one(n)\to P_{i+1}\one(n)$, $\F(x)$ is a linear function. So, there is $P_{i+0.5}\in(P_i,\Pr_{i+1})$ such that $\F$ maintains the positive and negative polarity unchanged in $P_i\one(n)\to P_{i+0.5}\one(n)$ and $P_{i+0.5}\one(n)\to P_{i+1}\one(n)$.

So if $P_i\le x_u<x_{u+1}<\dots<x_{u+k}<P_{i+0.5}$, $F$ gives the same label to $x_u\one(n),x_{u+1}\one(n),\dots,x_{u+k}\one(n)$, which means that $\F$ can classify at most $[\frac{(k+1)+1}{2}]$ samples in them. Similar to when $P_{i+0.5}\le x_u<x_{u+1}<\dots<x_{u+k}<P_{i+1}$.

Let $q_i=|\{j: P_{i/2}\le x_j< P_{i/2+0.5}\}|$ where $i\in[2M]$. Consider that each sample in $S$ is appeared in a $P_{i}\one(n)\to P_{i+0.5}\one(n)$ or $P_{i+0.5}\one(n)\to P_{i+1}\one(n)$, so $\sum_{i=1}^{2M} q_i=N$.

So, the whole network can classify at most $\sum_{i=1}^{2M}[\frac{1+q_i}{2}]\le\sum_{i=1}^{2M}\frac{1+q_i}{2}= M+\frac{N}{2}$.
\end{proof}

\section{Proof of Proposition \ref{lizi2}}
\label{p3}
\begin{proof}
   Proof of  (1): Let $\one$ be the all one vector, $\sum x=\sum_{i=1}^n x_i$ where $x_i$ is the $i$-th weight of $x$. We show that $\F=\sigma( \one x-0.5)\in\Hyp_1^\sigma(n)$ is what we want. Because if $\sum x$ is odd, then $\sigma(\one x-0.5)=\sigma(\sum x-0.5)=sin(\pi(\sum x-0.5))=1$; if $\sum x$ is even, then $\sigma(\one x-0.5)=\sigma(\sum x-0.5)=sin(\pi(\sum x-0.5))=-1$.

   Proof of (2): we will prove it into three parts:

    {\bf Part one:} For any $W$ and $\D_{tr}\sim \D_n^N$, let $\F\in \MH^\sigma_{W}(\D_{tr},n)$ and $\F=\sum_{i=1}^W\sigma(W_ix+b_i)+c$. Then there are: for any $(x,y)\in\D_{tr}$, there are $y\sigma(W_ix+b_i)=1$ for any $i\in [W]$.

    If not, without loss of generality, assume that $y\sigma(W_1x+b_1)<1$ for some $(x,y)\in\D_{tr}$. According to the proof of (1), there are $W_0$ and $b_0$ such that $y\sigma(W_0x+b_0)=1$ for any $(x,y)\in\D_{tr}$. Now we consider the network $\F_c(x)=\sum_{i=2}^W\sigma(W_ix+b_i)+\sigma(W_0x+b_0)+c$, then we have that:

    Firstly, it is easy to see that $\F_c\in \Hyp^\sigma_{W}(n)$.

    Secondly, we show that $y\F(x)\le y\F_c(x)$ for any $(x,y)\in\D_{tr}$ and $y\F(x)< y\F_c(x)$ for some $(x,y)\in\D_{tr}$.

    By the definition of $\F$ and $\F_c$, for any $(x,y)\in\D_{tr}$, there are $y\F_c(x)-y\F(x)=y(\sigma(W_0x+b_0)-\sigma(W_1x+b_1))=1-y\sigma(W_1x+b_1)\ge0$, and by the assumption, there is a $(x,y)\in\D_{tr}$ such that $1>y\sigma(W_1x+b_1))$, then $y\F_c(x)-y\F(x)>0$ for such $(x,y)\in\D_{tr}$, this is what we want.

    By the above two results, and considering that $L(\F(x),y)$ is a strictly decreasing function about $y\F(x)$, there are $\sum_{(x,y)\in \D_{tr}}L(\F(x),y)>\sum_{(x,y)\in \D_{tr}}L(\F_c(x),y)$, which is contradictory to $\F\in {\arg\min}_{f\in\Hyp^\sigma_{W}(n)}\sum_{(x,y)\in \D_{tr}}L(f(x),y)$. So we prove part one.

    {\bf Part Two.} For any $j\in Z$, let $x_{j}=\frac{j}{n}\one$ and $y_{j}=\I(j)$, where $\I(x)$ is defined in the definition of distribution $\D_n$.
    If $i_j\in\Z$ where $j\in[4]$ such that $i_1-i_2$ and $i_3-i_4$ are co-prime, then there are: if $W_0\in[-1,1]^n$ and $b_0\in[-1,1]$ such that $y_{i_j}\sigma(W_0x_{i_j}+b_0)=1$ for any $j\in[4]$, then $y_{p}\sigma(W_0x_{p}+b_0)=1$ for all $p\in Z$.

    When there is $y_{i_j}\sigma(W_0x_{i_j}+b_0)=y_{i_j}sin(\pi(W_0x_{i_j}+b_0))=y_{i_j}sin(\pi(<W_0,\one>i_j/n+b_0))=1$, consider that $y_{i_j}\in\{-1,1\}$, then there is $<W_0,\one>i_j/n+b_0=m_{i_j}-0.5$ for $m_{i_j}\in Z$, moreover, $m_{i_j}$ and $i_j$ are same parity.

   Now consider $(W_0x_{i_1}+b_0)-(W_0x_{i_2}+b_0)$ and $(W_0x_{i_3}+b_0)-(W_0x_{i_4}+b_0)$,
    there are $<W_0,\one>(i_1-i_2)/n=m_{i_1}-m_{i_2}$ and $<W_0,\one>(i_3-i_4)/n=m_{i_3}-m_{i_4}$. So, there are $\frac{i_1-i_2}{i_3-i_4}=\frac{m_{i_1}-m_{i_2}}{m_{i_3}-m_{i_4}}$.

    By $i_1-i_2$ and $i_3-i_4$ are co-prime, and $|m_{i_1}-m_{i_2}|=|<W_0,\one>(i_1-i_2)/n|\le |{i_1}-{i_2}|$, $|m_{i_3}-m_{i_5}|=|<W_0,\one>(i_3-i_4)/n|\le |{i_3}-{i_4}|$, there are $<W_0,\one>/n=1$ or $<W_0,\one>/n=-1$.

    Hence, by $m_{i_j}-i_j=<W_0,\one>i_j/n+b_0+0.5-i_j$ and $<W_0,\one>/n=1$ or $<W_0,\one>/n=-1$, consider that $m_{i_j}$ and $i_j$ are the same parity, so $b=-0.5$.

    So for any $p\in\Z$, there are $y_{p}\sigma(W_0x_{p}+b_0)=y_psin(\pi(<W_0,\one>p/n+b_0))=y_psin(\pi(p-0.5))=1$, this is what we want.

    {\bf Part Three,} if $\D_{tr}\sim \D_n^N$ and $N\ge 4\frac{\ln(\delta/2)}{\ln(0.5+1/n)}$, with probability $1-\delta$, there are four samples $(x_i,y_i)$ where $i\in[4]$ in $\D_{tr}$, such that $x_i=\frac{m_i}{n}\one$, $m_1-m_2$ and $m_3-m_4$ are co-prime.

    By the definition of $\D_n$, it is equivalent to:  repeatable randomly select $N\ge 4\frac{\ln(\delta/2)}{\ln(0.5+1/n)}$ points from $[n]$, with probability $1-\delta$, there are four samples $m_i$ such that $m_1-m_2$ and $m_3-m_4$ are co-prime.

    By Lemma \ref{lm}, when $N\ge 4\frac{\ln(\delta/2)}{\ln(0.5+1/n)}$, with probability at least $1-(0.5+1/n)^{\frac{\ln(\delta/2)}{\ln(0.5+1/n)}}/(0.5+1/n)=1-\delta/(1+2/n)\ge 1-\delta$. This is what we want.

    {\bf Part Four,} we prove the result.

Let $\D_{tr}\sim \D_n^N$. For any $W$, let $\F\in \arg\min_{f\in\Hyp^\sigma_{W}(n)}\sum_{(x,y)\in \D_{tr}}L(f(x),y)$ and $\F=\sum_{i=1}^W\sigma(W_ix+b_i)+c$.

Firstly, with probability $1-\delta$, there are four samples in $\D_{tr}$ satisfying part three. Then, according to part one, there are $y\sigma(W_ix+b_i)=1$ for such four samples. Finally, in part two, there are $y\sigma(W_ix+b_i)=y\sigma(\sum x)=1$ for any $(x,y)\sim \D_n$. So, $y\F(x)\ge W-1>0$ for any $(x,y)\sim \D_n$, we prove the result.
\end{proof}

A required lemma is given.
\begin{lemma}
\label{lm}
    Randomly select $N$ points from $[n]$, where $n\ge 3$ and $N\ge4$. With probability $1-(0.5+1/n)^{N/4-1}$, there are four samples $m_i$ such that $m_1-m_2$ and $m_3-m_4$ are co-prime.
\end{lemma}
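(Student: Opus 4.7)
My plan is to partition the $N$ i.i.d.\ uniform samples into $\lfloor N/4\rfloor$ disjoint groups of four (discarding up to three leftover samples, which is fine since $N\ge 4$ guarantees at least one group). For a group $(a,b,c,d)$, I will call it \emph{good} if $\gcd(a-b,\,c-d)=1$; the existence of even one good group produces the four samples the lemma requires. Because the groups use disjoint samples they are mutually independent, so the probability that no group is good equals $p^{\lfloor N/4\rfloor}$, where $p$ is the single-group failure probability. Since $1/2+1/n<1$ for $n\ge 3$ and $\lfloor N/4\rfloor\ge N/4-1$, it will suffice to prove $p\le 1/2+1/n$.

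To bound $p$, I will apply a union bound over prime divisors. Writing $X=a-b$ and $Y=c-d$, observe that if $\gcd(X,Y)\ne 1$ then either $X=Y=0$ or some prime $q\le n-1$ divides both $X$ and $Y$ (any common prime divisor of two nonzero integers of magnitude at most $n-1$ is itself at most $n-1$). Using independence of $X$ and $Y$,
\[
p \;\le\; \tfrac{1}{n^2} + \sum_{q\text{ prime},\, q\le n-1} P(q\mid X)^2.
\]
Writing $n=qs+r$ with $0\le r<q$, a direct residue count in $[n]$ yields $P(q\mid X)=\sum_{\rho}(n_\rho/n)^2 = 1/q + r(q-r)/(qn^2) \le 1/q + q/(4n^2)$. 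Expanding the squares, the leading term is $\sum_{q\text{ prime}} 1/q^2 < 0.4523$ and the cross/boundary contributions total $O(1/n)$, so $p\le 1/2+1/n$ reduces to the routine check that $0.4523+O(1/n)\le 1/2+1/n$ for all $n\ge 3$.

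The main obstacle is the numerical bookkeeping in the union-bound step: one needs the leading constant $\sum_{q} 1/q^2$ to sit comfortably below $1/2$ so that the $O(1/n)$ corrections—arising from the residue imbalance modulo small $q$ and from the degenerate case $X=Y=0$—can be absorbed into the extra $1/n$ slack. The bound is qualitatively tight since as $n\to\infty$ the true failure probability approaches $1-6/\pi^2\approx 0.392$, safely below the target $1/2$. Once $p\le 1/2+1/n$ is established, the conclusion follows immediately from independence: $P(\text{no good group})\le (1/2+1/n)^{\lfloor N/4\rfloor}\le (1/2+1/n)^{N/4-1}$, as claimed.
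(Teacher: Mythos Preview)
Your proposal is correct and follows essentially the same approach as the paper: partition the $N$ samples into $\lfloor N/4\rfloor$ independent 4-tuples, bound the single-tuple failure probability via a union bound over primes dividing $\gcd(X,Y)$ (using $\sum_{q\text{ prime}}1/q^2<1/2$), and finish by independence. The only cosmetic difference is that the paper first conditions on $X,Y\ne 0$ to obtain $P(q\mid X\,|\,X\ne 0)\le 1/q$ directly, whereas you carry the residue-imbalance and zero-difference terms as explicit $O(1/n)$ corrections; both routes land at $p\le 1/2+1/n$.
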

\begin{proof}
    Firstly, we consider the situation that $N=4$, let $\{m_i\}_{i=1}^4$ are the selected number. Then we have
\begin{equation*}
\begin{array}{ll}
&P((m_1-m_2,m_3-m_4)=1)\\
=& P(m_1-m_2\ne 0,m_3-m_4\ne 0)-P((m_1-m_2,m_3-m_4)\ne1,m_1-m_2\ne 0,m_3-m_4\ne 0)\\
=& (1-1/n)^2(1-P((|m_1-m_2|,|m_3-m_4|)\ne1|m_1-m_2\ne 0,m_3-m_4\ne 0))\\
\ge &(1-1/n)^2(1-\sum_{q\in Prime}P(q|(|m_1-m_2|,|m_3-m_4|)|m_1-m_2\ne 0,m_3-m_4\ne 0))\\
\ge &(1-1/n)^2(1-\sum_{q\in Prime}\frac{1}{q^2})\\
\ge&0.5(1-1/n)^2\ge 0.5-1/n
\end{array}
\end{equation*}
where $Prime$ is the set of all primes. For the second inequality sign, we use
\begin{equation*}
\begin{array}{ll}
 &P(q|m_1-m_2\ |m_1-m_2\ne 0)\\
 =&\sum_{i=1}^{n-1}P(q|i,i=|m_1-m_2|\ |m_1-m_2\ne 0)\\
 =& [(n-1)/q]*\frac{1}{n-1}\\
 \le& 1/q.
\end{array}
\end{equation*}
Similar for $m_3-m_4$. For the last inequality sign, we use $P(2)=\sum_{i\in Prime}\frac{1}{i^2}<0.5$, where $P$ is Riemann function.

So, when we select $N$ samples, it contains $[N/4]>N/4-1$ pairs of four independent samples randomly selected. So, with probability $1-(0.5+1/n)^{N/4-1}$,  there are four samples $m_i$ such that $m_1-m_2$ and $m_3-m_4$ are co-prime.
\end{proof}

\section{Proof of Theorem \ref{sct-r}}
\label{p6}

Now, we prove Theorem \ref{sct-r}.
\begin{proof}
we prove the proposition into three parts.

{\bf Part One,} with probability $1-2\delta$ of $\D_{tr}\sim D^{N_0}$, there are $\E_{x\sim\D}[y\F(x)]\ge c_0N_0[\frac{W}{W_0+1}]-2\frac{L_p(W+1)(n+1)(\sqrt{4\log(4)}+\sqrt{2\log(2n)})}{\sqrt{N_0}}-6\F_{\max}\sqrt{\frac{\ln(2/\delta)}{2N_0}}$ for all $\F\in \MH^\sigma_{W}(\D_{tr},n)$, where $\F_{\max}=\max_{x+\delta\in[0,1]^n}|\F(x+\delta)|$.

Firstly, we show that there are $\sum_{(x,y)\in \D_{tr}}y\F(x)\ge N_0 [\frac{W}{W_0+1}]c_0$ for all $\F\in \MH^\sigma_{W}(\D_{tr},n)$ when $\D_{tr}\sim\D^{N_0}$ satisfies the conditions of the proposition.

  Because $\D_{tr}$ can be expressed in the network space $\Hyp^\sigma_{W_0}(n)$ with confidence $c_0$, there is a network $\F_0=\sum_{i=1}^{W_0}a_i\sigma(W_ix+b_i)+c$ such that $y\F(x)\ge c_0$ for all $(x,y)\in \D_{tr}$. Moreover, we can write such networks as: $\F_0=\sum_{i=1}^{W_0+1}a_i\sigma(W_ix+b_i)$, where $a_{W_0+1}=\sign(c)$, $W_{W_0+1}=0$, $b_{W_0+1}=|c|$.

    Now, we consider the following network in $\Hyp^\sigma_{W}(n)$:

    $$\F_W=\sum_{i=1}^{(W_0+1)[\frac{W}{W_0+1}]}a_{i\%(W_0+1)}\sigma(W_{i\%(W_0+1)}x+b_{i\%(W_0+1)}),$$

    Here, we stipulate that $i\%(W_0+1)=W_0+1$ when $W_0+1|i$. Then we have $\F_W(x)=[\frac{W}{W_0+1}]\F_0(x)$ and $\F_W(x)\in\Hyp_W^{\sigma}(n)$. Moreover, there are $y\F_W(x)=y[\frac{W}{W_0+1}]\F_0(x)\ge [\frac{W}{W_0+1}]c_0$ for all $(x,y)\in \D_{tr}$, so $\sum_{(x,y)\in\D_{tr}}L(\F_W(x),y)\le N_0\ln(1+e^{-c_0[\frac{W}{W_0+1}]})$.

Then, because $\ln 1+e^{x}$ is a convex function, so that:
\begin{equation*}
\begin{array}{ll}
&N_0\ln 1+e^{-\frac{\sum_{(x,y)\in \D_{tr}}y\F(x)}{N}}\\
\le&\sum_{(x,y)\in \D_{tr}}\ln 1+e^{-y\F(x)}\\
=&\sum_{(x,y)\in\D_{tr}}L(\F(x),y)\\
\le& \sum_{(x,y)\in\D_{tr}}L(\F_W(x),y)\\
\le& N_0\ln(1+e^{-c_0[\frac{W}{W_0+1}]})\\
\end{array}
\end{equation*}
So $\sum_{(x,y)\in \D_{tr}}y\F(x)\ge c_0N[\frac{W}{W_0+1}]$.

Hence, by Lemma \ref{yzw1} and Theorem \ref{zfh}, with probability $1-\delta$ of $\D_{tr}$, there are:
 $$|\E_{x\sim\D}[\F(x)]-\sum_{x\in \D_{tr}}\frac{\F(x)}{N_0}|\le 2\frac{L_p(W+1)(n+1)(\sqrt{4\log(4)}+\sqrt{2\log(2n)})}{\sqrt{N_0}}+6\F_{\max}\sqrt{\frac{\ln(2/\delta)}{2N_0}},$$
for all $\F\in\Hyp_{W}(n)$.

Finally, combining the above two results, with probability $1-2\delta$, there is $\E_{x\sim\D}[\F(x)]\ge c_0N_0[\frac{W}{W_0+1}]-2\frac{L_p(W+1)(n+1)(\sqrt{4\log(4)}+\sqrt{2\log(2n)})}{\sqrt{N_0}}-6\F_{\max}\sqrt{\frac{\ln(2/\delta)}{2N_0}}$.

    {\bf Part Two}, there is an upper bound of $\E_{(x,y)\sim \D}[\min_{||\delta||\le\epsilon}y\F(x+\delta)]$, if $\D_{tr}$ satisfies Part One.

For any $\F\in\Hyp_W(n)$, we can write $\F=\sum_{i=0}^{\lceil\frac{W}{W_0}\rceil-1}\sum_{j=1}^{W_0} \Relu(W_{iW_0+j}x+b_{iW_0+j})+c$, which is a representation of the sum of $\lceil\frac{W}{W_0}\rceil$ small networks with width of $W_0$. So by part one and by the assumption in the theorem, with probability $1-\delta$ of $\D_{tr}\sim \D^N$, there is a $\D_r\in R(\D_{tr},\epsilon)$ such that
 $\sum_{(x,y)\in \D_{r}}y\F_1(x)\le 2{N_0}c_1$ for all $\F_1\in\Hyp_{W_0}(n)$. Then we have $\sum_{(x,y)\in \D_{r}}y\F(x)\le 2{N_0}c_1\lceil\frac{W}{W_0}\rceil$, by the definition of $\D_r$, which implies that $\sum_{(x,y)\in \D_{tr}}\min_{||\delta||\le\epsilon}y\F(x+\delta)+y\F(x)\le2{N_0}c_1\lceil\frac{W}{W_0}\rceil$.

And then, by McDiarmid inequality,   with probability $1-\delta$ of $\D_{tr}\sim \D^{N_0}$, there are $|\E_{(x,y)\sim \D}[\min_{||\delta||\le\epsilon}y\F(x+\delta)+y\F(x)]-\frac{1}{{N_0}}\sum_{(x,y)\in \D_{tr}}\min_{||\delta||\le\epsilon}y\F(x+\delta)+y\F(x)|\le 2\F_{\max}\sqrt{\frac{\ln 1/\delta}{2{N_0}}}$. So if there are $\E_{(x,y)\sim \D}[\min_{||\delta||\le\epsilon}y\F(x+\delta)+y\F(x)]> 2c_1\lceil\frac{W}{W_0}\rceil+2\F_{\max}\sqrt{\frac{\ln 1/\delta}{2{N_0}}}$, according to McDiarmid inequality, with probability $1-\delta$ of $\D_{tr}\sim \D^{N_0}$, $\sum_{(x,y)\in \D_{tr}}\min_{||\delta||\le\epsilon}y\F(x+\delta)+y\F(x)> 2N_0c_1\lceil\frac{W}{W_0}\rceil$ stand, which is a contradiction with the above result.

So there must be $\E_{(x,y)\sim \D}[\min_{||\delta||\le\epsilon}y\F(x+\delta)+y\F(x)]\le 2c_1\lceil\frac{W}{W_0}\rceil+2\F_{\max}\sqrt{\frac{\ln 1/\delta}{2N_0}}$. Finally, considering the result in Part one, we have that:
\begin{equation*}
\begin{array}{cl}
&\E_{(x,y)\sim \D}[\min_{||\delta||\le\epsilon}y\F(x+\delta)]\\
\le& 2c_1\lceil\frac{W}{W_0}\rceil-c_0[\frac{W}{W_0+1}]+2\frac{L_p(W+1)(n+1)(\sqrt{4\log(4)}+\sqrt{2\log(2n)})}{\sqrt{N_0}}+8\F_{\max}\sqrt{\frac{\ln(2/\delta)}{2N_0}}
\end{array}
\end{equation*}
{\bf Part Three,} Now we can get the result.

By Lemma \ref{jdd} and part two, there are $Rob_{\D,\epsilon}(\F)\le 1-\frac{c_0[\frac{W}{W_0+1}]-2c_1\lceil\frac{W}{W_0}\rceil}{\F_{\max}}+8\sqrt{\frac{\ln 2/\delta}{2N_0}}+2\frac{L_p(W+1)(n+1)(\sqrt{4\log(4)}+\sqrt{2\log(2n)})}{\sqrt{N_0}\F_{\max}}$, and we consider that each parameter of $\F$ is not greater than $1$ and Lipschitz constant of $\sigma$ is not more than $L_p$, so $\F_{\max}=\max_{x+\delta\in[0,1]^n}|\F(x+\delta)|=\max_{x\in[0,1]^n}|\F(x)|\le L_pW(n+1)+1$.

Let $T=[\frac{W}{W_0+1}]$, by $L_p,n,W_0\ge1$ and $W\ge W_0+1$, there are:

\begin{equation*}
\begin{array}{ll}
&\frac{c_0[\frac{W}{W_0+1}]-2c_1\lceil\frac{W}{W_0}\rceil}{L_pW(n+1)+1}\\
\ge&\frac{c_0T-2c_1(\frac{(T+1)(W_0+1)}{W_0}+1)}{L_p(T+1)(W_0+1)(n+1)+1}\\
=&\frac{c_0T-2c_1T}{L_p(T+1)(W_0+1)(n+1)+1}-\frac{4c_1}{L_p(T+1)(W_0+1)(n+1)+1}-\frac{2c_1}{L_pW_0(W_0+1)(n+1)+1}\\
\ge&\frac{c_0-2c_1}{8L_pW_0n}-\frac{4c_1}{L_pW_0n}(\frac{1}{W/W_0}+\frac{1}{W_0})
\end{array}
\end{equation*}
and
\begin{equation*}
\begin{array}{ll}
&\frac{L_p(W+1)(n+1)(\sqrt{4\log(4)}+\sqrt{2\log(2n)})}{\sqrt{N_0}\F_{\max}}\\
\ge&\frac{L_p(W+1)(n+1)(\sqrt{4\log(4)}+\sqrt{2\log(2n)})}{2\sqrt{N_0}L_p(W+1)(n+1)}\\
=&2\frac{\sqrt{4\log(4)}+\sqrt{2\log(2n)}}{\sqrt{N_0}}
\end{array}
\end{equation*}

So, there are $\Rob_{\D,\epsilon}(\F)\le 1-\frac{c_0-2c_1}{8L_pW_0n}+\frac{4c_1}{L_pW_0n}(\frac{1}{W/W_0}+\frac{1}{W_0})+2\sqrt{\frac{\ln 2/\delta}{2N_0}}+4\frac{\sqrt{4\log(4)}+\sqrt{2\log(2n)}}{\sqrt{N_0}}$. Merge some items and ignore constants, this is what we want.
\end{proof}

A required lemma is given below.

\begin{lemma}
\label{jdd}
    If $\F:\R^n\to\R$ and the distribution $\D\in[0,1]^n\times\{-1,1\}$ satisfy $\E_{(x,y)\sim \D}[y\F(x)]\le A$ and $\max_{x\in[0,1]^n}|\F(x)|\le B$, then $A_{\D}(\F)\le 1+\frac{A}{B}$.
\end{lemma}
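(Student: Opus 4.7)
The plan is a one-line decomposition of $\E_{(x,y)\sim\D}[y\F(x)]$ according to whether $\F$ is correct on the sample, using only the bound $|\F(x)| \le B$ together with $y \in \{-1,1\}$.

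Let $p = A_\D(\F)$. Since $\widehat{\F}(x) = \sign(\F(x))$, the event $\{\widehat{\F}(x)=y\}$ coincides (up to possibly a null set where $\F(x)=0$) with $\{y\F(x)>0\}$. The first step is to observe that under the hypotheses, $y\F(x) \in [-B,B]$ almost surely. The second step splits the expectation as
\begin{equation*}
\E_{(x,y)\sim\D}[y\F(x)] \;=\; \E\bigl[y\F(x)\,\I(y\F(x)>0)\bigr] \;+\; \E\bigl[y\F(x)\,\I(y\F(x)\le 0)\bigr].
\end{equation*}
On the event $\{y\F(x)>0\}$ the integrand is non-negative, so the first term is at least $0$; on $\{y\F(x)\le 0\}$ the integrand is at least $-B$ and the event has probability $1-p$, so the second term is at least $-B(1-p)$. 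This gives the lower bound
\begin{equation*}
\E_{(x,y)\sim\D}[y\F(x)] \;\ge\; -B(1-p) \;=\; Bp - B.
\end{equation*}

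Combining with the hypothesis $\E_{(x,y)\sim\D}[y\F(x)] \le A$ yields $Bp - B \le A$, i.e., $p \le 1 + A/B$, which is the desired conclusion. There is no real obstacle; the only minor care is the measure-zero set where $\F(x)=0$, which can be handled either by convention on $\sign(0)$ or by noting that it does not affect the bound, and which in the applied context of Theorem~\ref{sct-r} is irrelevant since distributions in $\D(n)$ have a positive separation bound.
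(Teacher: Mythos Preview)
Your proof is correct and essentially identical to the paper's: both establish $\E_{(x,y)\sim\D}[y\F(x)]\ge -B(1-A_\D(\F))$ by bounding the contribution on the misclassified event below by $-B$ and on the correctly classified event below by $0$, then combine with the hypothesis $\E[y\F(x)]\le A$. The paper compresses this into one line without writing out the indicator decomposition, but the argument is the same.
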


\begin{proof}
    There are $\E_{(x,y)\sim \D}[y\F(x)]\ge -(\max_{x\in[0,1]^n}|\F(x)|)\Pr_{(x,y)\sim\D}(y\ne\sign(\F(x)))=-B(1-A_{\D}(\F))$, so $A\ge-B+BA_{\D}(\F)$, that is, $A_{\D}(\F)\le 1-\frac{A}{B}$.
\end{proof}

\section{Proof of Proposition \ref{ovf}}
\label{p0}
\begin{proof}

We take a $c>0$ such that $\ln(1+e^{-c})\ge \ln2-\ln2/800$, $1-(1/e)^{4c}<0.1$. Then take an $n$ such that $\ln(1+e^{-n/2+2c})<\ln2/2$. Let $N$ satisfy $(\frac{4(n+1)(\sqrt{5\log(4)}+\sqrt{2log2n})}{\sqrt{98N/200}}+6(n+2)\sqrt{\frac{\ln(2/\delta)}{2N}})<\ln2/800$.

    We consider the following distribution $\D$:

    (c1): Let $s_1=\{(x,1):x\in[0,1],\sum x=n/2+c,||x||_{-\infty}\ge 2c/n\}$, $||x||_{-\infty}$ mean the minimum of the weight of $|x|$; $s_2=\{(x,-1):x\in[0,1],\sum x=n/2-c,||x||_{\infty}\le 1-2c/n\}$; $s_3=\{(x,-1):x\in[0,1],\sum x=n-c\}$;

    (c2): $\Pr_{(x,y)\sim\D}(\sum x=n-c)=1/100$, and $\D$ is a uniform distribution in $s_3$;

    (c3): $\Pr_{(x,y)\sim\D}(\sum x=n/2+c)=\Pr_{(x,y)\sim\D}(\sum x=n/2-c)=99/200$, and $\D$ is a uniform distribution in $s_1\cup s_2$.

    Let $W_0=1$, then we show this distribution and $W_0$ are what we want.

    (1) in Theorem: Let $\F_1=Relu(\one x)-c/2\in \Hyp_1(n)$. Then $\F_1(x)>0$ for all $x$ such that $\sum x=c$, and $\F_1(x)<0$ for all $x$ such that $\sum x=-c$, so $A_\D(\F_1)\ge0.99$.

    (2) in Theorem: We use the following parts to show the (2) in the Theorem.

    {\bf Part One.} With probability at least $1-3e^{-2N/200^2}$ of $\D_{tr}\sim\D^N$, there are at least $N/200$ points in $\D_{tr}\cap s_3$, and at least $98/200N$ points with label 1 in $\D_{tr}$, at least $98/200N$ points with label -1 in $\D_{tr}$.


Using the Hoeffding inequality and $\Pr_{(x,y)\sim\D}(\sum x=n-c)=1/100$, we know that with probability at least $1-e^{-2N/200^2}$ of $\D_{tr}$, there are at least $N/200$ points in $s_3$. Using also the Hoeffding inequality and $\Pr_{(x,y)\sim\D}(y=1)=99/200$, we know that with probability at least $1-e^{-2N(99/200-98/200)^2}$ of $\D_{tr}$, there are at least $98/200N$ points with label 1 in $\D_{tr}$; similar, with probability at least $1-e^{-2N(101/200-98/200)^2}$ of $\D_{tr}$, there are at least $98/200N$ points with label -1 in $\D_{tr}$. Adding them, we get the result.




{\bf Part Two.} For a $\D_{tr}$ that satisfies Part One, if $\F\in{\arg\min}_{f\in\Hyp_{W}(n)}\sum_{(x,y)\in \D_{tr}}L(f(x),y)$, then there is $\sum_{(x,y)\in\D_{tr}}L(\F(x),y)\le\frac{199\ln2+\ln(1+e^{-n/2+2c})}{200}N$.

We just consider the following network $\F_1\in\Hyp_1(n)$: $\F_1=-\Relu(\one x-(n/2+c))$, then $\sum_{(x,y)\in\D_{tr}}L(\F_1(x),y)=\ln 2|\D_{tr}/s_3|+\ln 1+e^{-n/2+2c}|\D_{tr}\cap s_3|\le \frac{199\ln2+\ln(1+e^{-n/2+2c})}{200}$. Hence, for any $\F\in{\arg\min}_{f\in\Hyp_{W}(n)}\sum_{(x,y)\in \D_{tr}}L(f(x),y)$, there must be $\sum_{(x,y)\in\D_{tr}}L(\F(x),y)\le\sum_{(x,y)\in\D_{tr}}L(\F_1(x),y)\le\frac{199\ln2+\ln(1+e^{-n/2+2c})}{200}N$, which is what we want.

{\bf Part Three.} If $\F\in\Hyp_{1}(n)$ such that $\F(x)\ge0$ for all $(x,-1)\in s_3$. Then $\E_{(x,y)\sim\D}[L(\F(x),y)]\ge 99/100\ln 1+e^{-c}+1/100\ln 2$.

   Consider that for any $(x_1,1)\in s_1$, there must be $(x_1-2c\one/n,-1)\in s_2$; on the other hand, if $(x_2,-1)\in s_2$, there must be $(x_2+2c\one/n,1)\in s_1$. So we can match the points in $s_1$ and $s_2$ one by one by adding or subtracting a vector $2c\one/n$.

   Moreover, for any $x\in[0,1]$ and $x\in\Hyp_1(n)$, there are $|\F(x)-\F(x-2c\one/n)|\le2c$, which implies $L(\F(x),1)+L(\F(x-2c\one/n),-1)=\ln (1+e^{-\F(x)})+\ln(1+e^{\F(x-2c\one/n)})\ge 2\ln 1+e^{-c}$. So for a $(x_1,1)\in s_1$ and $(x_2,-1)\in s_2$ where $x_2=x_1-2c\one/n$, there must be $L(\F(x_1),1)+L(\F(x_2),-1)\ge2\ln 1+e^{-c}$.

   Hence, by $\F(x)>0$ for all $(x,-1)\in s_3$,   $\E_{(x,y)\sim\D}[L(\F(x),y)]\ge 99/200\ln (1+e^{-c})+\ln 2/100$.

    {\bf Part Four.} For any network $\F\in\Hyp_1(n)$ such that $\F(x)<0$ for a $x\in s_3$, then $A_\D(\F)<60\%$.

    Firstly, we show that if $z_1,z_2,z_3$ are collinear, without loss of generality, assuming $z_2$ is between $z_1$ and $z_3$, then $\F(z_1)\ge\F(z_2)\ge\F(z_3)$ or $\F(z_1)\le\F(z_2)\le\F(z_3)$.  Consider that $z_1,z_2,z_3$ are collinear, so $z_2=\lambda z_1+(1-\lambda)z_3$ for some $\lambda\in(0,1)$. So let $f(k)=\Relu(k (Wz_1+b)+(1-k) (Wz_3+b))$, there are $f(0)=\Relu(Wz_3+b)$, $f(1)=\Relu(Wz_1+b)$ and $f(\lambda)=\Relu(\lambda (Wz_1+b)+(1-\lambda) (Wz_3+b))=\Relu(Wz_2+b)$. Consider that $\Relu(\cdot)$ is a monotonic function, so that $f(k)$ is also an monotonic function about $k\in\R$, so we get the result.

    Secondly, for any $(z,-1)\in s_2$, let $x_z$ satisfy: $(x_z,1)\in s_1$ and $x,x_z,z$ are collinear. Then we have that:

    (1): For any $(z,-1)\in s_2$, $\F$ must give the wrong label to $x_z$ or $z$. If not, there are $\F(x)<0$, $\F(x_z)>0$ and $\F(z)<0$. By the above result, it is not possible.

    (2): Let $S=\{x_z:(z,1)\in s_2\}\subset s_1$, then $\Pr_{(x,y)\sim\D}(x\in S|x\in s_1)\ge (1-4c/n)^{n-1}$. Because for any $(z,1)\in s_2$, $\frac{||x-x_z||_2}{||x-z||_2}=\frac{\sum(x-x_z)}{\sum(x-z)}=\frac{n/2-2c}{n/2}$, which is a constant value, where $\sum x$ means the sum of the weights of $x$, so $S$ is a proportional scaling of $s_1$ with the ratio $\frac{n-4c}{n}$, we get the result.

So, there are: $A_\D(\F)\le max\{\Pr_{(x,y)\sim\D}((x,y)\in s_2),\Pr_{(x,y)\sim\D}((x,y)\in S)\}+\Pr_{(x,y)\sim\D}((x,y)\in s_3)+\Pr_{(x,y)\sim\D}(s_1/S)\le \frac{101+99(1-(1-4c/n)^{n-1})}{200}\le101/200+99/200*(1-(1/e)^{4c})\le0.6$, use the definition of $c$.

{\bf Part Five.} Prove the Theorem.

We show that with probability $1-3e^{-2N/200^2}-\delta$ of $\D_{tr}$, for any $\F\in{\arg\min}_{f\in\Hyp_{W}(n)}\sum_{(x,y)\in \D_{tr}}L(f(x),y)$, $\F$ must give the correct label to some points in $s_3$. Then by part four, we can get the result.

By part one, with probability at least $1-3e^{-2N/200^2}$ of $\D_{tr}$, there are at least $N/200$ points in $\D_{tr}\cap s_3$, and at least $98N/200$($98N/200$) points has label 1(-1). Hence, by Lemma \ref{zzdl} and Theorem \ref{fanhua}, we know that, with probability $1-\delta$ of $\D_{tr}$, there are $|\sum_{(x,y)\in\D_{tr}}L(\F(x),y)/N-\E_{(x,y)\sim\D}[L(\F(x),y)]|\ge \frac{4(n+1)(\sqrt{5\log(4)}+\sqrt{2log2n})}{\sqrt{98N/200}}+6(n+2)\sqrt{\frac{\ln(2/\delta)}{2N}})$. So, with probability $1-3e^{-2N/200^2}-\delta$, $\D_{tr}$ satisfies the above two conditions.

For such a $\D_{tr}$, assume that $\F\in{\arg\min}_{f\in\Hyp_{W}(n)}\sum_{(x,y)\in \D_{tr}}L(f(x),y)$, and $\F$ must give the correct label to some points in $s_3$.

If not, by part two, we know that $\sum_{(x,y)\in\D_{tr}}L(\F(x),y)\le\frac{199\ln2+\ln(1+e^{-n/2+2c})}{200}N$.

Then, by part three,  $\E_{(x,y)\sim\D}L(\F(x),y)\ge 99/100\ln 1+e^{-c}+1/100\ln 2$. Hence, by the definition of $\D_{tr}$, there are $\sum_{(x,y)\in\D_{tr}}L(\F(x),y)\ge N(99/100\ln 1+e^{-c}+1/100\ln 2)-N(\frac{4(n+1)(\sqrt{5\log(4)}+\sqrt{2log2n})}{\sqrt{98N/200}}-6(n+2)\sqrt{\frac{\ln(2/\delta)}{2N}})$.

By the definition of $c,n$ and $N$,   there are $\sum_{(x,y)\in\D_{tr}}L(\F(x),y)/N\ge(99/100\ln 1+e^{-c}+1/100\ln 2)-(\frac{4(n+1)(\sqrt{5\log(4)}+\sqrt{2log2n})}{\sqrt{98N/200}}+6(n+2)\sqrt{\frac{\ln(2/\delta)}{2N}})\ge\frac{199.5\ln2}{200}>\frac{199\ln2+\ln(1+e^{-n/2+2c})}{200}\ge\sum_{(x,y)\in\D_{tr}}L(\F(x),y)/N$, which leads to contradiction. And we prove the result.
\end{proof}

A required lemma is given below.

\begin{lemma}
\label{zzdl}
    For any given $D=\{(x_i,y_i)\}_{i=1}^N$, if there are at least $K$ samples have label 1 in it and there are at least $K$ samples have label -1 in it, then there are:
    $$\E_{\sigma_i}[\max_{\F\in\Hyp_1(n)}\frac{1}{N}\sum_{i=1}^N\sigma_iL(\F(x_i),y_i)]\le\frac{4(n+1)(\sqrt{5\log(4)}+\sqrt{2\log(2n)})}{\sqrt{K}},$$
    where $\sigma_i$ are i.i.d and $P(\sigma_i=1)=P(\sigma_i=-1)=0.5$.
\end{lemma}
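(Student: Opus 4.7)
The plan is to reduce the bound on the loss class to a Rademacher-complexity bound on $\Hyp_1(n)$ itself, using label-splitting and the Ledoux-Talagrand contraction principle, and then to invoke Lemma \ref{yzw1} for the two-layer class.

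First, I would partition the index set by label: let $S_+ = \{i : y_i = 1\}$ and $S_- = \{i : y_i = -1\}$, so that $|S_+|, |S_-| \ge K$. Write $L(\F(x_i), y_i) = \phi_{y_i}(\F(x_i))$ with $\phi_1(z) = \ln(1+e^{-z})$ and $\phi_{-1}(z) = \ln(1+e^z)$. Subadditivity of the maximum gives
\begin{equation*}
\E_\sigma\!\!\left[\max_{\F \in \Hyp_1(n)} \frac{1}{N}\sum_{i=1}^N \sigma_i L(\F(x_i), y_i)\right] \le \sum_{s \in \{+,-\}} \E_\sigma\!\!\left[\max_{\F} \frac{1}{N}\sum_{i \in S_s} \sigma_i \phi_s(\F(x_i))\right].
\end{equation*}

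Second, a direct calculation shows that $|\phi_s'(z)| = 1/(1+e^{\mp z}) \le 1$, so each $\phi_s$ is $1$-Lipschitz. Applying the Ledoux-Talagrand contraction inequality to strip off the loss, each summand on the right is at most
\begin{equation*}
\E_\sigma\!\!\left[\max_{\F} \frac{1}{N}\sum_{i \in S_s} \sigma_i \F(x_i)\right] = \frac{|S_s|}{N}\, \Rad_{\Hyp_1(n)}(\{x_i\}_{i \in S_s}).
\end{equation*}

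Third, I would invoke Lemma \ref{yzw1} for the two-layer class $\Hyp_1(n)$: the activation is $1$-Lipschitz, and since all parameters lie in $[-1,1]$, the $L_{1,\infty}$-norm of the first affine map (weights plus bias) is at most $n+1$, and of the output layer is at most $2$. This yields $\Rad_{\Hyp_1(n)}(\{x_i\}_{i \in S_s}) \le \frac{2(n+1)(\sqrt{5\log 4} + \sqrt{2\log(2n)})}{\sqrt{|S_s|}}$ (the $\sqrt{5\log 4}$ factor comes from counting the loss-contraction step alongside the hidden layer, as in the proof of Lemma \ref{radjs}). Combining with step two gives
\begin{equation*}
\frac{|S_s|}{N} \cdot \frac{2(n+1)(\sqrt{5\log 4} + \sqrt{2\log(2n)})}{\sqrt{|S_s|}} = \frac{\sqrt{|S_s|}}{N}\cdot 2(n+1)(\sqrt{5\log 4} + \sqrt{2\log(2n)}),
\end{equation*}
and using the elementary bound $\sqrt{|S_s|}/N \le 1/\sqrt{|S_s|} \le 1/\sqrt{K}$ (from $|S_s| \le N$ and $|S_s|\ge K$), the sum over $s \in \{+,-\}$ delivers the claimed bound.

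The only delicate step is the third one: matching the claimed constants requires tracking what effective "depth" one uses in Lemma \ref{yzw1} after the contraction, and verifying the $L_{1,\infty}$-norm bounds for $\Hyp_1(n)$. This is pure bookkeeping; the conceptual content of the argument is simply label-splitting plus contraction plus the two-layer Rademacher bound.
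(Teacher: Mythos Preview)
Your proposal is correct and matches the paper's approach: split by label, then bound each piece via Lemma~\ref{yzw1}. The only cosmetic difference is that the paper absorbs $\ln(1+e^{z})$ as an extra activation layer (taking $d=2$ in Lemma~\ref{yzw1}, which is where the $\sqrt{5\log 4}$ actually comes from) rather than stripping it via Ledoux--Talagrand as you do; your parenthetical justification for the $\sqrt{5\log 4}$ is muddled, but your route in fact yields the tighter constant $\sqrt{4\log 4}$, so the stated bound follows a fortiori.
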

\begin{proof}
    We have
    \begin{equation*}
        \begin{array}{cl}
             &\E_{\sigma_i}[\max_{\F\in\Hyp_1(n)}\frac{1}{N}\sum_{i=1}^N\sigma_iL(\F(x_i),y_i)]  \\
             =&\E_{\sigma_i}[\max_{\F\in\Hyp_1(n)}\frac{1}{N}\sum_{i=1}^N\sigma_i \ln 1+e^{y_i\F(x_i)}]  \\
            \le & \E_{\sigma_i}[\max_{\F\in\Hyp_1(n)}\frac{1}{|D_1|}\sum_{x\in D_1}\sigma_i \ln 1+e^{\F(x)}]+\E_{\sigma_i}[\max_{\F\in\Hyp_1(n)}\frac{1}{|D_2|}\sum_{x\in D_2}\sigma_i \ln 1+e^{\F(x)}]
        \end{array}
    \end{equation*}
Hence, see $2\ln(1+e^x)$ as an activation of the second layer, and the output layer  is $\F_2(x)=x/2$. By Lemma \ref{yzw1}, we have $\E_{\sigma_i}[\max_{\F\in\Hyp_1(n)}\frac{1}{|D_1|}\sum_{x\in D_1}\sigma_i \ln 1+e^{\F(x)}]\le \frac{2(n+1)(\sqrt{5\log(4)}+\sqrt{2\log(2n)})}{\sqrt{|D_1|}}$. Similar for an other part, so we get the result.
\end{proof}

\section{Proof of Theorem \ref{lossf}}
\label{p8}
At first, we give the proof when the loss function $L_b$ satisfies condition (1) in Definition \ref{bdl}.

\begin{proof}
We first define some symbols.

Let the loss function $L_b$ be a bad loss function that satisfies (1) in Definition \ref{bdl}. Let $L_b(z_1,1)=\min_{x\in\R}L_b(x,1)$ and $L_b(z_{-1},-1)=\min_{x\in\R}L_b(x,-1)$, assume $|z_1|+|z_{-1}|=z$.  For any given $x\in\R^n$, let $x_{t}=(x_2,x_3,\dots,x_n)\in\R^{n-1}$, where $x_i$ is the $i$-the weight of $x$; let $x^{t}=(0,x_1,x_2,x_3,\dots,x_n)\in\R^{n+1}$.

Then we prove the Theorem in three parts:

{\bf Part One:} We construct the following distribution $\D_b\in[0,1]^n\times \{-1,1\}$:

    (1): $\D_b$ is defined on $\{x:x\in[0,1]^n,0.6\le x_1\ or\ x_1\le0.4\}\times\{-1,1\}$, where $x_1$ is the first weight of $x$.

    (2): $x$ has label 1 if and only if $x_1\ge0.6$, or $x$ has label -1.

    (3): The marginal distribution about $x$ of $\D_b$ is an uniform distribution.

{\bf Part Two:} For any $\D_{tr}\sim\D^N_b$, we consider the following network $\F_{\D_{tr}}$.

Let $\D_{tr-t}=\{(x_{t},y)\|(x,y)\in\D_{tr}\}$. By Lemma \ref{txx}, with probability 0.99, there is a $\F_t$ with width $W$ not greater than $O(zN^5n^2)$ such that: if $(x_{t},1)\in \D_{tr-t}$, there are $\F_t(x_{t})=z_{1}$; if $(x_{t},-1)\in \D_{tr-t}$, there are $\F_t(x_{t})=z_{-1}$. Let $\F_t(x)=\sum_{i=1}^W a_i\Relu(W_ix+b_i)+c$.

Then, we construct $\F_{\D_{tr}}:\R^n\to\R$ as $\F=\sum_{i=1}^W a_i\Relu(W_i^tx+b_i)+c$.

{\bf Part Three:} We prove the Theorem.

For any $\D_{tr}\sim\D^N_b$, we consider the network $\F_{\D_{tr}}$ mentioned in part two. Firstly, we show that $\F_{\D_{tr}}(x)\in \arg\min_{\F\in\H_{W}(n)}\sum_{(x,y)\in\D_{tr}}L(\F(x),y)$. Because $\F_{\D_{tr}}(x)=\F_t(x_t)=z_1$ when $(x,1)\in\D_{tr}$ and $\F_{\D_{tr}}(x)=\F_t(x_t)=z_{-1}$ when $(x,-1)\in\D_{tr}$. So $L(\F_{\D_{tr}}(x),y)$ reaches the minimum value for any $(x,y)\in\D_{tr}$, which implies $\F_{\D_{tr}}\in \arg\min_{\F\in\H_{W}(n)}$.

Secondly, there are $A_\D(\F_{\D_{tr}}(x))=0.5$. If $A_\D(\F_{\D_{tr}}(x))>0.5$, then there must be a pair of $(x_1,1)$ and $(x_2,-1)$ in distribution $\D_b$ such that $(x_1)_t=(x_2)_t$ and $\F_{\D_{tr}}(x)$ give the correct label to $x_1$ and $x_2$.  But it is easy to see that $\F_{\D_{tr}}(x)=\F_t(x_t)$ where $\F_t$ is mentioned in part two, so, $\F_{\D_{tr}}(x_1)=\F_t(x)((x_1)_t)=\F_t(x)((x_2)_t)=\F_{\D_{tr}}(x_2)$, which is in contradiction to $\F_{\D_{tr}}(x)$ gives the correct label to $x_1$ and $x_2$. This is what we want.

\end{proof}

Some required lemmas are given.
\begin{lemma}
\label{th3-1}
For any $v\in\R^n$ and $T\ge1$, let $u\in\R^n$ be uniformly randomly sampled from the hypersphere $S^{n-1}$. Then we have $\Pr(|\langle u,v  \rangle|<\frac{||v||_2}{T}\sqrt{\frac{8}{n\pi}})<\frac{2}{T}$.
\end{lemma}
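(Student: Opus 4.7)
The plan is to reduce the problem to a one-dimensional density estimate using rotational symmetry and then bound the maximum of the marginal density of one coordinate of $u$ by a known Gamma-function estimate.

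First, I would observe that if $v = 0$ the claim is trivial (both sides are $0$), so assume $\|v\|_2 > 0$. Since the distribution of $u$ on $S^{n-1}$ is rotation invariant, I can choose coordinates so that $v = \|v\|_2\, e_1$; then $\langle u, v\rangle = \|v\|_2\, u_1$, and the target probability becomes $\Pr(|u_1| < \tfrac{1}{T}\sqrt{8/(n\pi)})$. For $n = 1$ the bound holds trivially (either $|u_1| = 1 \ge \sqrt{8/\pi}/T$ for $T \ge 2$, and for $T=1$ the bound $2/T = 2$ is vacuous), so I can assume $n \ge 2$.

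Next, I would use the standard fact that the marginal density of $u_1$ when $u$ is uniform on $S^{n-1}$ ($n\ge 2$) is
\[
f(t) \;=\; \frac{\Gamma(n/2)}{\sqrt{\pi}\,\Gamma((n-1)/2)}\,(1 - t^2)^{(n-3)/2}, \qquad t \in [-1,1].
\]
This density is maximized at $t = 0$, and thus for any $c > 0$,
\[
\Pr(|u_1| < c) \;\le\; 2c\, f(0) \;=\; 2c\,\frac{\Gamma(n/2)}{\sqrt{\pi}\,\Gamma((n-1)/2)}.
\]
The key quantitative step is then to bound the Gamma ratio. By Wendel's inequality (or equivalently Gautschi's inequality) applied with $x = (n-1)/2$ and $s = 1/2$, one has $\Gamma(n/2)/\Gamma((n-1)/2) \le \sqrt{(n-1)/2} \le \sqrt{n/2}$. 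Substituting gives $f(0) \le \sqrt{n/(2\pi)}$.

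Finally, I would plug in $c = \tfrac{1}{T}\sqrt{8/(n\pi)}$ and simplify:
\[
\Pr\!\left(|u_1| < \tfrac{1}{T}\sqrt{\tfrac{8}{n\pi}}\right) \;\le\; 2 \cdot \tfrac{1}{T}\sqrt{\tfrac{8}{n\pi}} \cdot \sqrt{\tfrac{n}{2\pi}} \;=\; \tfrac{2}{T}\sqrt{\tfrac{8}{2\pi^2}} \;=\; \tfrac{4}{\pi T} \;<\; \tfrac{2}{T},
\]
using $4/\pi < 2$. Undoing the rotation yields the claim. The only genuinely non-routine ingredient is the Gamma-ratio estimate; everything else is just unpacking the marginal density of a uniform point on the sphere, so I would expect the write-up to be quite short.
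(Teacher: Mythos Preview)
The paper does not give a proof of this lemma: it simply cites it as Lemma~13 of \citet{memp}. Your direct argument via the marginal density of one coordinate of a uniform point on $S^{n-1}$, together with Wendel's inequality for the Gamma ratio, is the standard route and is essentially correct and self-contained.

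There is one small gap to patch. The assertion that the marginal density $f(t)=\frac{\Gamma(n/2)}{\sqrt{\pi}\,\Gamma((n-1)/2)}(1-t^2)^{(n-3)/2}$ is maximized at $t=0$ holds only for $n\ge 3$; when $n=2$ the exponent is $-1/2$, the density is \emph{minimized} at the origin, and the bound $\Pr(|u_1|<c)\le 2c\,f(0)$ actually fails (indeed $\Pr(|u_1|<c)=\tfrac{2}{\pi}\arcsin(c)>\tfrac{2c}{\pi}=2c\,f(0)$ for $c>0$). The fix is a one-line separate check for $n=2$: for $T=1$ the conclusion $2/T=2$ is vacuous, while for $T\ge 2$ one has $c=\tfrac{2}{T\sqrt{\pi}}<1$ and, by convexity of $\arcsin$ on $[0,1]$ (so $\arcsin(c)\le\tfrac{\pi}{2}c$), $\Pr(|u_1|<c)=\tfrac{2}{\pi}\arcsin(c)\le c<\tfrac{2}{T}$. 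With this case handled, your argument is complete.
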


This is Lemma 13 in \citep{memp}.

\begin{lemma}
\label{txx}
    For any $N$ points $\{x_i\}_{i=1}^N$ randomly selected in $[0,1]^n$, and any $N$ given point $\{y_i\}_{i=1}^N$ in $[-a,a]$.
    With probability $0.99$ of $\{x_i\}_{i=1}^N$,  there is a network $\F$ with width not more than $O(aN^5n^2)$ and $\F(x_i)=y_i$.
\end{lemma}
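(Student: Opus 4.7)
The plan is to reduce $n$-dimensional memorization to one-dimensional memorization by a random projection, then implement the 1D interpolator with bounded-parameter ReLU units; this mirrors the approach of \citet{memp}.

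First, I would sample an auxiliary unit vector $u\in S^{n-1}$ uniformly at random. Applying Lemma~\ref{th3-1} with $v=x_i-x_j$ and threshold $T=\Theta(N^2)$, and union-bounding over the $\binom{N}{2}$ pairs, yields with probability at least $0.995$
$$|\langle u,x_i-x_j\rangle|\;\ge\;\frac{\|x_i-x_j\|_2}{T}\sqrt{\frac{8}{n\pi}}\quad\text{for all }i\ne j.$$
In parallel, since each $x_i$ is uniform on $[0,1]^n$, the probability that a given pair lies within distance $\epsilon$ is $O(\epsilon^n)$, so a second union bound shows that the minimum pairwise distance $r$ satisfies $r\ge\Omega(1/N^2)$ with probability at least $0.995$ (the worst case being $n=1$). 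Combining the two estimates, with probability at least $0.99$ the scalars $z_i=\langle u,x_i\rangle\in[-\sqrt{n},\sqrt{n}]$ are pairwise separated by $\delta=\Omega(1/(N^4\sqrt{n}))$.

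Next, I would sort $z_{(1)}<\cdots<z_{(N)}$ with corresponding re-labelled targets $y_{(i)}\in[-a,a]$ and take the continuous piecewise-linear interpolant $G(t)=\sum_{i=1}^{N}\alpha_i\Relu(t-z_{(i)})+\beta$ satisfying $G(z_{(i)})=y_{(i)}$, whose slopes obey $|\alpha_i|=O(a/\delta)$. To realise $F(x)=G(\langle u,x\rangle)$ as a two-layer ReLU network with parameters in $[-1,1]$, every first-layer unit is taken of the form $\Relu(\gamma\,u^\top x+b)$ with $\gamma>0$; the cap $\|\gamma u\|_\infty\le 1$ constrains $\gamma\le 1/\|u\|_\infty$, and $|b|\le 1$ forces $\gamma$ to be smaller still whenever a breakpoint has $|z_{(i)}|$ close to $\sqrt{n}$. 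Stacking enough units per breakpoint to produce both the required bias and the required slope, and summing over the $N$ breakpoints, gives a total width within $O(aN^5n^2)$.

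The hard part is this last step: under $\|w\|_\infty,|b|,|a_i|\le 1$, the per-unit slope in the $u$-direction is capped by $1/\|u\|_\infty$ (where $\|u\|_\infty=\Theta(\sqrt{\log n/n})$ with high probability), while breakpoints with $|z_{(i)}|\approx\sqrt{n}$ clash with the bias cap $|b|\le 1$ unless $\gamma$ is chosen small, which in turn inflates the number of stacked units needed to match the 1D slope $a/\delta$. Balancing $\gamma$ against the per-breakpoint unit count, together with the probabilistic bounds on $\|u\|_\infty$ and on the minimum pairwise distance $r$, is what drives the polynomial width bound; a naive accounting easily loses extra factors of $N$ or $n$ and fails to fit inside $O(aN^5n^2)$.
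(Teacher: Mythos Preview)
Your plan is essentially the paper's: project to one dimension via Lemma~\ref{th3-1}, build a one-dimensional ReLU interpolator, then normalize parameters by scaling and replication. The paper's execution is a bit cleaner and dissolves the concern you flag in your last paragraph: it uses localized hat functions
\[
\F(x)=\sum_{i=1}^N \tfrac{y_i}{\delta}\bigl(\Relu(wx-(wx_i+\delta))+\Relu(wx-(wx_i-\delta))-2\Relu(wx-wx_i)\bigr)
\]
rather than a global hinge interpolant, and since the projection vector satisfies $\|w\|_2=1$ it automatically has $\|w\|_\infty\le 1$, so there is nothing to balance against $\|u\|_\infty$; the only cap left is the bias $|wx_i|\le n$, and scaling the inner layer by $1/n$ and then replicating each of the $3N$ units $O(n\cdot a/\delta)=O(aN^4n^2)$ times yields width $O(aN^5n^2)$ directly.
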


\begin{proof}
    {\bf Part One:} First, we show that with probability $0.99$, there is $||x_i-x_j||_2\ge\frac{0.01}{2N^2\sqrt{n}}$ for all pairs $i,j$.

For any $i,j\in\N$ and $\epsilon>0$, there are:
\begin{equation*}
    \begin{array}{ll}
         &  P(||x_i-x_j||_2\ge\epsilon) \\
        = & P(\sum_{k=1}^n((x_i)_k-(x_j)_k)^2\ge\epsilon^2)\\
        \ge& \Pi_{k=1}^n P(((x_i)_k-(x_j)_k)^2\ge\epsilon^2/n)\\
        \ge&\Pi_{k=1}^n (1-\frac{2\epsilon}{\sqrt{n}})\\
        \ge&1-2\epsilon\sqrt{n}\\
    \end{array}
\end{equation*}

So $\Pr(||x_i-x_j||_2\ge\epsilon,\forall{(i,j)})\ge 1-\sum_{i\ne j}P(||x_i-x_j||_2<\epsilon)\ge 1-2\epsilon\sqrt{n}N^2$. Take $\epsilon=\frac{0.01}{2\sqrt{n}N^2}$, we get the result.

    {\bf Part Two:} There is a $w\in\R^n$ such that $||w||_2=1$ and $|w(x_i-x_j)|\ge\frac{0.01}{4N^4n}\sqrt{\frac{8}{\pi}}$

    By Lemma \ref{th3-1}, for any pair $i,j$, $\Pr_{u}(|u(x_i-x_j)|<\frac{||x_i-x_j||_2}{2N^2}\sqrt{\frac{8}{n\pi}})<\frac{1}{N^2}$. So,  $\Pr_{u}(|u(x_i-x_j)|\ge\frac{||x_i-x_j||_2}{2N^2}\sqrt{\frac{8}{n\pi}},\forall (i,j))\ge1-\sum_{i\ne j}\Pr_{u}(|u(x_i-x_j)|<\frac{||x_i-x_j||_2}{2N^2}\sqrt{\frac{8}{n\pi}})>1-1=0$, which implies that there is a $w$ such that $||w||_2=1$ and for any pair $(i,j)$, there are $|w(x_i-x_j)|\ge\frac{||x_i-x_j||_2}{2N^2}\sqrt{\frac{8}{n\pi}}\ge\frac{0.01}{4N^4n}\sqrt{\frac{8}{\pi}}$, use the result of part one.

{\bf Part Three:} Prove the result.

Let $w$ be the vector mentioned in part two, and $wx_i<wx_j$ when $i\ne j$. Let $\delta=\frac{0.01}{4N^4n}\sqrt{\frac{8}{\pi}}$. Now, we consider the following network:
    $$\F(x)=\sum_{i=1}^N \frac{y_i}{\delta}(\Relu(wx-(wx_i+\delta))+\Relu(wx-(wx_i-\delta))-2\Relu(wx-wx_i)).$$
Easy to verify $\F(x_i)=y_i$. Consider $|wx_i|\le n$ and $|\frac{y_i}{\delta}|<400aN^4n$, so $\F\in\Hyp_{O(aN^5n^2)}(n)$. This is what we want.
\end{proof}

We now give the proof of when the loss function $L_b$ satisfies (2) in definition \ref{bdl}.

\begin{proof}
    In this proof, we only consider a very simple distribution $\D$: $\Pr_{(x,y)\sim\D}((x,y)=(0,-1))=\Pr_{(x,y)\sim\D}((x,y)=(\one,1))=0.5$, where $\one$ is a all one vector.

    We show that for any $\D_{tr}$ and $W$, let $\F\in \arg\min_{f\in\Hyp_{W}(n)}\sum_{(x,y)\in \D_{tr}}L_b(f(x),y)$, there are $A_\D(\F)=0.5$.

    {\bf Part one:} When $\D_{tr}$ contains only $(0,-1)$, then there must be $\F=\sum_{i=1}^W-\Relu(w_ix+1)-1$ for some $x_i$, which implies $\F(\one)<0$, so $A_\D(\F)=0.5$.

    {\bf Part two:} When $\D_{tr}$ contains only $(\one,1)$, then there must be $\F=\sum_{i=1}^W\Relu(\one x+1)+1$, which implies $\F(0)>0$, so $A_\D(\F)=0.5$.

    {\bf Part Three:} When $\D_{tr}$ contains $(\one,1)$ and $(0,-1)$, we will show that $\F=\sum_{i=1}^W\Relu(\one x+1)+1\in \arg\min_{f\in\Hyp_{W}(n)}L_b(f(0),-1)+L_b(f(\one),1)$. Consider that $A_\D(\F)=0.5$ for such $\F$, we can prove the Theorem.

If $\F=\sum_{i=1}^W\Relu(\one x+1)+1\notin \arg\min_{f\in\Hyp_{W}(n)}L_b(f(0),-1)+L_b(f(\one),1)$. Let $\F_0(x)=\sum_{i=1}^Wa_i\Relu(W_ix+b_i)+c\in \arg\min_{f\in\Hyp_{W}(n)}L_b(f(0),-1)+L_b(f(\one),1)$. Then, let $\F_0(0)=b$ and $\F_0(\one)=a$.

By $\phi(a)+\phi(-b)=L_b(\F_0(0),-1)+L_b(\F_0(\one),1)< L_b(\F(0),-1)+L_b(\F(\one),1)=\phi(W(n+1)+1)+\phi(-W-1)$, and $\phi$ is a decreasing concave function, there must be $W(n+1)+1-a<-b+W+1$, which implies $|a-b|>Wn$.

Consider $|a-b|=|\sum_{i=1}^Wa_i\Relu(b_i)-\sum_{i=1}^Wa_i\Relu(W_i\one+b_i)|\le|\sum_{i=1}^Wa_i\one W_i|\le Wn$. This is a contradiction to $|a-b|>Wn$ which was shown above. So, assumption is wrong, so $\F=\sum_{i=1}^W\Relu(\one x+1)+1\in \arg\min_{f\in\Hyp_{W}(n)}L_b(f(0),-1)+L_b(f(\one),1)$, this is what we want.
  \end{proof}

\section{Extend the result to general neural network}
\label{ybn}
For multi-layer neural networks, we can show that if there is enough data and the network is large enough, then generalization can also be ensured for the network which can minimum the empirical risk. Unfortunately, due to the complexity of depth networks, we are unable to provide a good generalization bound of such network.

Denote $\Hyp_{W,D}(n)$ to be the set of all neural networks of layers $D$ with input dimension $n$, width $W$ for each hidden layer, activation function $\Relu$, and all parameters of the transition matrix are in $[-1,1]$. Then, there are:
\begin{Theorem}
\label{yib}
For any given $n\in\N_+$, if $\D\in \D(n)$ satisfies: there is a network $\F\in \Hyp_{W_0,D_0}(n)$ such that $\Pr_{(x,y)\sim \D}(y\F(x)>c)=1$ for a $W_0,D_0\in\N_+,c>0$. Then we have that for any $W\ge \Omega(W_0)$, $D\ge \Omega(D_0)$ and $\delta>0$, with probability at least $1-\delta$ of $\D_{tr}\sim \D^{N}$, it holds $A_\D(\F)\ge 1-O(e^{-W^D/K}+K^n\sqrt{\frac{\ln(K/\delta)}{N}})$
for all $\F\in \MH_{W,D}(\D_{tr},n)$, where $K=(\frac{c}{2^{D_0+2}W_0^{D_0-1}n})^{-1}$.
\end{Theorem}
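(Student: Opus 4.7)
The plan follows the four-part template of the proof of Theorem \ref{th1}, adapted to handle the additional layers.

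First, I would construct an interpolating network $\F^{*} \in \Hyp_{W,D}(n)$ with a large margin. Starting from the witness $\F_0 \in \Hyp_{W_0,D_0}(n)$ that certifies $y\F_0(x) \ge c$ on $\D$, the first $D_0$ layers of $\F^{*}$ simulate $\F_0$; because the parameter box is only $[-1,1]$, the scalar signal at the end of these layers must be rescaled, losing a factor of at most $2^{D_0+2} W_0^{D_0-1} n$, which is exactly the quantity $K$ in the statement. The remaining $D-D_0$ layers then amplify this signal using the identity $v = \Relu(v) - \Relu(-v)$ combined with $\Omega(W)$ parallel copies per layer (each copy carrying weight at most $1$), yielding an overall margin $m^{*} = \Omega(W^{D}/K)$ once constants are absorbed.

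Second, I would use $\F^{*}$ to bound the empirical loss of any $\F \in \MH_{W,D}(\D_{tr}, n)$: directly $\sum_{(x,y) \in \D_{tr}} L(\F(x),y) \le N \ln(1+e^{-m^{*}}) \le N e^{-m^{*}}$. Exactly as in Part Two of the proof of Theorem \ref{th1}, convexity of $\ln(1+e^{-\cdot})$ together with Lemma \ref{lemma1} forces all but an $O(e^{-m^{*}/2})$ fraction of training points to satisfy $y\F(x) \ge m^{*}/2$. Clamping $\F$ to $g = \F_{-m^{*}/2,\,m^{*}/2}$ (Definition \ref{clamp}) preserves classification and yields $\tfrac{1}{N}\sum_{\D_{tr}} yg(x) \ge \tfrac{m^{*}}{2}(1 - O(e^{-m^{*}/2}))$.

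Third, I would transfer this empirical margin to the population by applying Theorem \ref{zfh} to the class $\{yg(x) : \F \in \Hyp_{W,D}(n)\}$ and then dividing by $m^{*}/2$ to convert expected margin to classification accuracy. The exponential-decay term is handled via $xe^{-x} \le 1/e$ from Lemma \ref{lemma1}, producing the $e^{-W^{D}/K}$ contribution. The main obstacle lies here: a direct application of Lemma \ref{yzw1} yields a Rademacher complexity of order $(W+1)^{D+1}/\sqrt{N}$, which after normalizing by $m^{*}/2$ still carries unwanted $W,D$ dependence and cannot reproduce the clean $K^{n}\sqrt{\ln(K/\delta)/N}$ factor. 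To extract the $K^{n}$ dependence one must instead exploit that the clamped $g$ has output range only $m^{*}$ and Lipschitz constant at most $(W+1)^{D}$, so on a $1/K$-grid of $[0,1]^n$ (containing $O(K^{n})$ cells) the function $yg$ is essentially determined by its cell-wise values. A covering/chaining argument on this grid, in place of the raw product-of-norms bound, delivers the desired $K^n$ form, independent of $W$ and $D$. Combining the three pieces gives
\[
A_\D(\F) \ge 1 - O(e^{-W^{D}/K} + K^{n}\sqrt{\ln(K/\delta)/N}),
\]
as claimed. I expect the Rademacher/covering step to be the hardest; indeed, the authors explicitly flag the difficulty of accurately estimating this dependence for deep networks, which is why the final result tolerates the (exponential-in-$n$) factor $K^{n}$ in exchange for removing the explicit $W$ and $D$ dependence from the uniform-convergence term.
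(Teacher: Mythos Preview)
Your Parts One and Two are essentially the same as the paper's: build a high-margin witness in $\Hyp_{W,D}(n)$ by replicating the $D_0$-layer expressive network and then amplifying through the remaining layers, and deduce that any empirical-risk minimizer has margin $\ge m^{*}/2$ on all but an $O(e^{-m^{*}/2})$ fraction of $\D_{tr}$. The margin $m^{*}$ the paper obtains is $[\tfrac{W}{W_0}]^{D_0-1}\tfrac{cW^{D-D_0}}{2}$, which matches your $\Omega(W^{D}/K)$ up to constants.

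The divergence is in the generalization step. You propose to stay within the Rademacher/uniform-convergence framework of Theorem~\ref{th1}, and when the raw product-of-norms bound from Lemma~\ref{yzw1} fails to shed its $W,D$ dependence, you fall back on a covering/chaining argument over a $1/K$-grid. The paper instead \emph{abandons uniform convergence over the hypothesis class entirely}. Its Part Three uses the network Lipschitz bound (Lemma~\ref{song}: $|\F(x)-\F(x')|\le nW^{D-1}\|x-x'\|_\infty$) to show that whenever a training point has margin $\ge m^{*}/2$, the sign of $\F$ is constant on the surrounding $\ell_\infty$-cube of side $r\approx 1/K$. Part Four partitions $[0,1]^n$ into $m\approx K^n$ such cubes, and Part Five applies Hoeffding \emph{cube by cube to the data} (not to the function class): with probability $1-me^{-N\epsilon_0^2/2}$, every cube of mass $\ge\epsilon_0$ receives a training point, and all but $Ne^{-m^{*}/2+2}$ of those points carry the large margin. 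Summing the masses of the missed cubes and setting $\epsilon_0=\sqrt{2\ln(m/\delta)/N}$ gives the $K^n\sqrt{\ln(K/\delta)/N}$ term directly.

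The practical difference is that the paper's argument is pointwise in $\F$: once you know the margin of \emph{this} minimizer on $\D_{tr}$, Lipschitzness alone propagates correctness to whole cubes, and the only randomness left to control is where the training points fall. This is why the $W,D$ dependence disappears from the concentration term without any work. Your covering route would need to bound the complexity of the class of clamped, grid-discretized functions uniformly, which is doable but heavier, and you have not actually shown that it yields $K^n$ rather than, say, $\sqrt{K^n}$ or something carrying residual $W,D$ factors. The paper's per-cube Hoeffding is the simpler and more robust mechanism here.
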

However, this bound is relatively loose, and how to obtain a bound that is polynomial in $W_0,D_0,c$ is an important question.

\begin{proof}
    {\bf Part One}. For any given $\D_{tr}\sim\D^N$, we show that there is a network $\F\in\Hyp_{W,D}$ such that $y\F(x)\ge[\frac{W}{W_0}]^{D_0-1}\frac{cW^{D-D_0}}{2}$ for any $(x,y)\in\D_{tr}$.

    By the assumption of $\D$ in the theorem, let $\F_{1}\in\Hyp_{W_0,D_0}(n)$ satisfy $\Pr_{(x,y)\sim\D}(y\F_1(x)\ge c)=1$. And $W_i$ is the $i$-th transition matrix of $\F_1$, $b_i$ is the $i$-th bias vector of $\F_1$.

    We will construct $\F$ as $\F=\F_{p2}\circ\F_{p1}$, and we construct the two networks $\F_{p1}$ and $\F_{p2}$ as following:

    $\F_{p1}:\R^n\to\R^W$ which has width $W$ and depth $D_0$, and the output layer of $\F_{p1}$ also uses the $\Relu$ activation function.

Let $W$ be a matrix in $\R^{a,b}$ where $a,b\in\N^+$, and $T(W,a_1,b_1)$ is a matrix in $\R^{a_1,b_1}$ defined as: for any $i\in[a],j\in[b],k_1,k_2\in\Z$, there are $T(W,a_1,b_1)_{k_1[\frac{a_1}{a}]+i,k_2[\frac{b_1}{b}]+j}=W_{i,j}$; other weights of $T(W,a_1,b_1)$ are 0.
Then $\F_{p1}$ is defined as:

(1): The first transition matrix is $T(W_1,W,n)$, and the first bias vector is $T(b_1,W,1)$;

(2): When $i>2$, the $i$-th transition matrix is $T(W_i,W,W)$, and the $i$-th bias vector is $[\frac{W}{W_0}]^{i-1}T(b_i,W,1)$.

Then, we have  $\F_{p1}(x)=[\frac{W}{W_0}]^{D_0-1}\Relu(\F_1(x))$.

For $\F_{p2}:\R^W\to\R$, which has width $W$ and depth $D-D_0$, we define it as:

(1): When $i<D-D_0$, the $i$-th transition matrix is $\I_{W,W}$, and the $i$-th bias vector is 0, where $\I$ means all one matrix;

(2): The last transition matrix is $\I(1,W)$, and the last bias vector is $-[\frac{W}{W_0}]^{D_0-1}\frac{cW^{D-D_0}}{2}$.

Then, $\F=\F_2\circ\F_1$ is what we want.

{\bf Part two}. Similar to the proof of \ref{th1}, there are at most $Ne^{-[\frac{W}{W_0}]^{D_0-1}\frac{cW^{D-D_0}}{4}+2}$ points in $\D_{tr}$ such that $y\F(x)\le [\frac{W}{W_0}]^{D_0-1}\frac{cW^{D-D_0}}{4}$.

{\bf Part three}. If $y\F(x)\ge[\frac{W}{W_0}]^{D_0-1}\frac{cW^{D-D_0}}{4}$, then $y\F(x')>0$ for all $||x'-x||_\infty\le\frac{c}{2^{D_0+1}W_0^{D_0-1}n}$.

As shown in Lemma \ref{song}, there are $y\F(x')\ge[\frac{W}{W_0}]^{D_0-1}\frac{cW^{D-D_0}}{4}-W^{L-1}n||x-x'||_\infty$. So when $||x-x'||_\infty\le\frac{c[\frac{W}{W_0}]^{L_0-1}}{4nW^{L_0-1}}\le\frac{c}{2^{D_0+1}W_0^{D_0-1}n}$, there are $y\F(x')>0$.

{\bf Part four}. Let $r=\frac{c}{2^{D_0+1}W_0^{D_0-1}n}$. we can divide $[0,1]^n$ into $\frac{1}{{(r/2)}^n}$ disjoint cubes that have side length $r/2$. Then by part three, we know that in a cube, $\F$ gives the same label to every point in such a cube when $|\F(x)|\ge [\frac{W}{W_0}]^{D_0-1}\frac{cW^{D-D_0}}{2}$ for at least one $x$ in such cube.

{\bf Part Five}. Prove the result.

By part four, name such $m$ cubes as $c_1,c_2,\cdots,c_m$, and let $\Pr_i=\Pr_{(x,y)\sim\D}(x\in c_i)$ and $\Pr_i\ge \Pr_j$ when $i\ge j$.

As shown in part four, let $S=\{i\in[N],\exists (x,y)\in\D_{tr}\cap c_i,y\F(x)\ge [\frac{W}{W_0}]^{D
_0-1}\frac{cW^{D-D_0}}{4}\}$, then we have $A_\D(\F)\ge \sum_{i\in S}\Pr_i$.

For any $i$, by Hoeffding inequality, with probability $1-e^{-N\Pr_i^2/2}$, there are at least $N\Pr_i/2$ points in cube $c_i$. So for any given $\epsilon_0>0$, let $\Pr_{k_0}\ge \epsilon_0$, then, with probability at least $1-\sum_{i=k_0}^me^{-N\epsilon_0^2/2}$ of $\D_{tr}$, there are at least $N\Pr_i/2$ points in $C_i$ for any $i\ge k_0$.

As shown in part two, there are at most $Ne^{-[\frac{W}{W_0}]^{D_0-1}\frac{cW^{D-D_0}}{4}+2}$ points in $\D_{tr}$ such that $y\F(x)\le [\frac{W}{W_0}]^{D_0-1}\frac{cW^{D-D_0}}{4}$. So, by the above result, let $T=\{k_0,k_0+1,\dots,N\}/S$ and $N(C_i)$ is the number of points in $C_i$, with probability at least $1-\sum_{i=k_0}^me^{-N\epsilon_0^2/2}$ of $\D_{tr}$, there are $\sum_{i\in T}N\Pr_i/2\le \sum_{i\in T} N(C_i) \le Ne^{-[\frac{W}{W_0}]^{D_0-1}\frac{cW^{D-D_0}}{4}+2}$.

 Hence,
 \begin{equation*}
     \begin{array}{cl}
          &  \Pr_\D(\F)\\
         \ge & \sum_{i\in S}\Pr_i\\
         \ge& 1- \sum_{i\in [k_0]}\Pr_i-\sum_{i\in T} \Pr_i\\
         \ge&1-m\epsilon_0-2e^{-[\frac{W}{W_0}]^{D_0-1}\frac{cW^{D-D_0}}{4}+2}.\\
     \end{array}
 \end{equation*}

 Now, we take $\epsilon_0=\sqrt{\frac{2\ln(m/\delta)}{N}}$. We get the result.
\end{proof}

A required lemma is given below.
\begin{lemma}
\label{song}
    If a network with depth $L$ and width $W$, the $L_\infty$ norm of each transition matrix does not exceed 1. Then $|\F(x)-\F(z)|\le nW^{L-1}||x-z||_\infty$.
\end{lemma}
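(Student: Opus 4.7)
The plan is to prove this by induction on the depth of the network, tracking how the $L_\infty$ distance between the two inputs propagates through the layers. Write $h_0(x) = x$ and $h_i(x) = \sigma(W_i h_{i-1}(x) + b_i)$ for $i = 1,\dots,L-1$, with output $\F(x) = W_L h_{L-1}(x) + b_L$, where $\sigma$ is the $\Relu$ activation. The goal is to show $\|h_i(x) - h_i(z)\|_\infty \le nW^{i-1}\|x-z\|_\infty$ for $1 \le i \le L-1$ and then specialize to the output layer.

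The two key facts I would use are: (i) $\Relu$ is $1$-Lipschitz in each coordinate, so $\|\sigma(u) - \sigma(v)\|_\infty \le \|u - v\|_\infty$; and (ii) for any matrix $M$ with entries bounded by $1$ in absolute value (which is the operative interpretation of the $L_\infty$ bound on each transition matrix, consistent with all parameters lying in $[-1,1]$ in $\Hyp_{W,D}(n)$) and $k$ columns, one has $\|Mv\|_\infty \le k\,\|v\|_\infty$, since each entry of $Mv$ is a sum of $k$ terms each of magnitude at most $\|v\|_\infty$.

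Using these, the base case gives
\[
\|h_1(x) - h_1(z)\|_\infty \le \|W_1(x-z)\|_\infty \le n\,\|x-z\|_\infty,
\]
because $W_1$ has $n$ columns (input dimension). For the inductive step with $i \ge 2$, the transition matrix $W_i$ has $W$ columns (hidden width), so
\[
\|h_i(x) - h_i(z)\|_\infty \le \|W_i(h_{i-1}(x) - h_{i-1}(z))\|_\infty \le W\,\|h_{i-1}(x) - h_{i-1}(z)\|_\infty,
\]
yielding $\|h_{L-1}(x) - h_{L-1}(z)\|_\infty \le nW^{L-2}\,\|x-z\|_\infty$. Applying the same bound to the linear output layer $W_L$ (which has $W$ columns) gives $|\F(x) - \F(z)| \le W \cdot nW^{L-2}\|x-z\|_\infty = nW^{L-1}\|x-z\|_\infty$.

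There is no real obstacle here; the only subtle point is fixing the interpretation of ``$L_\infty$ norm of each transition matrix does not exceed $1$'' to mean entrywise boundedness, which is what makes the factor $n$ (from the first layer's $n$ columns) and the factors $W$ (from subsequent layers' $W$ columns) line up exactly with the claimed $nW^{L-1}$.
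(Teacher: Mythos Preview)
Your proof is correct and follows essentially the same approach as the paper: both argue layer by layer using that $\Relu$ is $1$-Lipschitz in $L_\infty$ and that a weight matrix with entries in $[-1,1]$ and $k$ columns satisfies $\|Mv\|_\infty \le k\|v\|_\infty$, picking up a factor of $n$ at the first layer and $W$ at each subsequent layer. Your explicit remark about interpreting the ``$L_\infty$ norm'' hypothesis as entrywise boundedness is a useful clarification that the paper leaves implicit (it silently uses $\|W\|_{1,\infty}$ in the same way).
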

\begin{proof}
It is easy to see that $||Relu(Wx+b)-\Relu(Wz+b)||_\infty\le ||W(x-z)||_\infty\le ||W||_{1,\infty}||x-z||_\infty$.
Let $\F_i$ is the output of i-th layer of $\F$, then
    \begin{equation*}
       \begin{array}{cl}
           & |\F(x)-\F(z)|\\
        \le& W||\F_{D-1}(x)-\F_{D-1}(z)||_\infty\\
        \le& W^2||\F_{D-2}(x)-\F_{D-2}(z)||_\infty\\
        \dots&\\
        \le&W^{D-1}||\F_{1}(x)-\F_{1}(z)||_\infty\\
        \le&nW^{D-1}||x-z||_\infty
       \end{array}
   \end{equation*}
   which proves the lemma.
\end{proof}

\section{Experiments}
\label{exp}
In this section, we give some simple experiments to validate our theoretical conclusions. Our experimental setup is as follows.
We used MNIST data set and two-layer networks with ReLU activation function. When training the network, we ensure that the absolute value of each parameter is smaller than 1 by weight-clipping after each gradient descent. Two experiments are considered:

{\bf About size and accuracy:} For networks with widths 100,200,$\dots$,900,1000, we observe their accuracy on the test set after training. The results are shown in Figure \ref{fig}.

{\bf About data and precision:} Using training sets with 10$\%$, $20\%$, $\dots$, $90\%$, $100\%$ of the original training set to train a network with widths $200,400$ and $600$. The results are shown in Figure \ref{fig1}.
\begin{figure}[ht]
    \centering
        \includegraphics[width=0.6\linewidth]{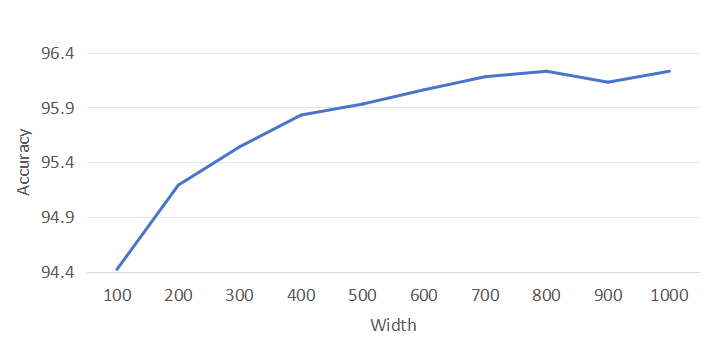}
         \caption{The accuracy on the different width networks.}
    \label{fig}
\end{figure}
\begin{figure}[ht]
    \centering
        \includegraphics[width=0.6\linewidth]{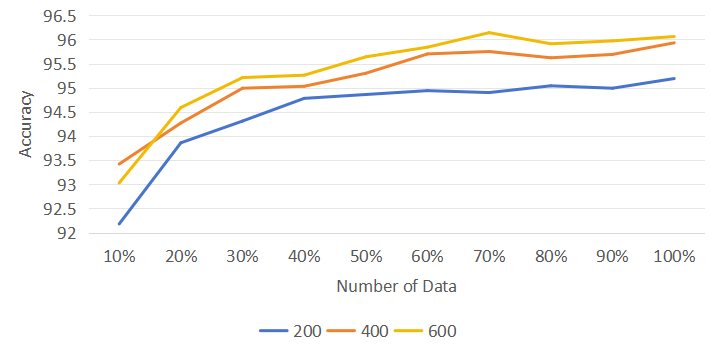}
   \caption{The accuracy on the 200,400,600 width networks with different number of data.}
    \label{fig1}
\end{figure}

Based on the experimental results, we have the following conclusions which confirm the correctness of Theorem 4.3.

(1) Increasing the amount of data or enlarging the network leads to greater accuracy. Specifically, when there are fewer data (smaller networks), increasing the number of data (width of the network) leads to a greater improvement in accuracy, which is consistent with Theorem 4.3 where the number of data (network size) is located on the denominator.

(2) When the number of data is fixed, increasing the network size has a limitation effect on improving accuracy, as shown in Figure \ref{fig}, which is consistent with Theorem 4.3, because the number of data cannot affect the item in the generalization bound about network size.

(3) When the network is small, increasing the number of data can only have a limited effect on improving accuracy, as shown in Figure \ref{fig1}. Accuracy on training 200-width network with the entire dataset is almost equivalent to accuracy on training 400-width network with 40$\%$ data in the entire dataset.  This is consistent with Theorem 4.3, because the size of the network cannot affect the item in the generalization bound about the number of data.


\end{document}